%% file: neurips_2026.tex
\documentclass{article}

 \usepackage[preprint]{neurips_2026}

\usepackage[utf8]{inputenc} 
\usepackage[T1]{fontenc}    
\usepackage{hyperref}       
\usepackage{url}            
\usepackage{booktabs}       
\usepackage{amsfonts}       
\usepackage{nicefrac}       
\usepackage{microtype}      
\usepackage{xcolor}         

\usepackage{caption}
\usepackage{algorithm}
\usepackage{algpseudocode}
\usepackage{booktabs}
\usepackage{amssymb}            
\usepackage{mathtools}          
\usepackage{mathrsfs}           
\usepackage{graphicx}           
\usepackage{subcaption}         
\usepackage[space]{grffile}     
\usepackage{url}                
\usepackage{lipsum}             
\usepackage{amsthm}
\usepackage{enumitem}
\usepackage{our_macros}           
\usepackage{multirow} 

\usepackage[dvipsnames]{xcolor}
\usepackage{wrapfig} 
\usepackage{makecell}
\newif\ifrlc
\rlcfalse

\title{Switching Successor Measures for\\ Hierarchical Zero-shot Reinforcement Learning}

\author{%
  Stefan Stojanovic \\
  KTH, Stockholm, Sweden\\
  \texttt{stesto@kth.se} \\
  \And
  Alexandre Proutiere \\
  KTH, Digital Futures, Stockholm, Sweden\\
  \texttt{alepro@kth.se} \\
}

\begin{document}

\maketitle

\begin{abstract}

Hierarchical reinforcement learning can improve generalization by decomposing long-horizon decision-making into simpler subproblems. However, existing approaches often rely on restrictive design choices, such as fixed temporal abstractions or goal-conditioned objectives, which largely confine them to goal-reaching tasks and limit their applicability to general reward functions. In this paper, we introduce \emph{switching successor measures}, an extension of successor measures that enables hierarchical control in zero-shot reinforcement learning without additional supervision, fixed horizons, or manually designed subgoals. We show that switching successor measures arise naturally from classical successor measures while preserving their underlying structure. Building on this result, we propose FB \(\pi\)-Switch, an algorithm that extracts both a high-level subgoal-selection policy and a low-level control policy directly from forward-backward (FB) representations, allowing hierarchical behavior to emerge from a single learned representation. Experiments on both goal-conditioned and general reward-based tasks show that FB \(\pi\)-Switch improves over non-hierarchical baselines and matches state-of-the-art hierarchical methods in goal-conditioned settings. These results demonstrate that structured successor representations provide a flexible foundation for hierarchical zero-shot reinforcement learning beyond goal-reaching tasks. Our project website is available at: \url{https://stestokth.github.io/switching-successors/}.

\end{abstract}

\input{sections/1.introduction}

\input{sections/2_related_work}

\input{sections/3_preliminaries}
\input{sections/4_switching}
\input{sections/5_FB_pi-Switch}

\input{sections/6_experiments}

\input{sections/7_conclusion}

\newpage

\section*{Acknowledgments}
We thank Bastien Dubail for providing helpful feedback on the proof of the main theorem. This research was supported by the Wallenberg AI, Autonomous Systems and Software Program (WASP) funded by the Knut and Alice Wallenberg Foundation, the Swedish Research Council (VR), and Digital Futures. The computations were enabled by resources provided by the National Academic Infrastructure for Supercomputing in Sweden (NAISS), partially funded by the Swedish Research Council through grant agreement no. 2022-06725. 

\bibliographystyle{plain} 
\bibliography{main}

\newpage
\appendix

\input{sections/8_appA}
\clearpage
\newpage

\input{sections/9_appB}

\clearpage
\newpage

\input{sections/10_appC}
\clearpage
\newpage

\input{sections/11_appD}
\clearpage
\newpage



\end{document}

%% file: sections/1.introduction.tex
\section{Introduction}\label{sec:introduction}

A central challenge in reinforcement learning (RL) is to design methods that are flexible and generalize well across a wide range of reward functions, enabling agents to adapt to new tasks without retraining from scratch. Goal-conditioned reinforcement learning (GCRL) has shown that structuring tasks around goals can significantly improve generalization \citep{chane2021goal, eysenbach2022contrastive, wang20251000}. Hierarchical approaches further extend this idea by decomposing long-horizon problems into sequences of shorter subproblems. For instance, methods such as HIQL \citep{park2023hiql} demonstrate that conditioning on intermediate subgoals, rather than only on global task specifications, can substantially improve performance. However, these approaches rely on strong design choices. In particular, HIQL enforces locality through fixed subgoal horizons, rather than allowing such locality to emerge from the learning process itself.

Moreover, these methods are primarily studied in goal-conditioned settings, where tasks reduce to reaching specific states. In contrast, general RL involves arbitrary reward functions that cannot always be expressed as goal-reaching objectives. A prominent line of work addresses this setting through forward-backward (FB) representations of successor measures, which aim to represent value functions and policies in a shared embedding space \citep{touati2021learning,touati2022does,tirinzoni2025zero}. Despite their promise, these approaches rely on a strong global factorization assumption of the form \(F(s,a,z)^\top B(g)\) for successor measures, which must simultaneously capture the dynamics of the environment and all downstream reward functions. Recent works have shown that such global low-dimensional structure can be difficult to learn and may be too restrictive in complex environments \citep{cetinfiner,agarwal2024proto,frans2024unsupervised,dubail2025shift}. This suggests that successor representations may be most reliable locally: they can capture short-range transitions effectively, while their accuracy may deteriorate as the effective planning horizon increases. Supporting this view, recent work on compositional planning \citep{farebrother2026compositional} shows that combining local behaviors can improve generalization, although it requires an additional generative model. Finally, several recent methods treat representation learning as a first stage in zero-shot RL pipelines, with policy learning performed separately using standard offline algorithms \citep{ghosh2023reinforcement, park2024foundation, frans2024unsupervised}. While effective, this separation does not fully exploit the structure encoded in successor representations for joint planning and control.

Together, these observations highlight an important gap: there is currently no unified approach that leverages successor representations to derive hierarchical policies directly from the learned representation, while retaining the ability to generalize across arbitrary reward functions. This motivates our central question: \emph{Can successor representations serve as a unified foundation from which both reward embeddings and hierarchical control emerge in zero-shot reinforcement learning?}

We address this question by introducing \emph{switching successor measures} that allow us to derive hierarchical policies directly from successor representations, rather than introducing hierarchy as a separate supervised object. Concretely, in each state \(s\), the agent first selects a latent subgoal \(w\) using a high-level policy \(\pi^h(\cdot |s,r)\) conditioned on the reward function \(r\), and then acts according to a low-level policy \(\pi^\ell(\cdot |s,w)\) conditioned on the selected subgoal. Our main contributions are:

(a) We introduce \emph{switching successor measures} as a principled framework for extracting hierarchical structure from successor-based representations. A key insight is that the switching successor measures needed for high-level planning can be derived directly from a single classical successor measure. We establish this remarkable connection in Theorem~\ref{thm:SM_pi_to_pi}, showing that hierarchy is already implicitly encoded in standard successor representations and can be recovered without additional learning.

(b) We propose \emph{FB \(\pi\)-Switch}, a hierarchical zero-shot RL algorithm in which both the high-level subgoal-selection policy and the low-level control policy are derived from FB successor representations. The method follows a three-stage training procedure; in particular, the high-level policy learning stage can be incorporated into existing FB algorithms.

(c) We empirically evaluate FB \(\pi\)-Switch on both goal-conditioned and more general reward-based tasks. The method achieves strong performance without explicit locality-inducing hyperparameters, and consistently outperforms other baselines, including FB with an induced high-level policy. 

%% file: sections/2_related_work.tex
\section{Related work}
\label{sec:related_work}

\textbf{Successor representations and skill-based methods.}
Successor features \citep{barreto2017successor} provide a temporal-difference framework for zero-shot learning by decoupling dynamics from rewards. This idea has been extended to successor measures via FB representations, enabling zero-shot adaptation to new tasks \citep{touati2022does}. Subsequent work improves these models through online embedding adaptation \citep{sikchi2025fast} and more stable training objectives \citep{zheng2025towards}. Alternative parameterizations have also been explored, e.g., intention-conditioned value functions (ICVFs) \citep{ghosh2023reinforcement} and proto successor measures \citep{agarwal2024proto}, which relax the linear reward assumption but require a discrete policy codebook. 
A related line of work learns reusable skills without explicit successor structure, including unsupervised skill discovery in online \citep{eysenbach2018diversity} and offline settings \citep{ajay2020opal}, as well as methods that improve skill transfer \citep{frans2024unsupervised}. More broadly, GCRL methods such as UVFA \citep{schaul2015universal} and contrastive approaches \citep{eysenbach2022contrastive} learn policies conditioned on goals and can be viewed as a special case of successor-based formulations. However, these approaches typically rely on conditioning on fixed goals and do not explicitly capture hierarchical composition.

\textbf{Hierarchical reinforcement learning.}
Hierarchical RL studies temporal abstraction, such as options, skills, and goals. The option framework \citep{sutton1999between} provides a principled formulation for hierarchical RL, with extensions including gradient-based methods \citep{comanici2010optimal} and actor-critic architectures \citep{bacon2017option}. Composing pretrained policies has also been explored in prior work \citep{liaw2017composing, peng2017deeploco}, while subgoal-based methods such as HIRO \citep{nachum2018data} delegate goal generation to a high-level policy under a fixed reward function. 
Subgoal-based approaches have been widely studied in both online \citep{huang2019mapping, chane2021goal, gurtler2021hierarchical} and offline goal-conditioned RL. In particular, HIQL \citep{park2023hiql} extends hierarchical RL to the offline regime, with subsequent work improving subgoal generation \citep{huang2024goal, chen2024plandq, park2025temporal} and value estimation \citep{keconservative, giammarino2025physics}. 
Hierarchical structure can also be induced in skill-based frameworks, e.g., by optimizing for midpoints in latent space \citep{park2024foundation}, or by combining successor representations with generative models and planning \citep{farebrother2026compositional}. 

%% file: sections/3_preliminaries.tex
\section{Preliminaries} 
\label{sec:preliminaries}

\textbf{Problem setting.}
We consider an infinite-horizon Markov decision process $\mathcal{M}=(\mathcal{S}, \mathcal{A}, P, \gamma)$, where $\mathcal{S}$ and $\mathcal{A}$ denote the state and action spaces, $P$ is the transition kernel, and $\gamma \in (0,1)$ is the discount factor. Given a reward function $r: \mathcal{S} \to \mathbb{R}$, we define the Q-function of a policy, represented by a distribution over actions $\pi(\cdot |s)$ in each state $s$, by $Q^\pi(s,a;r) = \mathbb{E}_\pi \left[ \sum_{t=0}^{\infty} \gamma^t r(s_t) \vert s_0=s, a_0 = a \right]$. A policy $\pi$ is optimal for reward $r$ if it maximizes the corresponding Q-function for all states and actions. We focus on the unsupervised setting, where the reward function is not specified a priori, and the objective is to learn optimal policies for arbitrary downstream reward functions. We assume access to an offline dataset $\mathcal{D}$ of transitions $(s_t,a_t,s_{t+1})$, collected under an unknown behavior policy. The states in $\mathcal{D}$ are distributed according to an unknown marginal $\rho$. The dimensionality of all learned representations is denoted by $d$. Finally, $\mathcal{Z}\subseteq \mathbb{R}^d$ denotes the latent intent space, from which $z$ is sampled; the sampling distributions may vary across different stages of our algorithm.

\textbf{Successor measure.} Define the \emph{successor measure} under policy $\pi$ \citep{blier2021learning} and the corresponding \emph{state-successor measure}\footnote{$M^\pi_s(s')$ is called an intention-conditioned value function in \cite{ghosh2023reinforcement}; we avoid this terminology since it corresponds to a value function only under indicator rewards supported solely at the goal state.} as:
\begin{align*}
    M^\pi_{s,a}(s') = \mathbb{E}_{\pi} \left[ \sum_{t=0}^\infty \gamma^t \mathbf{1}\{s_t = s' \} \vert s_0 = s, a_0 = a\right],\quad M^\pi_s(s')  = \mathbb{E}_{a\sim \pi(\cdot \vert s)} M_{s,a}^\pi(s').
\end{align*}
For a state-dependent reward function $r$, the $(r,\pi)$-value functions can be written as $Q^\pi(s,a; r) = \langle M^\pi_{s,a}, r \rangle$ and $V^\pi(s; r) = \langle M^\pi_s, r\rangle$, where the inner product $\langle \cdot , \cdot \rangle$ is taken over states. In particular, for indicator rewards $r_{g}$, we have $V^\pi(s; r_{g}) = M^\pi_s(g)$ for all $s$. For brevity, we sometimes use a single index $g$ to denote the indicator reward $r_g$; in this case, we write $V^\pi(s;g)$ in place of $V^\pi(s;r_g)$, and use $V^\star(s;g)$ as shorthand for $V^{\pi_g}(s;r_g)$. Finally, let $H^\pi_s(w)$ denote the hitting time of state $w$ under policy $\pi$ and starting from $s$, defined by $H^\pi_s(w) = \min \{ t \ge 0 : s_t=w \vert s_0 =s, \pi\}$.

\textbf{Learning successor measures.}
A standard approach to learning successor representations is the FB factorization of \cite{touati2022does}, which models the successor measures of a parametrized family $(\pi_z)_{z\in\cal Z}$ of policies via the factorization $M^{\pi_z}_{s,a}(s') = F(s,a,z)^\top B(s')\rho(s')$. Such representations have the key property that, if the policy $\pi_z$ satisfies for every state $s$, $\pi_z(s) \in \argmax_a F(s,a,z)^\top z$, then $\pi_z$ is optimal for the reward function $r_z(s) = \langle B(s), z\rangle$. The representations are obtained by minimizing a squared Bellman error with respect to the measure $\rho$:
\begin{align}
    \min_{F,B}\mathbb{E}_{\substack{
    (s_t,a_t,s_{t+1})\sim \mathcal{D} \\
    s'\sim \rho, z \sim \mathcal{Z}
    }} \left[ \frac{ \mathbf{1}\{ s_{t}=s' \} }{\rho(s')} + \gamma  \overline{F}(s_{t+1},\pi_z(s_{t+1}), z)^\top \overline{B}(s')  - F(s_t,a_t,z)^\top B(s') \right]^2,
    \label{eq:FB_standard_loss}
\end{align}
where we use $\overline{F}, \overline{B}$ to denote that gradients are not propagated through the parameters of these networks (they are target networks). Thus, each training iteration of the FB algorithm consists of two intertwined steps: updating the networks $F,B$, and updating the policy $\pi_z$. In practice, an additional orthonormalization loss is imposed on $B$ to fix the scale ambiguity. Importantly, the objective can be evaluated without explicit estimation of the marginal $\rho(s')$ (Appendix A.2).

An alternative approach learns successor-related value functions via offline regression using a direction-weighted expectile loss. \cite{ghosh2023reinforcement} extends the IQL objective of \cite{kostrikov2021offline} to the intention-conditioned setting by minimizing over latent intentions $\mathcal{Z}$:
\begin{align}
    \min_\theta \; \mathbb{E}_{\substack{
    (s_t,s_{t+1})\sim \mathcal{D} \\
    s'\sim \rho, z \sim \mathcal{Z}
    }} \Big[
    \left| \tau - \mathbf{1}\{\Delta < 0\} \right|
    \left(
    \mathbf{1}\{ s_{t}=s' \}
    + \gamma\, V^{\pi_z}_{\overline{\theta}} (s_{t+1}; s')
    - V^{\pi_z}_\theta(s_t; s')
    \right)^2
    \Big],
    \label{eq:IQL_intention}
\end{align}
where $\tau \in [\tfrac{1}{2},1]$ is a fixed expectile parameter and $\Delta = r_z(s_t) + \gamma V^{\pi_z}_\theta(s_{t+1}; r_z) - V^{\pi_z}_\theta(s_t; r_z)$. Here, $r_z$ is an intent-specific reward function and $\pi_z(s) \in \argmax_a r_z(s) + \gamma \mathbb{E}_{s' \sim P(\cdot \vert s,a)} V_\theta^{\pi_z}(s'; r_z)$ the corresponding policy. The weighting term depends on the sign of $\Delta$, emphasizing transitions with positive temporal-difference residuals under $\pi_z$; the squared term corresponds to the standard TD error for policy $\pi_z$ and a target state $s'$. Note that \cite{ghosh2023reinforcement} does not use these value estimates for policy learning; instead, they serve purely for representation learning, after which a separate offline RL algorithm is applied (see Appendix A.4 for more details).

\textbf{Policy extraction.} We use advantage-weighted regression (AWR) \citep{peng2019advantage} to train all policies, following \cite{park2023hiql}. Given an advantage function $A(s,a)$, AWR learns a policy by maximizing:
\begin{align}
    \max_\pi \mathbb{E}_{(s,a)\sim \mathcal{D}} \left[ \exp\big(\beta A(s,a)\big)\,\log \pi(a \mid s) \right],    
    \label{eq:AWR_loss}
\end{align}
where $\beta > 0$ is an inverse temperature parameter controlling the sharpness of the weighting.
In our setting, the advantage function is defined in terms of the learned successor-based value functions, and the policy may condition on $(s,z)$ and output either primitive actions or latent variables.

%% file: sections/4_switching.tex
\section{Switching successor measures}
\label{sec:switching_successor_measures}

We introduce and study {\it switching successor measures}, a central object for hierarchical zero-shot reinforcement learning. Our starting point is the hierarchical goal-conditioned framework introduced in HIQL \citep{park2023hiql}. In the hierarchical stage of HIQL, the high-level policy is learned by maximizing a subgoal-based advantage objective. The latter quantifies the benefit of moving to state $w$ (the subgoal) within the first $k$ time steps when starting at state $s$ and under reward function $r_g$: 
\begin{align}
    {A}(s, w, g) = \textcolor{orange}{\gamma^{k}} V^\star(w; g) \textcolor{orange}{ + \sum_{t= 0}^{k - 1} \gamma^{t} r_g(s_{t}) } - V^\star(s; g),
    \label{eq:Ah_HIQL}
\end{align}
where the trajectory $(s_0=s,s_1,\ldots,s_k=w)$ is drawn from the offline dataset. The terms highlighted in \textcolor{orange}{orange} are omitted in HIQL under the goal structure, as discussed in \cite{park2023hiql}. The framework relies on the equivalence $V^\pi(s; r_{g}) = M^\pi_s(g)$, valid in the goal-conditioned setting. 

\newcommand{\biaskfigure}{
   \begin{wrapfigure}{r}{0.24\textwidth}
        \centering
        \vspace{-10pt}
        \includegraphics[width=0.24\textwidth]{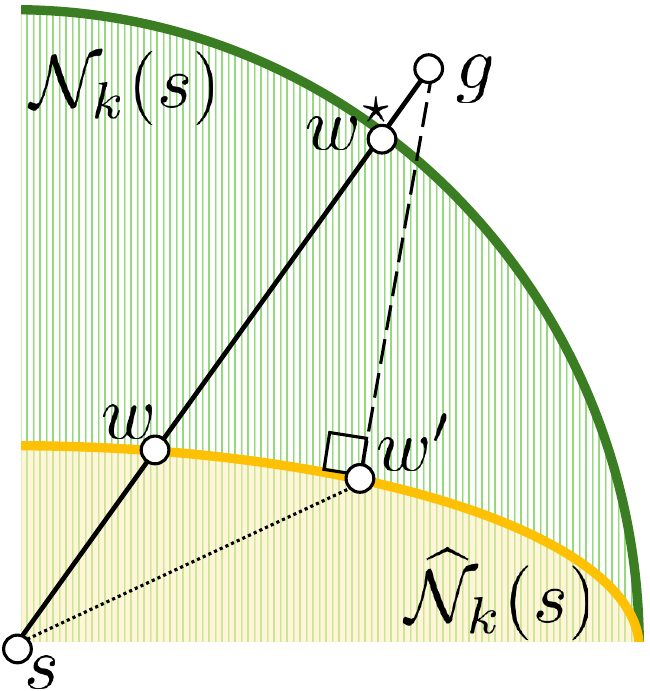}
        \caption{For $w,w'\in \widehat{\mathcal{N}}_k(s)$, it may hold that $V^\star(w';g)>V^\star(w;g)$ even though $w$ is the optimal subgoal.} 
        \vspace{-20pt}
        \label{fig:bias_k}
    \end{wrapfigure}
}

\ifrlc
\else
   \biaskfigure
\fi

First we highlight two issues with the objective in \eqref{eq:Ah_HIQL}:

(i) \textbf{Short-term return.}  In the goal-conditioned framework of HIQL, a constant per-step reward, e.g., $0$ or $-1$, allows the short-term return term to be omitted. More generally, however, the short-term return $\sum_{t=0}^{k - 1} \gamma^{t} r_g(s_{t})$ depends on both the reward and the states, and cannot be treated as a constant when $r_g$ is replaced with a general reward function.

(ii) \textbf{Bias induced by the subgoal horizon $k$.} Defining the subgoal \(w\) via the advantage \(A(s,w,g)\) implicitly assumes that it is reached in exactly \(k\) steps under the optimal policy. In practice, however, offline datasets rarely contain sufficiently many optimal trajectories. This creates a mismatch between the true \(k\)-step reachable set and the one observed in the data, which introduces bias into value estimation. This issue is further exacerbated in anisotropic environments, where progress depends strongly on direction. In such settings, a fixed horizon \(k\) may correspond to subgoals that are unevenly distant in terms of true task progress, even when they are all \(k\)-step reachable. 
\ifrlc
    \biaskfigure
\fi
Figure~\ref{fig:bias_k} illustrates this phenomenon. There, \({\cal N}_k(s)\) denotes the set of states reachable from \(s\) in \(k\) steps, while \(\widehat{\cal N}_k(s)\) denotes the corresponding set inferred from the dataset. The subgoal \(w' \in \widehat{\cal N}_k(s)\), selected under HIQL, is suboptimal, as it does not lie on the optimal trajectory from \(s\) to the goal \(g\). A better choice would be \(w\), which lies along the optimal path. If the dataset were exhaustive and contained all optimal trajectories, the optimal subgoal would instead be \(w^\star\).

Our framework resolves both issues by (i) accounting for the short-term return via our {switching} successor measures, and (ii) replacing fixed-horizon subgoal selection with hitting-time switching.

\textbf{Addressing short-term return.}
To design high-level policies, we aim to capture the effect of temporarily following a subgoal-conditioned policy and then switching back to a globally efficient policy. Concretely, starting from state $s$ and given an intermediate subgoal $w$, the agent follows $\pi_w$ until time $k$, and thereafter follows $\pi$, meant to be an efficient policy for the given reward function $r$. To optimize subgoal selection, a natural objective is the $k$-step advantage function:
\begin{align}
    A_{s}^{\pi_w|_k \to \pi} (r) := \mathbb{E}_{\pi_w} \Big[ \gamma^{k} V^{\pi}(s_{k}; r) + \sum_{t = 0}^{k - 1} \gamma^{t} r(s_{t})\ \vert s_0=s \Big] -  V^{\pi}(s; r).
    \label{eq:Ak_short_term}
\end{align}
Instead of working with intermediate rewards and the state $s_{k}$ along the trajectory, we can express the $k$-step advantage using successor measures. Define the $k$-step switching successor measure as:
\begin{align}
    M^{\pi_{w}|_k \to \pi}_s(s') &:= \sum_{t=0}^{k-1} \gamma^{t} \mathbb{P}_{\pi_{w}}(s_t=s'\vert s_0=s) + \gamma^{k} \sum_{\tilde{s} \in \mathcal{S}} \mathbb{P}_{\pi_{w}} (s_{k}=\tilde{s} \vert s_0=s) M^{\pi}_{\tilde{s}} (s').
    \label{eq:M_change_k}
\end{align}
When $\pi_{w} = \pi$, this reduces to the standard successor measure, $M^{\pi|_k \to \pi}(\cdot) \equiv M^{\pi}(\cdot)$. The first term in~\eqref{eq:M_change_k} corresponds to the $k$-step truncated successor measure under $\pi_w$, which we denote by $M^{\pi_{w}\vert_k}_s(s') $. Moreover, $A_{s}^{\pi_w|_k \to \pi}(r)=\langle M^{\pi_{w}|_k \to \pi}_s -M^{\pi}_s, r\rangle$, and in the goal-conditioned setting, this reduces to $A_{s}^{\pi_w|_k \to \pi}(r_g) =   M^{\pi_{w}|_k \to \pi}_s(g) - M^{\pi}_s (g)$.

\newcommand{\bigfiguremain}{
    \begin{figure}[t!]
        \centering
        \begin{subfigure}[t]{0.24\textwidth}
            \centering
            \includegraphics[height=1\linewidth]{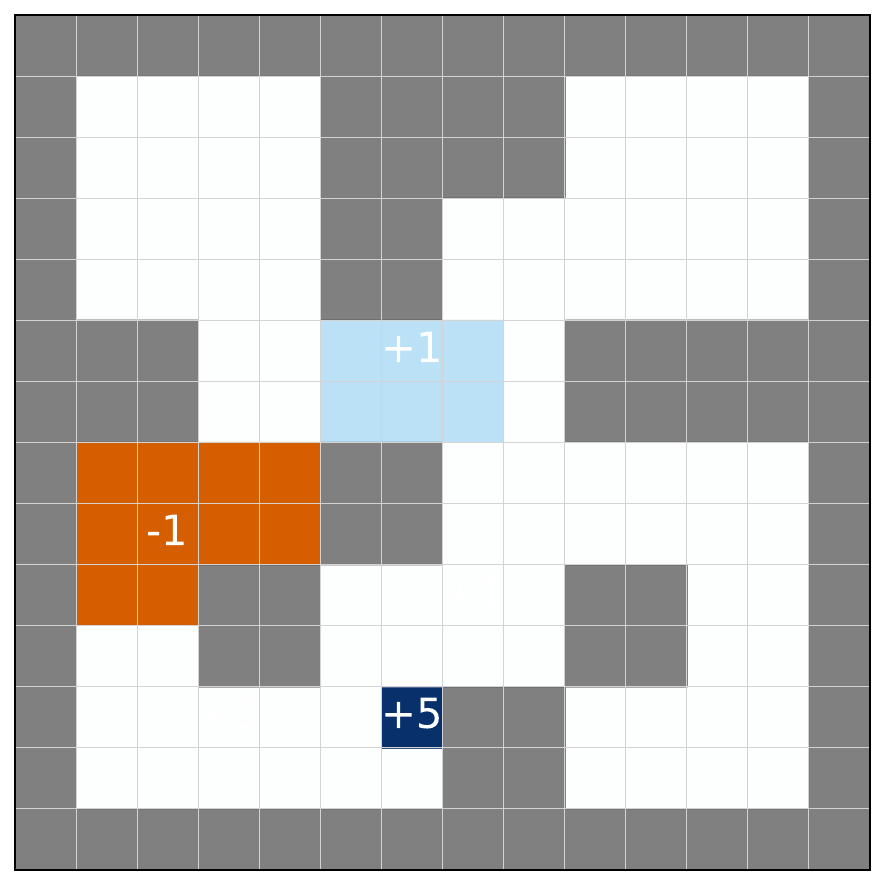}
            \caption{$r(\textcolor{magenta}{s})$}
        \end{subfigure}
        \begin{subfigure}[t]{0.24\textwidth}
            \centering
            \includegraphics[height=1\linewidth]{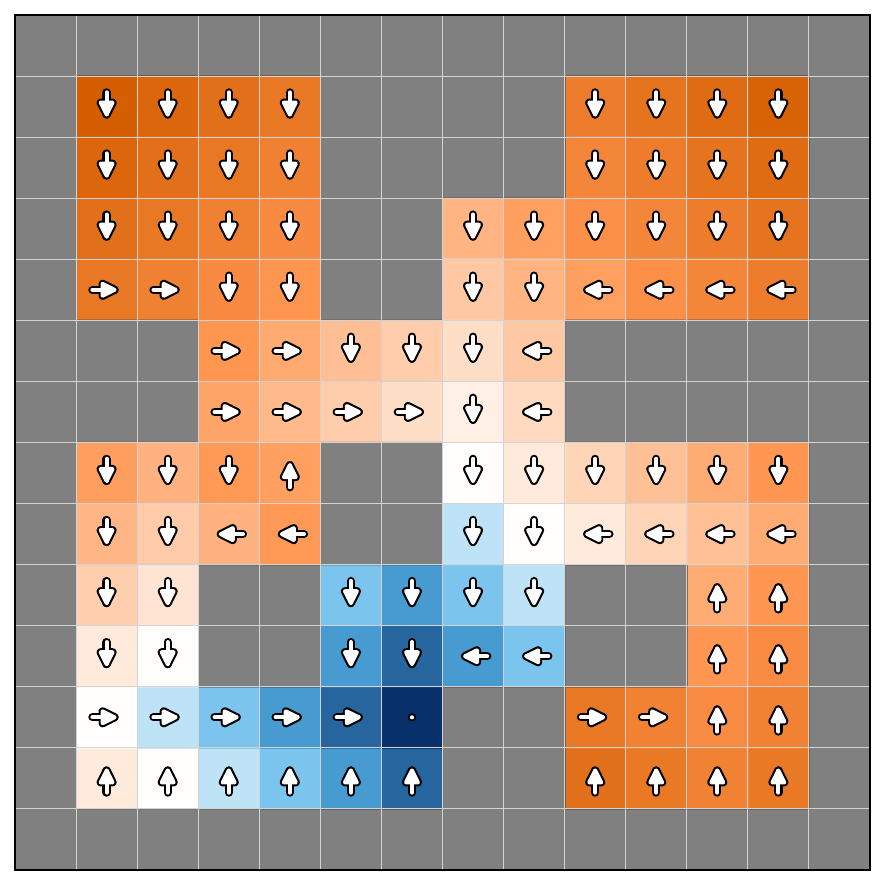}
            \caption{$V^\star(\textcolor{magenta}{s};r)$}
        \end{subfigure}
        \begin{subfigure}[t]{0.24\textwidth}
            \centering
            \includegraphics[height=1\linewidth]{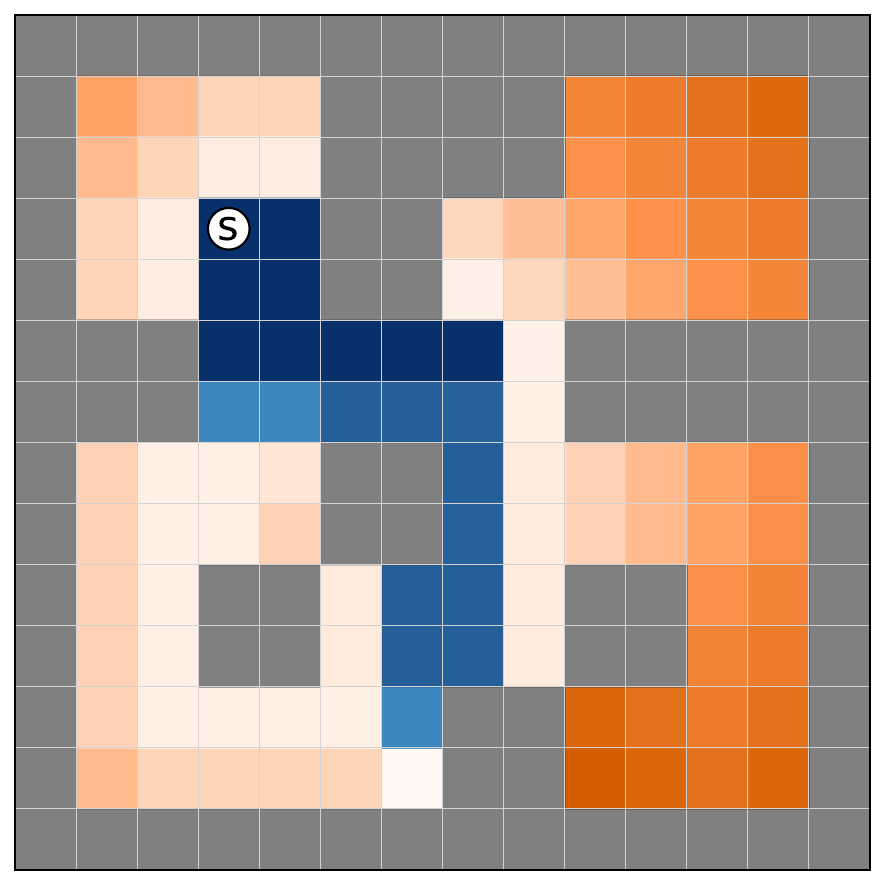}
            \caption{$A_s^{\pi_{\textcolor{magenta}{w}} \to \pi^\star}(r)$  }
        \end{subfigure}
        \begin{subfigure}[t]{0.24\textwidth}
            \centering
            \includegraphics[height=1\linewidth]{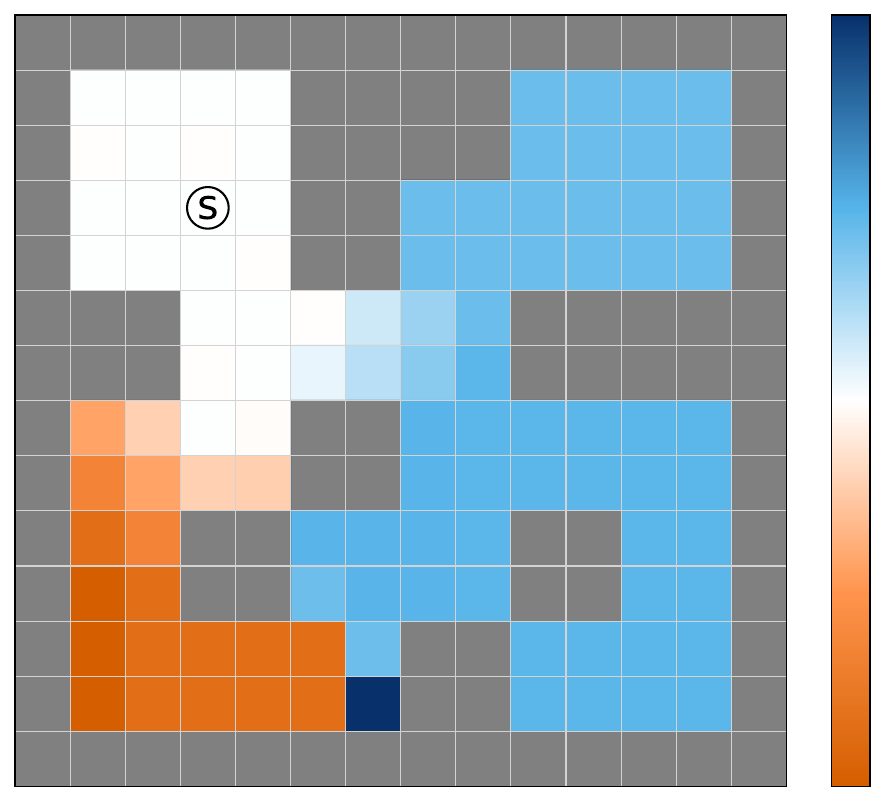}
            \captionsetup{width=1.26\linewidth}{\vspace{-0.47cm}}
             \caption{$ \mathbb{E}   A_s^{\pi_{\textcolor{magenta}{w}}\vert_k \to \pi^\star}(r) \vert_{k< H_s^{\pi_{\textcolor{magenta}{w}}}  (\textcolor{magenta}{w})}   $ }
        \end{subfigure}
        \caption{Depicted quantities as functions of the \textcolor{magenta}{magenta} variables: (a) reward function with positive and negative regions, and a goal state with reward $+5$; (b) optimal value function corresponding to this reward, shown with one choice of optimal actions. Note that after passing through reward-bearing regions, the value of downstream states depends only on the final reward, not on the rewards encountered earlier; (c) switching advantage function defined in~\eqref{eq:def_A_random_switch}, which highlights the states on the optimal trajectory induced by the reward; (d) contributions to the switching advantage arising from rewards collected \textit{before} switching (reaching $w$). 
        }
        \label{fig:bigfigure}
        \vspace{-13pt}
    \end{figure}
}

\ifrlc
\else
    \bigfiguremain
\fi

\textbf{Addressing subgoal horizon.} Our notion of $k$-step advantage considers a fixed number of steps and there is no reason why, starting from $s_0=s$ under $\pi_w$, we would reach $w$ after $k$ steps. To build intuition, however, consider the idealized case where $s_{k}=w$ deterministically under $\pi_w$ when $s_0=s$. In this very specific case, we have $M^{\pi_{w}\vert_k}_s(s')  = M^{\pi_{w}}_s(s') - \gamma^k M^{\pi_w}_w(s')$, which enables us to rewrite~\eqref{eq:Ak_short_term} purely in terms of value functions:
\begin{align*}
      A_s^{\pi_w|_k \to \pi}(r) =  V^{\pi_w}(s;r) + \gamma^k \big( V^\pi(w; r) - V^{\pi_w}(w;r) \big)  -  V^\pi(s; r).
\end{align*}
Next, we relax the assumption $s_{k}=w$ and study policies that switch upon reaching the subgoal $w$. More precisely, consider a policy that first follows $\pi_w$ until reaching state $w$, and then switches to $\pi$. Denote by $M^{\pi_w \to \pi}$ the successor measure obtained by plugging\footnote{Definition \eqref{eq:SM_switch_agnostic} is equivalent to $M^{\pi_{w} \to \pi}_s(s') =  \mathbb{E} M^{\pi_{w}|_{H_s^{\pi_w}(w)} \to \pi}_s(s')$, but we use the former for clarity.} $k=H_s^{\pi_w}(w)$ into~\eqref{eq:M_change_k}:
\begin{align}
    M^{\pi_{w} \to \pi}_s(s') &:=  \mathbb{E}\left[ M^{\pi_{w}|_k \to \pi}_s(s') \mathbf{1}\{ k=H_s^{\pi_w}(w)\} \right]. 
    \label{eq:SM_switch_agnostic}
\end{align}
The following theorem relates switching and standard successor measures:
\begin{theorem}
    For any $s,w,s'\in\mathcal{S}$ and policy $\pi$, the following identity holds:\\[0.2cm]
    \hspace*{0.18\textwidth}
    $ \displaystyle  M^{\pi_{w} \to \pi}_s(s') = M^{\pi_w}_s(s') + \frac{M^{\pi_w}_s(w)}{M^{\pi_w}_w(w)} \Big( M^{\pi}_w(s') -  M^{\pi_w}_w(s') \Big).
    $
    \label{thm:SM_pi_to_pi}
\end{theorem}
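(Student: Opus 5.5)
We will prove the identity using the strong Markov property at the hitting time $H:=H_s^{\pi_w}(w)$. With $k=H$ in \eqref{eq:M_change_k}, $s_H=w$ deterministically. Hence, on $\{H<\infty\}$, the switching measure equals the truncated measure up to $H$ plus $\gamma^H M^{\pi}_w(s')$. Taking expectations gives
\begin{align*}
M^{\pi_w\to\pi}_s(s') = \mathbb{E}_{\pi_w}\Big[\sum_{t=0}^{H-1}\gamma^t\mathbf{1}\{s_t=s'\}\Big] + \mathbb{E}_{\pi_w}\big[\gamma^{H}\big] M^\pi_w(s'),
\end{align*}
with the convention $\gamma^\infty=0$. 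If $w$ is never reached, the truncated sum is the full successor measure and no switch occurs.

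The first step is to handle the truncated term by splitting the full successor measure of $\pi_w$ at the same stopping time. Because $\pi_w$ is a Markov (stationary) policy, the strong Markov property applied at $H$ gives
\begin{align*}
M^{\pi_w}_s(s') = \mathbb{E}_{\pi_w}\Big[\sum_{t=0}^{H-1}\gamma^t\mathbf{1}\{s_t=s'\}\Big] + \mathbb{E}_{\pi_w}[\gamma^H]\, M^{\pi_w}_w(s').
\end{align*}
Subtracting this from the previous display yields
\begin{align*}
M^{\pi_w\to\pi}_s(s') = M^{\pi_w}_s(s') + \mathbb{E}[\gamma^H]\big(M^\pi_w(s')-M^{\pi_w}_w(s')\big).
\end{align*}

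It remains to identify the discounted hitting factor $\mathbb{E}_{\pi_w}[\gamma^H]$ with $M^{\pi_w}_s(w)/M^{\pi_w}_w(w)$. We evaluate the decomposition above at $s'=w$. By the definition of $H$, the truncated sum $\sum_{t<H}\gamma^t\mathbf{1}\{s_t=w\}$ vanishes, so $M^{\pi_w}_s(w)=\mathbb{E}[\gamma^H]\,M^{\pi_w}_w(w)$. Since $M^{\pi_w}_w(w)\ge 1$ (the $t=0$ term), we can divide. This gives the claim, including the case $s=w$, where $H=0$ and both sides agree trivially.

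The main obstacle is making the strong Markov step rigorous. We need $\pi_w$ to be stationary Markov, so that the process restarted at $w$ at time $H$ is distributed as under $s_0=w$; we will state this as an implicit assumption. We also need to be careful about interpreting $M$ as state-successor measures with actions drawn from the policy. The case $H=\infty$ is absorbed by the $\gamma^\infty=0$ convention together with the fact that the switching definition \eqref{eq:SM_switch_agnostic} reduces to $M^{\pi_w}$ there. For general state spaces with densities we would replace indicators by measures, but we will treat the countable case, as in the paper's definitions.
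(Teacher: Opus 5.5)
Your proof is correct and follows essentially the same route as the paper. You split at the hitting time $H=H_s^{\pi_w}(w)$ and use the Markov property to get $\mathbb{E}[\gamma^{H}]\,M^{\pi}_w(s')$ for the post-switch part and $M^{\pi_w}_s(s')-\mathbb{E}[\gamma^{H}]\,M^{\pi_w}_w(s')$ for the pre-switch part, then identify $\mathbb{E}[\gamma^{H}]=M^{\pi_w}_s(w)/M^{\pi_w}_w(w)$. The only difference is that you derive this last identity yourself by setting $s'=w$ in your own decomposition, where the paper cites Lemma B.1 of Myers et al.; your $\gamma^{\infty}=0$ convention also handles the case where $w$ is reached with probability strictly between $0$ and $1$ more cleanly than the paper's ``assume the hitting time is finite'' reduction.
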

The proof is deferred to Appendix B.1. We next formalize the advantage of switching from $\pi_w$ to $\pi$ at $w$. For a given reward $r$, the \emph{switching advantage function} is defined as:
\begin{align}
    A^{\pi_{w} \to \pi}_s(r) &:=  \mathbb{E}\left[ A^{\pi_{w}|_k \to \pi}_s(r) \mathbf{1}\{ k=H_s^{\pi_w}(w)\} \right].
    \label{eq:def_A_random_switch}
\end{align}

\begin{corollary}
\ifrlc
For any $s,w\in\mathcal{S}$, any policy $\pi$, and any reward function $r$, holds:\\[0.2cm]
\else
For any $s,w\in\mathcal{S}$, any policy $\pi$, and any reward function $r$, the following identity holds:\\[0.2cm]
\fi
    \hspace*{0.1\textwidth}
    $ \displaystyle A_s^{\pi_w \to \pi}(r) =  V^{\pi_{w}}(s;r) +  \frac{M^{\pi_{w}}_s(w)}{M^{\pi_{w}}_w(w)} \Big( V^\pi(w;r) - V^{\pi_{w}}(w;r)\Big) - V^\pi(s;r).
    $
    \label{cor:main_hierarchical_SM}
\end{corollary}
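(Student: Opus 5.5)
The plan is to obtain Corollary~\ref{cor:main_hierarchical_SM} directly from Theorem~\ref{thm:SM_pi_to_pi} by pairing both sides with the reward $r$, using linearity of the inner product and of expectation.

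\textbf{Step 1: write the switching advantage as an inner product.} Recall that for each fixed $k$ the paper gives $A_s^{\pi_w|_k\to\pi}(r)=\langle M^{\pi_w|_k\to\pi}_s - M^\pi_s, r\rangle$. Substituting this into definition \eqref{eq:def_A_random_switch}, we exchange the expectation over $k=H^{\pi_w}_s(w)$ with the sum over states. This gives
\[
A^{\pi_w\to\pi}_s(r)=\langle M^{\pi_w\to\pi}_s, r\rangle - \langle M^\pi_s, r\rangle \,\mathbb{E}\bigl[\mathbf{1}\{k=H^{\pi_w}_s(w)\}\bigr],
\]
where the first term uses definition \eqref{eq:SM_switch_agnostic}.

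\textbf{Step 2: handle the reaching probability.} The expectation in the second term equals the probability that $w$ is reached, $\mathbb{P}(H^{\pi_w}_s(w)<\infty)$. This is the main subtlety. The statement implicitly assumes that $w$ is reached almost surely, or equivalently that the convention in \eqref{eq:def_A_random_switch} subtracts $V^\pi(s;r)$ unconditionally. I would adopt this reading, consistent with the way the paper writes \eqref{eq:SM_switch_agnostic}, so that the second term is exactly $V^\pi(s;r)$.

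\textbf{Step 3: expand the theorem and match terms.} Take the inner product of the identity in Theorem~\ref{thm:SM_pi_to_pi} with $r$. The term $\langle M^{\pi_w}_s,r\rangle$ becomes $V^{\pi_w}(s;r)$. The ratio $M^{\pi_w}_s(w)/M^{\pi_w}_w(w)$ does not depend on $s'$, so it factors out of the inner product. What remains inside is $\langle M^\pi_w - M^{\pi_w}_w, r\rangle = V^\pi(w;r)-V^{\pi_w}(w;r)$. Subtracting $V^\pi(s;r)$ then yields the claimed formula.

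\textbf{Remaining technical points.} The interchange of sums in Step 1 is justified by absolute summability: the measures have total mass at most $1/(1-\gamma)$, so it suffices that $r$ is bounded. The only step I expect to need care is the almost-sure reaching convention in Step 2.
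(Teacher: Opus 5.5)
Your proposal is correct and follows the paper's own argument: the paper likewise writes $A_s^{\pi_w\to\pi}(r)=\mathbb{E}\bigl[\langle M^{\pi_w|_k\to\pi}_s - M^\pi_s, r\rangle\,\mathbf{1}\{k=H_s^{\pi_w}(w)\}\bigr]$, pulls out $V^\pi(s;r)$, swaps the sum over $s'$ with the expectation to obtain $\langle M^{\pi_w\to\pi}_s, r\rangle$, and then pairs Theorem~\ref{thm:SM_pi_to_pi} with $r$. Your Step~2 caveat is exactly the point the paper passes over silently, but no almost-sure reaching assumption is needed (so the two readings you call equivalent are not, and the second is the more general one): as in the paper's proof of Theorem~\ref{thm:SM_pi_to_pi}, where never reaching $w$ gives $\overline{\pi}\equiv\pi_w$, one includes $k=\infty$ with $M^{\pi_w|_\infty\to\pi}_s=M^{\pi_w}_s$, so $\mathbb{E}[\mathbf{1}\{k=H_s^{\pi_w}(w)\}]=1$ identically and $V^\pi(s;r)$ is subtracted unconditionally.
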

The switching advantage function serves as the objective for designing the high-level policy. Given a reward function $r$ and an efficient base policy $\pi$, we define a high-level policy that selects subgoals $w$ by maximizing the switching advantage function $A_s^{\pi_w \to \pi}(r)$ over $w$. The main quantities in this section are illustrated in Figure~\ref{fig:bigfigure} for a discrete maze environment; see Appendix C for details.

\ifrlc
    \bigfiguremain
\fi

%% file: sections/5_FB_pi-Switch.tex
\section{FB \texorpdfstring{$\pi$}{pi}-Switch: Hierarchical learning with switching successor measures}
\label{sec:hil-switch}
In this section, we present a practical offline learning algorithm that leverages the hierarchical successor measures introduced in the previous section. Training of \textbf{FB $\pi$-Switch}, illustrated in Figure~\ref{fig:framework}, consists of three stages: (1) jointly learning state-successor measure and representations, followed by (2) high-level policy learning and (3) low-level policy learning.

\begin{figure}[h]
    \centering
    \includegraphics[width=0.82\linewidth]{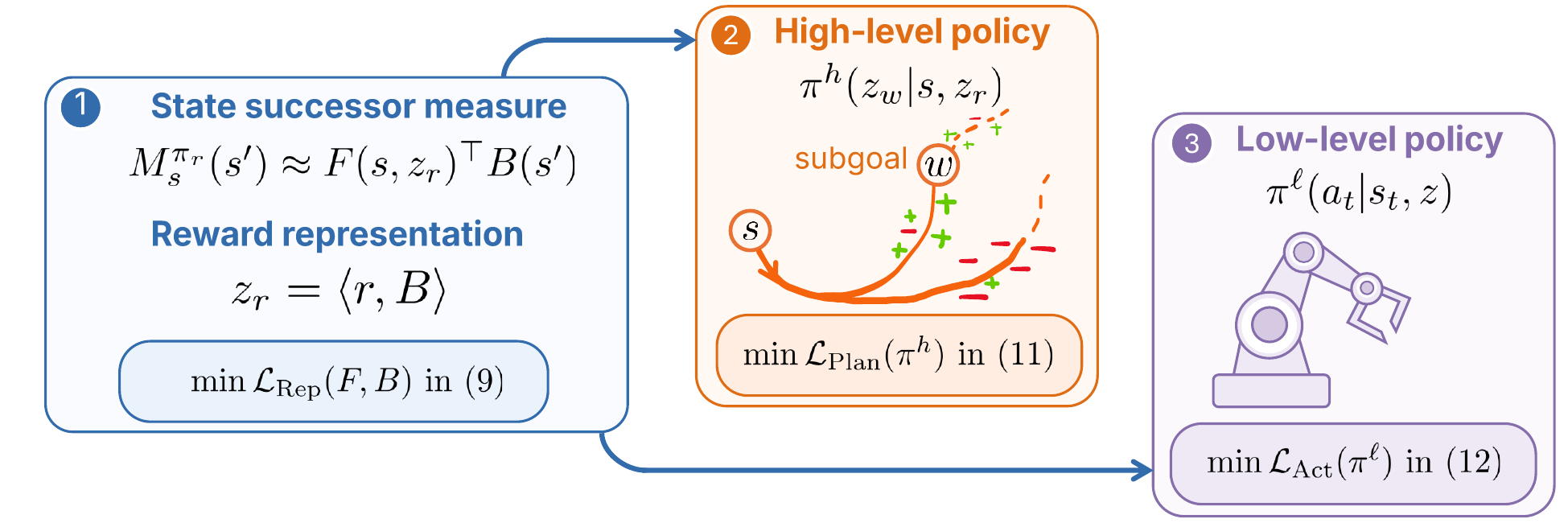}
    \caption{FB $\pi$-Switch training pipeline: learning state successor measures, high- and low-level control.}
    \label{fig:framework}
\end{figure}

During testing, given a task defined by reward $r$ and its embedding $z_r$, at each time step we first sample a subtask $z \sim \pi^h(\cdot \mid s, z_r)$ and then execute an action $a \sim \pi^\ell(\cdot \mid s_t, z)$. In the following, we describe the three training stages in detail and define their corresponding loss functions.

\textbf{\textcolor{RoyalBlue}{Step 1: Learning state-successor representations.}} We begin by learning action-free, reward-conditioned state-successor representations that can later be reused for hierarchical planning and control. Compared to the original FB objective~\eqref{eq:FB_standard_loss}, our formulation differs in two aspects: (i) it removes dependence on actions and policy-dependent bootstrapping, and (ii) it enables a single-step learning procedure, in contrast to the two coupled optimization steps required in FB framework (see Section~\ref{sec:preliminaries}). To achieve this, we marginalize over actions and replace the squared Bellman error with an expectile regression objective, following the idea of~\eqref{eq:IQL_intention}. This requires addressing two questions:

(a) \textit{What does action-free FB represent?} If $M^{\pi_z}_{s,a}(s') = F(s,a,z) ^\top B(s')$ holds, then the corresponding state-successor measures also admit a similar FB-representation. In particular,
\begin{align*}
    M^{\pi_z}_s(s') = \mathbb{E}_{a\sim \pi_z(\cdot\mid s)} M^{\pi_z}_{s,a}(s') = \mathbb{E}_{a\sim \pi_z(\cdot |s)} F(s,a,z) ^\top B(s') = F(s,z)^\top B(s').    
\end{align*}
where we define the action-marginalized successor features $F(s,z):= \mathbb{E}_{a\sim \pi_z(\cdot\mid s)} F(s,a,z)$. This allows us to directly optimize $(F,B)$ for the state-successor measure $M^{\pi_z}_s(s')$. Furthermore, the induced value function admits a linear representation:
\begin{align*}
    V^{\pi_z}(s; r) = \sum_{s'} M_s^{\pi_z} (s') r(s')  = \sum_{s'} F(s,z)^\top B(s') r(s') = F(s,z)^\top z_r
\end{align*}
where $z_r = \langle B, r\rangle = \mathbb{E}_{s'\sim \rho} \frac{1}{\rho(s')} B(s')r(s')$. Unlike the original FB formulation, the weighting factor $1/\rho(s')$ appears explicitly here. This difference arises from the fact that FB assumes the approximation $M^{\pi_z}_{s,a}(s') \approx F(s,a, z)^\top B(s') \rho(s')$, which absorbs the dependence on $\rho$ into the representation. In contrast, when using non-quadratic losses, removing this dependence on $\rho$ from the objective is no longer straightforward (see Appendix~A.2 for more details).

(b) \textit{How to learn without explicit policy feedback?}
In the original FB algorithm, the bootstrap target depends explicitly on the policy via $F(s_{t+1}, \pi_z(s_{t+1}), z)$, coupling representation and policy learning. Since we aim for a one-step, action-free objective, this form of bootstrapping is unavailable. We instead replace explicit policy feedback with an implicit greedy backup via intention-conditioned expectile regression, akin to~\eqref{eq:IQL_intention}. To learn the representation of $M^{\pi_z}_{s_t}(s')$, we minimize:
    \begin{align}
    \label{eq:L_rep_FB}
        \mathcal{L}_{\mathrm{Rep}}(F,B) &= 
        \mathbb{E}_{(s_t,s_{t+1})\sim \mathcal{D}, s'\sim \rho, z \sim \mathcal{Z}} 
        \big\vert \tau -  \mathbf{1}\{\Delta(F,B) <0\} \big\vert  \delta(F,B)^2  \\
        \mathrm{with}\quad \Delta(F,B) :&= 
        r_z(s_{t}) + \gamma F (s_{t+1},z)^\top z - F (s_t,z)^\top z, \nonumber\\
        \quad \delta(F,B)  :&= 
         \mathbf{1}\{s_{t}=s'\} + \gamma \overline{F} (s_{t+1},z)^\top \overline{B}(s') - F (s_t,z)^\top B(s'). \nonumber
    \end{align}
Here, $r_z(s)$ is the reward associated with embedding $z$, defined as $r_z(s)= B(s)^\top (\mathbb{E}_\rho BB^\top/\rho)^{-1} z$.
In practice, we enhance this loss with standard techniques such as double target networks, orthonormalization regularization, and improved sampling strategies for $z$ and $s'$, described in Appendix~C.

\textbf{\textcolor{Orange}{Step 2: Learning high-level policy.}} Given successor representations, we learn a policy that plans over subgoals rather than actions. Using Corollary~\ref{cor:main_hierarchical_SM}, we define an FB-approximation of $A_s^{\pi_w \to \pi_z}(r_z)$:
\begin{align}
    A_{\mathrm{FB}} (s,w,z) := F(s,z_w)^\top z +  \frac{F(s,z_w)^\top z_w }{F(w,z_w)^\top z_w} \Big( F(w, z) - F(w,z_w) \Big)^\top z - F(s,z)^\top z
    \label{eq:def_A_FB}
\end{align}
where $z_w=B(w)$ denotes the embedding of the subgoal $w$, and $z$ is the reward embedding. We learn a high-level policy $\pi^h(z_w \mid s, z_r)$ by maximizing $A_{\mathrm{FB}}$ using the AWR objective from~\eqref{eq:AWR_loss}:
    \begin{align}
        \mathcal{L}_{\mathrm{Plan}}(\pi^h) &= - 
        \mathbb{E}_{s,w\sim \mathcal{D}, z\sim \mathcal{Z}} \Big[
        \exp \Big(\beta A_{\mathrm{FB}}(s,w,z) \Big) \log \pi^h(z_w \mid s, z) \Big]. 
        \label{eq:L_plan}
    \end{align}


\textbf{\textcolor{Purple}{Step 3: Learning low-level policy.}} Finally, we learn a low-level policy operating in the action space. As in Step 2, we adopt the AWR objective~\eqref{eq:AWR_loss} and train the policy using one-step transitions. Concretely, the low-level policy $\pi^\ell (a_t \mid s_t, z)$ is learned by minimizing\footnote{Note that we omit the constant reward term from the objective and absorb the discount factor into parameter $\beta$.}: 
    \begin{align}
         \mathcal{L}_{\mathrm{Act}}(\pi^\ell) = - \mathbb{E}_{\substack{
       (s_t,a_t,s_{t+1})\sim \mathcal{D} \\
        z\sim \mathcal{Z}
        }} \Big[ \exp \Big( \beta  ( F(s_{t+1},z)^\top z - F(s_t, z)^\top z ) \Big) \log \pi^\ell (a_t \mid s_t, z) \Big]. 
        \label{eq:L_act}
    \end{align}

The final FB $\pi$-Switch objective is the sum of representation, planning, and control losses: $\mathcal{L}_{\mathrm{Rep}}(F,B) + \mathcal{L}_{\mathrm{Plan}}(\pi^h) + \mathcal{L}_{\mathrm{Act}}(\pi^\ell)$. Further training details are provided in Appendix~C.

\subsection{Hierarchical post-tuning for pretrained FB models}
\label{subsec:post_tuning_FB}
Although we present a full hierarchical algorithm, the hierarchical component can also be added on top of pretrained behavior foundational models (BFMs). Suppose we are given a pretrained successor representation $M_{s,a}^{\pi_z}(s') \approx F(s,a,z)^\top B(s')\rho(s')$ together with a latent-conditioned policy $\pi(a\vert s,z)$, as in prior work \citep{touati2022does,pirotta2023fast,tirinzoni2025zero}. In this case, hierarchy can be introduced by applying only Step 2 of our algorithm. Step 1 is implicitly satisfied since $M_s^{\pi_z} (s') = \mathbb{E}_{a\sim \pi(\cdot \mid s,z)} M_{s,a}^{\pi_z}(s')$, and Step 3 requires no additional training, as the pretrained policy $\pi(a\vert s,z)$ can be directly used as the low-level controller. However, pretraining a BFM is generally more challenging, since the representations are learned directly in the state-action space. A more detailed description is provided in Appendix C.3.

%% file: sections/6_experiments.tex
\section{Experiments}
\label{sec:experiments}
We evaluate our hierarchical learning methods in three settings: discrete control, standard goal-conditioned continuous control, and continuous control with general reward functions.

\textbf{An illustrative example: learning in a discrete maze.} 
We consider the environment from Section~\ref{sec:switching_successor_measures} and learn representations by minimizing the representation loss~\eqref{eq:L_rep_FB} (check Appendix C for details). Figure~\ref{fig:bigfigure_learning_based} shows the learned counterparts of the quantities illustrated in Figure~\ref{fig:bigfigure}.
\begin{figure}[h]
    \vspace{-5pt}
    \centering
    \begin{subfigure}[t]{0.24\textwidth}
        \centering
        \includegraphics[height=1\linewidth]{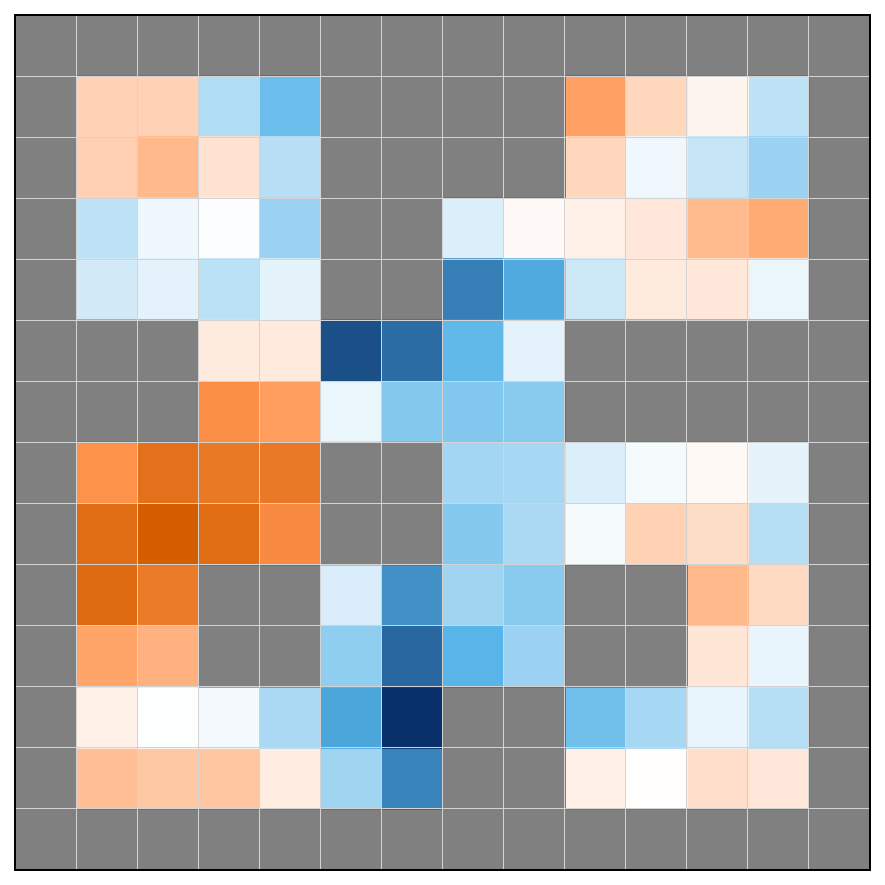}
        \caption{ $\mathrm{Proj}_B r( \textcolor{magenta}{s})$ }
    \end{subfigure}
    \begin{subfigure}[t]{0.24\textwidth}
        \centering
        \includegraphics[height=1\linewidth]{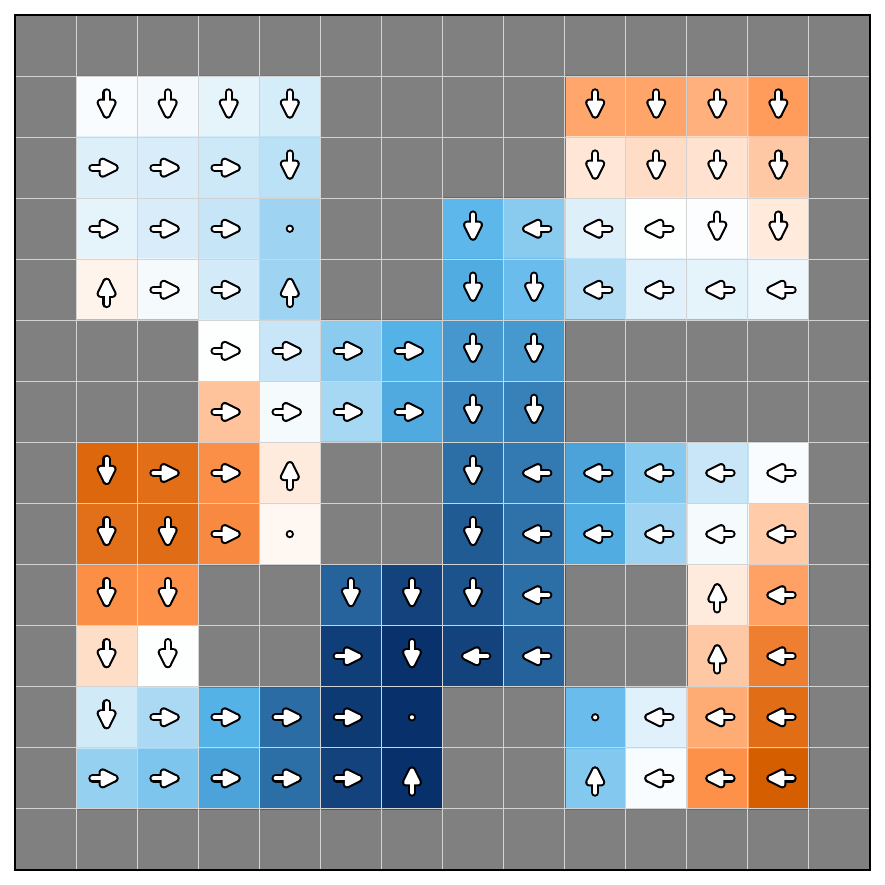}
        \caption{$F(\textcolor{magenta}{s},z_r)^\top z_r$}
    \end{subfigure}
    \begin{subfigure}[t]{0.24\textwidth}
        \centering
        \includegraphics[height=1\linewidth]{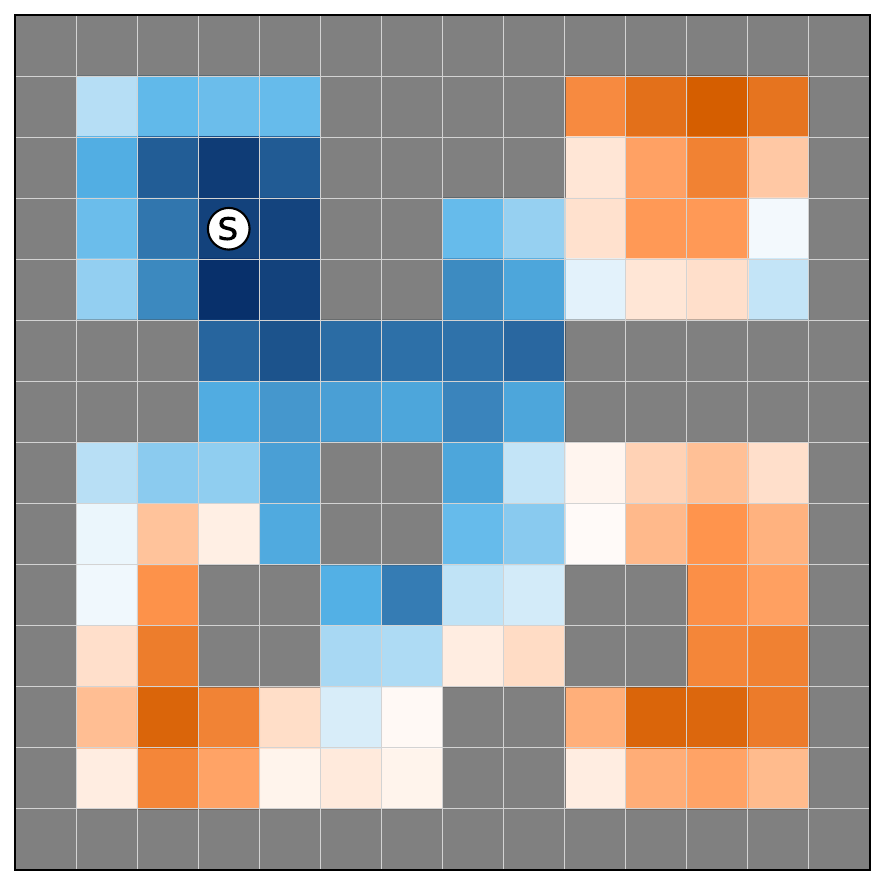}
        \caption{$A_{\mathrm{FB}}(s,\textcolor{magenta}{w},r)$  }
    \end{subfigure}
    \begin{subfigure}[t]{0.24\textwidth}
        \centering
        \includegraphics[height=1\linewidth]{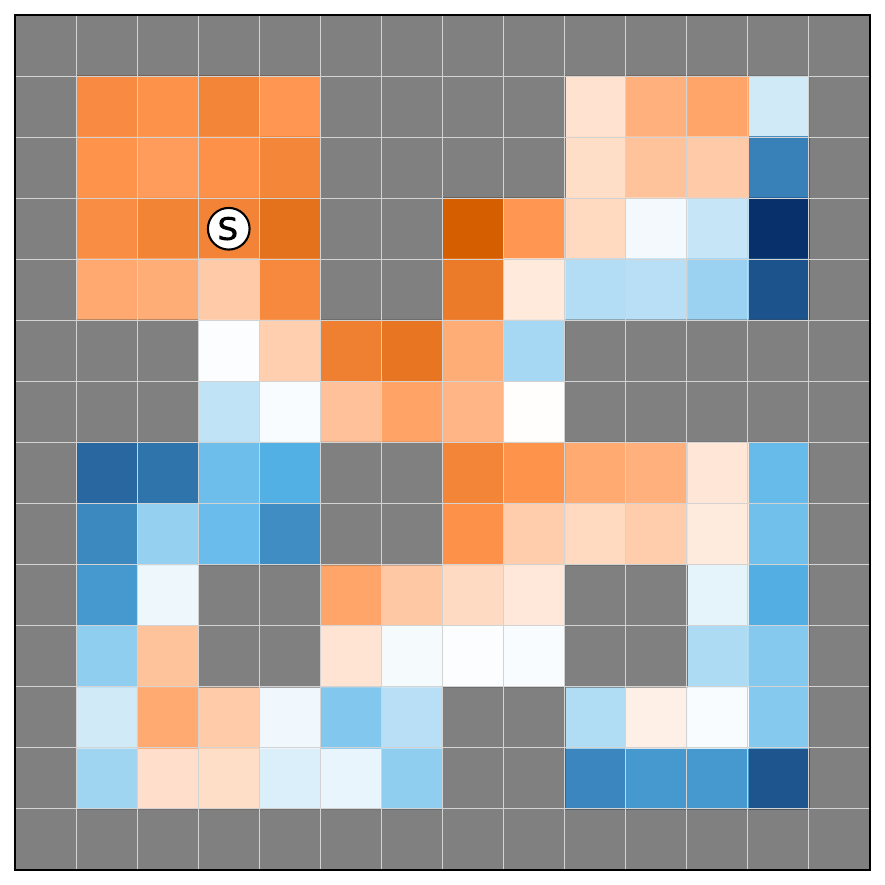}
        \caption{$A_{\mathrm{FB}}(s,\textcolor{magenta}{w},r) \vert_{< w}$ }
    \end{subfigure}
    \ifrlc
        \caption{Depicted quantities as functions of the \textcolor{magenta}{magenta} variables: (a) Reward projected onto learned features: $\mathrm{Proj}_B r(s) = \langle r, B^\top (BB^\top)^{-1} B(s) \rangle$. (b) Learned value function for embedding $z_r = \langle r, B \rangle$. (c) Learned switching advantage~\eqref{eq:def_A_FB}. (d) Learned analogue of Figure~\ref{fig:bigfigure}(d), corresponding to the pre-hitting terms in \eqref{eq:def_A_FB}.}
    \else
        \caption{Depicted quantities as functions of the \textcolor{magenta}{magenta} variables: (a) Reward projected onto learned features: $\mathrm{Proj}_B r(s) = \langle r, B^\top (BB^\top)^{-1} B(s) \rangle$. (b) Learned value function for task embedding $z_r = \langle r, B \rangle$. (c) Learned switching advantage~\eqref{eq:def_A_FB}. (d) Learned analogue of Figure~\ref{fig:bigfigure}(d), corresponding to the pre-hitting terms in \eqref{eq:def_A_FB}.}
    \fi
    \label{fig:bigfigure_learning_based}
\end{figure}\\
This example highlights an important observation. As shown in Figure~\ref{fig:bigfigure_learning_based}, the successor representations can be successfully learned from data and captures meaningful structure across rewards~(a), values~(b), and advantage-related quantities~(c). However, the same example also reveals a challenge: as shown in Figure~\ref{fig:bigfigure_learning_based}~(d), learning the local component of the advantage function is difficult, as it requires subtracting a term supported only on states with nonzero reward. To address this in practice, we use a proxy, $\widehat{A}_{\mathrm{FB}}$, obtained by dropping the term $- \frac{F(s,z_w)^\top z_w }{F(w,z_w)^\top z_w} F(w,z_w)^\top z_r$ from~\eqref{eq:def_A_FB}. In addition, we omit the $\rho^{-1}(s)$ scaling when constructing reward embeddings, using $z_r = \mathbb{E}_{s\sim \rho} [r(s)B(s)]$ instead\footnote{Note that all our loss functions are invariant to scaling of $z$.}. Additional results, including a comparison between $A_{\mathrm{FB}}$ and $\widehat{A}_{\mathrm{FB}}$, are provided in Appendix~D.1. 

Overall, this example demonstrates that FB representations can be learned in practice, while also highlighting the difficulty of accurately estimating the local component of the advantage function. In subsequent experiments, we therefore omit the subtracted term when training the high-level policy.

\newcommand{\figPiH}{
    \begin{wrapfigure}{r}{0.23\textwidth}
        \centering
        \vspace{-15pt}
        \begin{subfigure}[t]{0.23\textwidth}
            \centering
            \includegraphics[width=0.9\linewidth]{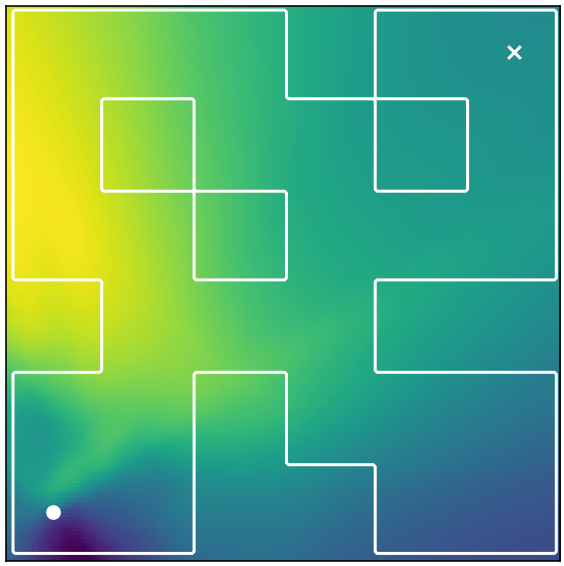}
            \caption{HIQL}
        \end{subfigure}
        \begin{subfigure}[t]{0.23\textwidth}
            \centering
            \includegraphics[width=0.9\linewidth]{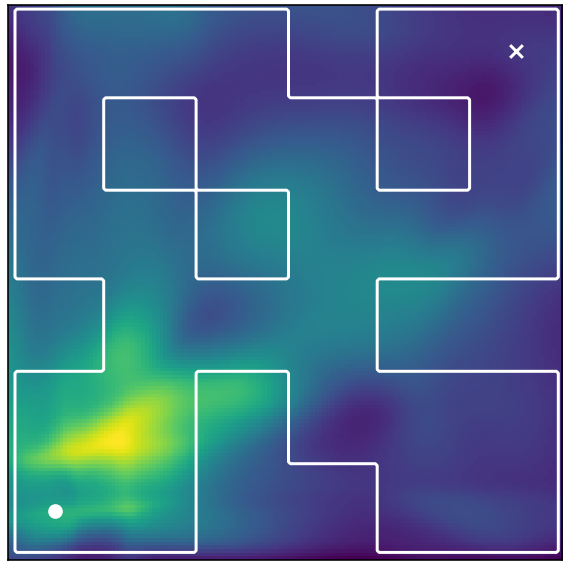}
            \caption{FB $\pi$-Switch}
        \end{subfigure}
        \caption{Comparison of $\pi^h$: our method induces optimal path subgoals.}
        \label{fig:hierarchies_qualitative_difference}
        \vspace{-25pt}
    \end{wrapfigure}
    }

\textbf{Continuous control for goal-reaching tasks.}
Next, we evaluate our method on AntMaze from OGBench \citep{park2024ogbench}, a standard GCRL benchmark with 29-dimensional states and 8-dimensional actions. We note that this is not purely a navigation problem; the agent must also learn a complex locomotion policy in addition to planning paths through the maze. Although our framework can be applied to more general tasks than goal-reaching, this setting provides a controlled and widely used environment for comparison.
We compare against HIQL, ICVF, FB, and One-Step FB \citep{zheng2026can}. For FB $\pi$-Switch, we evaluate both hierarchical and non-hierarchical variants (without the high-level policy $\pi^h$). We also include a hierarchical variant of FB (Section~\ref{subsec:post_tuning_FB}), denoted FB $+\pi^h$. Additional methodological comparisons are provided in Appendix A.4.

\ifrlc
    \begin{table}[t]
    \centering
    {\small
    \begin{tabular}{l c c c c c c c c c c c}
    \toprule
    \textbf{AntMaze} 
    & \multicolumn{2}{c}{\textbf{HIQL}} 
    & \textbf{ICVF}
    & \multicolumn{1}{c}{\raisebox{-12pt}[0pt][0pt]{\shortstack{\textbf{One-Step} \\ \textbf{FB}}}}
    & \multicolumn{2}{c}{\textbf{FB}} 
    & \multicolumn{2}{c}{\textbf{FB $\pi$-Switch}} \\
    \cmidrule(lr){2-3}   \cmidrule(lr){6-7} \cmidrule(lr){8-9}
    
    & vanilla & $-\pi^h$ &  &  & vanilla & $+\pi^h$ & vanilla & $-\pi^h$ \\
    \midrule
    
    Medium   
    & {93 \scriptsize $\pm$ 1} 
    & {73 \scriptsize $\pm$ 3} 
    & {46 \scriptsize $\pm$ 37} 
    & {35 \scriptsize $\pm$ 7} 
    & {47 \scriptsize $\pm$ 1} 
    & {50 \scriptsize $\pm$ 12} 
    & {87 \scriptsize $\pm$ 6} 
    & {70 \scriptsize $\pm$ 4} \\
    
    Large    
    & {73 \scriptsize $\pm$ 6} 
    & {23 \scriptsize $\pm$ 4} 
    & {28 \scriptsize $\pm$ 3} 
    & {23 \scriptsize $\pm$ 18} 
    & {21 \scriptsize $\pm$ 5} 
    & {20 \scriptsize $\pm$ 5} 
    & {66 \scriptsize $\pm$ 9} 
    & {37 \scriptsize $\pm$ 7} \\
    
    Giant    
    & {5 \scriptsize $\pm$ 3} 
    & {0 \scriptsize $\pm$ 0} 
    & {0 \scriptsize $\pm$ 0} 
    & {0 \scriptsize $\pm$ 0} 
    & {0 \scriptsize $\pm$ 0} 
    & {0 \scriptsize $\pm$ 0} 
    & {1 \scriptsize $\pm$ 1} 
    & {1 \scriptsize $\pm$ 1} \\
    
    Teleport 
    & {42 \scriptsize $\pm$ 4} 
    & {32 \scriptsize $\pm$ 6} 
    & {23 \scriptsize $\pm$ 6} 
    & {8 \scriptsize $\pm$ 2} 
    & {25 \scriptsize $\pm$ 7} 
    & {13 \scriptsize $\pm$ 7} 
    & {40 \scriptsize $\pm$ 6} 
    & {29 \scriptsize $\pm$ 4} \\
    \bottomrule
    \end{tabular}
    }
        \caption{Average (binary) success rate ($\%$) across the five test-time goals with \texttt{navigate-v0} datasets. The results are averaged over 50 episodes and 5 seeds, and the standard deviations are reported after $\pm$ sign.}
        \label{tab:results_antmaze}
        \vspace{-15pt}
    \end{table}
\fi

\ifrlc
\else
    \begin{table}[h]
    \centering
    {\small
    \begin{tabular}{l c c c c c c c c c c c}
    \toprule
    \textbf{AntMaze} 
    & \multicolumn{2}{c}{\textbf{HIQL}} 
    & \textbf{ICVF}
    & \multicolumn{1}{c}{\raisebox{-12pt}[0pt][0pt]{\shortstack{\textbf{One-Step} \\ \textbf{FB}}}}
    & \multicolumn{2}{c}{\textbf{FB}} 
    & \multicolumn{2}{c}{\textbf{FB $\pi$-Switch}} \\
    \cmidrule(lr){2-3}   \cmidrule(lr){6-7} \cmidrule(lr){8-9}
    
    & vanilla & $-\pi^h$ &  &  & vanilla & $+\pi^h$ & vanilla & $-\pi^h$ \\
    \midrule
    
    Medium   
    & {93 \scriptsize $\pm$ 1} 
    & {73 \scriptsize $\pm$ 3} 
    & {46 \scriptsize $\pm$ 37} 
    & {35 \scriptsize $\pm$ 7} 
    & {47 \scriptsize $\pm$ 1} 
    & {50 \scriptsize $\pm$ 12} 
    & {87 \scriptsize $\pm$ 6} 
    & {70 \scriptsize $\pm$ 4} \\
    
    Large    
    & {73 \scriptsize $\pm$ 6} 
    & {23 \scriptsize $\pm$ 4} 
    & {28 \scriptsize $\pm$ 3} 
    & {23 \scriptsize $\pm$ 18} 
    & {21 \scriptsize $\pm$ 5} 
    & {20 \scriptsize $\pm$ 5} 
    & {66 \scriptsize $\pm$ 9} 
    & {37 \scriptsize $\pm$ 7} \\
    
    Giant    
    & {5 \scriptsize $\pm$ 3} 
    & {0 \scriptsize $\pm$ 0} 
    & {0 \scriptsize $\pm$ 0} 
    & {0 \scriptsize $\pm$ 0} 
    & {0 \scriptsize $\pm$ 0} 
    & {0 \scriptsize $\pm$ 0} 
    & {1 \scriptsize $\pm$ 1} 
    & {1 \scriptsize $\pm$ 1} \\
    
    Teleport 
    & {42 \scriptsize $\pm$ 4} 
    & {32 \scriptsize $\pm$ 6} 
    & {23 \scriptsize $\pm$ 6} 
    & {8 \scriptsize $\pm$ 2} 
    & {25 \scriptsize $\pm$ 7} 
    & {13 \scriptsize $\pm$ 7} 
    & {40 \scriptsize $\pm$ 6} 
    & {29 \scriptsize $\pm$ 4} \\
    \bottomrule  \\
    \end{tabular}
    }
        \caption{Average (binary) success rate ($\%$) across the five test-time goals with \texttt{navigate-v0} datasets. The results are averaged over 50 episodes and 5 seeds, and the standard deviations are reported after $\pm$ sign.}
        \label{tab:results_antmaze}
        \vspace{-15pt}
    \end{table}
\fi

All methods are trained under a common setup with sparse rewards and no terminal masking. Models use comparable architectures and are trained for 1M steps. Hierarchical variants are initialized from pretrained non-hierarchical models and trained for an additional 500k steps, updating only the high-level policy. This enables direct comparison between variants sharing the same representation. Evaluation is performed on five standard tasks per maze, with episodes of length 1000 (App. C).

\figPiH

Table~\ref{tab:results_antmaze} reports success rates across four environments. FB $\pi$-Switch achieves performance comparable to HIQL, despite not being specifically designed for GCRL. Adding a high-level policy consistently improves performance, highlighting the benefit of hierarchical structure. Notably, even without hierarchy, FB $\pi$-Switch outperforms other non-hierarchical baselines. All methods struggle in the Giant setting due to long horizons and sparse rewards. Since performance depends on learned representations, sufficiently strong representations are essential.

To better understand these results, we ablate HIQL by (i) removing the high-level policy, (ii) replacing its critic with a bilinear form, and (iii) using intention-based values $V^{\pi_g}(s; s')$. These modifications progressively align HIQL with FB $\pi$-Switch without hierarchy, yielding comparable performance (Appendix D.2). This suggests that these adaptations do not significantly affect performance in GCRL settings.

We also observe qualitative differences in learned high-level policies (Figure~\ref{fig:hierarchies_qualitative_difference}). HIQL uses representations conditioned on both state and goal, learning subgoals that lead to similar immediate actions. In contrast, our method uses goal-only representations, yielding subgoals that lie along coherent trajectories toward the goal. This results in more structured, path-consistent behavior (see Appendix D.2 for more visualizations).

\ifrlc
\else
    \begin{wraptable}{r}{0.53\textwidth}
        \centering
        \setlength{\tabcolsep}{3pt}
        {\small
        \begin{tabular}{llccc}
        \toprule
        $\pi^h$ & $\pi^\ell$ & Medium & Large & Teleport \\
        \midrule
        FB$+\pi^h$ & FB $\pi$-Switch 
        & $+30\%$ 
        & $+90\%$ 
        & $+74\%$  \\
        \midrule
        FB $\pi$-Switch & FB$+\pi^h$
        & $+19\%$ 
        & $-23\%$ 
        & $-14\%$ \\
        \bottomrule
        \end{tabular}
        }
        \caption{Relative improvement in success rate (\%) over the base FB $+\pi^h$ model when combined with high- or low-level policies of FB $\pi$-Switch.}
        \label{tab:explanation_HFB_GCRL}
        \vspace{-10pt}
    \end{wraptable}
\fi

\ifrlc  
        \begin{wraptable}{r}{0.53\textwidth}
        \centering
        \setlength{\tabcolsep}{3pt}
        {\small
        \begin{tabular}{llccc}
        \toprule
        $\pi^h$ & $\pi^\ell$ & Medium & Large & Teleport \\
        \midrule
        FB$+\pi^h$ & FB $\pi$-Switch 
        & $+30\%$ 
        & $+90\%$ 
        & $+74\%$  \\
        \midrule
        FB $\pi$-Switch & FB$+\pi^h$
        & $+19\%$ 
        & $-23\%$ 
        & $-14\%$ \\
        \bottomrule
        \end{tabular}
        }
        \caption{Relative improvement in success rate (\%) over the base FB $+\pi^h$ model when combined with high- or low-level policies of FB $\pi$-Switch.}
        \label{tab:explanation_HFB_GCRL}
        \vspace{-15pt}
    \end{wraptable}
\fi

Finally, we analyze the role of hierarchy in FB-based methods. While adding a high-level policy to standard FB yields only marginal improvements, we find that this is not due to a lack of meaningful subgoals. To isolate the effect, we construct hybrid agents combining high- and low-level policies from FB and FB $\pi$-Switch. Results in Table~\ref{tab:explanation_HFB_GCRL} show that hierarchical FB can learn useful subgoals, and that the limited gains are primarily due to the quality of the low-level policy rather than the hierarchical decomposition itself.

\ifrlc
\else
    \begin{wrapfigure}{r}{0.48\textwidth}
        \centering
        \vspace{-15pt}
        \begin{subfigure}[t]{0.48\textwidth}
            \centering
            \includegraphics[width=\linewidth]{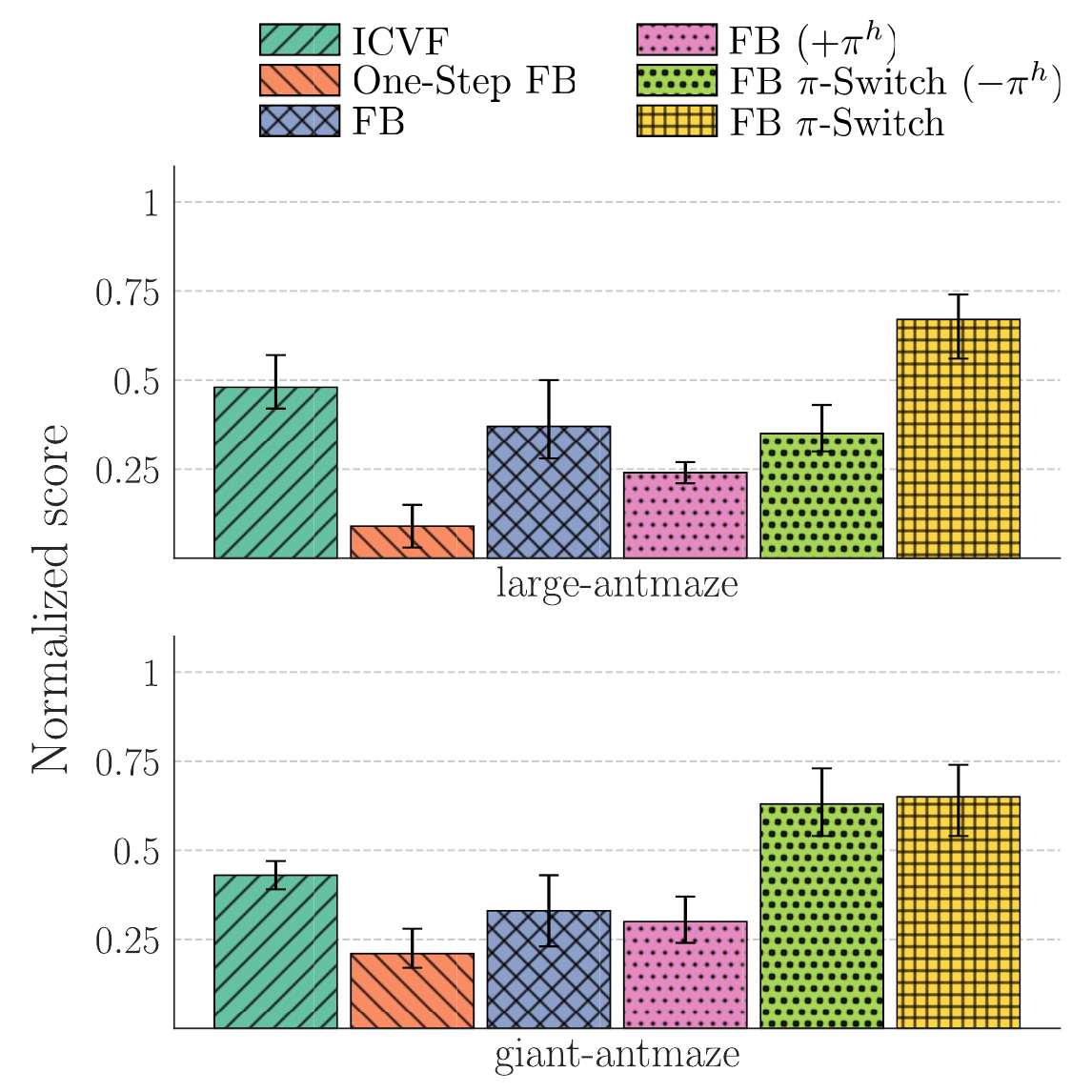}
        \end{subfigure}
        \caption{Normalized returns across the five tasks in Figure~\ref{fig:rewards_antmaze}, evaluated over 50 episodes (1000 steps) and 5 seeds per task. We report IQM with stratified bootstrap 95\% confidence intervals.}
        \label{fig:regions_main}
        \vspace{-15pt}
    \end{wrapfigure}
\fi

\textbf{Continuous control for general tasks.} 
Motivated by the discrete example in Figure~\ref{fig:bigfigure}, we extend the AntMaze benchmark by defining reward functions over multiple spatial regions instead of a single goal, inducing trade-offs between local and global objectives. This yields a controlled yet more general setting with diverse reward landscapes (Figure~\ref{fig:rewards_antmaze}). We focus on the Large and Giant mazes, where long-horizon structure is critical. 
\ifrlc
    \begin{wrapfigure}{r}{0.48\textwidth}
        \centering
        \vspace{-10pt}
        \begin{subfigure}[t]{0.48\textwidth}
            \centering
            \includegraphics[width=\linewidth]{figures/regions_figure_IQM_slide.pdf}
        \end{subfigure}
        \caption{Normalized returns across the five tasks in Figure~\ref{fig:rewards_antmaze}, evaluated over 50 episodes (1000 steps) and 5 seeds per task. We report IQM with stratified bootstrap 95\% confidence intervals.  }
        \label{fig:regions_main}
        \vspace{-25pt}
    \end{wrapfigure}
\fi
Using the models from the previous section, we normalize returns and, following \cite{agarwal2021deep}, report the interquartile mean (IQM) to aggregate performance across tasks, with stratified bootstrap $95\%$ confidence intervals in Figure~\ref{fig:regions_main}. 
\ifrlc
Full results available in Appendix D.3.
\else
Full results and additional comparisons are provided in Appendix D.3.
\fi

FB $\pi$-Switch achieves the best average performance. In the Giant setting, the non-hierarchical variant also performs well, likely due to strong local rewards guiding behavior. Notably, FB $\pi$-Switch remains robust across both environments, suggesting that the learned high-level policy improves robustness. We observe that not all hierarchical variants are equally effective, reinforcing that the quality of the low-level policy is crucial for realizing the benefits of hierarchy.

\ifrlc
Importantly, unlike goal-conditioned methods such as HIQL that that are limited to indicator rewards, our framework naturally extends to distributed rewards, enabling composition across regions and improved performance on more general tasks.
\else
Importantly, this experiment highlights a key distinction: while goal-conditioned methods such as HIQL are tailored to single-target objectives, our framework naturally extends to distributed reward settings without modification. This enables the agent to exploit intermediate rewards and compose behaviors across regions, leading to improved performance in more general tasks.
\fi

%% file: sections/7_conclusion.tex
\section{Conclusion}
\ifrlc
In this work, we introduced switching successor measures as a framework for modeling hierarchical structure in zero-shot RL, and proposed FB $\pi$-Switch, a method that jointly learns high- and low-level policies from shared representations. Without relying on expert trajectories or hand-crafted temporal abstractions, it enables hierarchical behavior to emerge directly from the representations. Empirical results across goal-reaching and more general tasks show that FB $\pi$-Switch achieves competitive performance and consistently improves over non-hierarchical baselines.
\else
In this work, we introduced switching successor measures as a framework for modeling hierarchical structure in zero-shot reinforcement learning, and proposed FB $\pi$-Switch, a method that jointly learns high- and low-level policies from shared representations. Without relying on expert trajectories or hand-crafted temporal abstractions, it enables hierarchical behavior to emerge directly from the representations. Empirical results across goal-reaching and more general tasks show that FB $\pi$-Switch achieves competitive performance and consistently improves over non-hierarchical baselines.
\fi

Our findings suggest that hierarchical learning with successor representations can effectively bridge local and global decision-making, but several open questions remain. Future work could explore the dependence of hierarchical behavior on the quality of the base model, and which task properties, such as reward sparsity and long horizons, most strongly favor this hierarchical structure. Another direction is comparing this approach to generative model-based methods, which often excel in long-horizon planning but introduce additional complexity.
\ifrlc
Finally, extending and evaluating our framework in multi-subgoal planning remains an interesting open question.
\else
 Extending our framework to multi-subgoal planning—where agents dynamically compose sequences of subgoals—could unlock even greater adaptability, though error accumulation across chained subgoals remains a key challenge.
 \fi

%% file: sections/8_appA.tex
\section*{A Additional discussions}

\subsection*{A.1 Further related work}
In this section, we highlight additional works related to our approach. For a more comprehensive overview, we refer the reader to the survey on hierarchical RL by \cite{klissarov2025discovering}.

\textbf{Goal-conditioned RL.}
Many goal-conditioned RL methods build on hindsight experience replay \citep{andrychowicz2017hindsight}, which relabels states encountered along trajectories as goals to address sparse rewards. Successor features have also been applied in goal-conditioned RL \citep{ramesh2019successor, hoang2021successor}. Another line of work introduces metric \citep{reichlin2024learning} or quasimetric \citep{myers2024learning} structure into value representations to improve robustness. Option-based extensions of HIQL \citep{ahn2025option} reduce the effective horizon by minimizing TD-error over multi-step targets rather than one-step transitions. Other approaches focus on online adaptation of learned goal-based hierarchies \citep{wu2024planning}.

\textbf{Successor measures.} 
Generative formulations of successor measures have been studied under the name of $\gamma$-models \citep{janner2020gamma}. Empirical work has demonstrated that FB models can produce diverse behaviors in complex domains such as humanoid control \citep{tirinzoni2025zero}. Other studies investigate the impact of dataset quality on FB methods, showing sensitivity to limited or homogeneous data \citep{jeen2024zero}. Alternative representations of successor measures have been explored beyond the FB parameterization, see \cite{agarwal2025unified} for an overview.

\textbf{Hierarchical methods.}
Another line of work develops hierarchical skill models using explicit dynamics or world models \citep{shi2022skill, hansen2024hierarchical, gurtlerlong}, where high-level policies operate over predicted skill outcomes. Related directions include model-based planning \citep{sobal2025learning} and autoregressive generation of subgoal sequences \citep{choi2026chain}. Hierarchical structure has also been explored in imitation learning, for example through options \citep{krishnan2017ddco} or latent skill models that combine imitation learning with offline RL for skill composition \citep{rosete2023latent}.

\textbf{Policy switching.}
Policy switching has been studied in several contexts. \cite{thakoor2022generalised} derive a generalized policy improvement result under a geometric horizon model. Other work considers switching between agents at test time \citep{neggatu2025evaluation} or learning switching mechanisms over multiple policies \citep{zhang2023qmp}. Additional directions analyze the cost or frequency of switching in exploration \citep{bai2019provably} and offline learning \citep{ma2024switch}. However, these approaches do not study switching in the context of learned successor representations or hierarchical subgoal-based policies.

\subsection*{A.2 Further remarks on the FB representation}
Here we recall key properties of the FB representation of \cite{touati2022does}. The (state-action) successor measure is defined as $M^{\pi_z}_{s,a}(s') = F(s,a,z)^\top B(s')\rho(s')$ and its corresponding policy as $\pi_z(s) = \argmax_a F(s,a,z)^\top z$. According to Theorem 2 in \cite{touati2022does}, for any reward function $r$ and $z_r = \mathbb{E}_\rho [r(s)B(s)]$, the optimal Q-function can be written as $Q_r^\star(s,a) = F(s,a,z_r)^\top z_r$. This shows that the FB representation can simultaneously express: (i) successor measures $M^{\pi_z}_{s,a}(s')$ through $F(s,a,z)^\top B(s')\rho(s')$, and (ii) optimal Q-values for arbitrary reward functions $Q_r^\star(s,a)$ through $F(s,a,z_r)^\top z_r$. Both are based on the same linear structure in the feature space spanned by $B(\cdot)$. Our framework builds on this observation by approximating both (state) successor measure and (state) value function using the FB parameterization.

Next, we provide additional remarks regarding the loss functions of the FB algorithm. First, an orthonormalization loss is added in order to regularize backward representation:
\begin{align*}
    \mathcal{L}_{\mathrm{norm}}(B) = \Vert \mathbb{E}_{s\sim \rho} B(s) B(s)^\top - \mathrm{Id} \Vert_\mathrm{F}^2 = \mathbb{E}_{s,s'\sim \rho} [ (B(s)^\top B(s'))^2 - 2\Vert B(s) \Vert_2^2] + \mathrm{Const},
\end{align*}
where $\mathrm{Id}$ is the identity matrix. An auxiliary loss on forward representations has been proposed in \cite{touati2022does}, however we do not include it since prior work reports it has little impact.

Next, we explain derivation of the loss function in \eqref{eq:FB_standard_loss}. Given a state distribution $\rho$, define the norm $\|M^\pi \|_{\rho}^2 := \mathbb{E}_{(s,a)\sim \mathcal{D}, s'\sim \rho}  \big( M^\pi_{s,a}(s') / \rho(s') \big)^2 $. For simplicity, we assume a finite state-space, which allows using point values such as $\rho(s')$ and indicator functions $\mathbf{1}\{ s_{t}=s' \}$. As discussed in \cite{touati2022does}, this extends to continuous spaces with appropriate measure-theoretic definitions. 

\cite{touati2022does} defines the objective $\mathcal{L}(F, B) := \left\| F^\top B \rho - \left( \mathrm{Id} + \gamma P_{\pi} F^\top B \rho \right) \right\|_{\rho}^2$, which expands to the FB loss used in our work (Equation \eqref{eq:FB_standard_loss}):
\begin{align*}
    \mathbb{E}_{\substack{
    (s_t,a_t,s_{t+1})\sim \mathcal{D} \\
    s'\sim \rho, z \sim \mathcal{Z}
    }} \Big( \frac{ \mathbf{1}\{ s_{t}=s' \} }{\rho(s')} + \gamma  \overline{F}(s_{t+1},\pi_z(s_{t+1}), z)^\top \overline{B}(s')  - F(s_t,a_t,z)^\top B(s') \Big)^2
\end{align*}
Finally, under a quadratic objective, this loss function simplifies (up to a constant) to:
\begin{align*}
    \mathbb{E}_{\substack{
    (s_t,a_t,s_{t+1})\sim \mathcal{D} \\
    s'\sim \rho, z \sim \mathcal{Z}
    }} \Big[ \big( \gamma  \overline{F}(s_{t+1},\pi_z(s_{t+1}), z)^\top \overline{B}(s')  - F(s_t,a_t,z)^\top B(s') \big)^2 - 2 F(s_t, a_t, z)^\top B(s_t) \Big] 
\end{align*}

Remark: \cite{touati2022does} defines rewards and successor measures with respect to the next state $s_{t+1}$, whereas we use the more common convention based on the current state $s_t$. This leads to a slight difference in the loss: their formulation replaces the identity operator with the transition operator $P$, and introduces a term $F(s_t,a_t,z)^\top B(s_{t+1})$ instead of $F(s_t,a_t,z)^\top B(s_{t})$ in the final expression.

\subsection*{A.3 Analysis of Corollary \ref{cor:main_hierarchical_SM} from the perspective of prior work}
We begin by recalling the result from Corollary \ref{cor:main_hierarchical_SM}:
\begin{align*}
    A_s^{\pi_w \to \pi}(r) =  V^{\pi_{w}}(s;r) +  \frac{M^{\pi_{w}}_s(w)}{M^{\pi_{w}}_w(w)} \Big( V^\pi(w;r) - V^{\pi_{w}}(w;r)\Big) - V^\pi(s;r).
\end{align*}

We now relate this expression to two closely related lines of work.

\paragraph{Simplified objectives for GCRL in \cite{park2023hiql}.} As discussed earlier in Section \ref{sec:switching_successor_measures}, HIQL adopts a simplified advantage-based formulation. For comparison, consider a hypothetical variant of HIQL in which subgoal horizons are not fixed but sampled as $k\sim \mathrm{Geom}(1-\gamma)$. In this case, the HIQL objective can be interpreted as maximizing the advantage:
\begin{align*}
    A(s,w,r) = (1-\gamma) M^{\pi_{\beta}}_s(w)  V^\pi(w;r) - V^\pi(s;r),
\end{align*}
where $\pi_\beta$ denotes the behavior policy used to collect the offline dataset. Compared to Corollary \ref{cor:main_hierarchical_SM}, this formulation omits the local term $V^{\pi_{w}}(s;r) -  \frac{M^{\pi_{w}}_s(w)}{M^{\pi_{w}}_w(w)}  V^{\pi_{w}}(w;r)$, which is constant in the goal-conditioned setting and is equal to zero under sparse rewards, since only the goal state carries non-zero reward. Moreover, HIQL assumes absorbing goal states, which implies $M^{\pi_{w}}_w(w)=1/(1-\gamma)$. Finally, the term $M^{\pi_{w}}_s(\cdot )$ is effectively approximated using sampled future states $s_{t+k}\sim (1-\gamma) M^{\pi_\beta}_s(\cdot )$, corresponding to the future states of the behavior policy rather than those induced by the optimal goal-reaching policy for subgoal $w$.

\paragraph{Sampling successor measure via generative models in \cite{farebrother2026compositional}.}
We next compare our formulation with compositional policy evaluation using generative successor models. Here, we explicitly denote the dependence on the discount factor $\gamma$ using $V^{\pi\vert \gamma}(s;r)$ and $M^{\pi\vert \gamma}_s(w)$.

Consider Lemma 1 in \cite{farebrother2026compositional} (also reported in \cite{thakoor2022generalised}), for a compositional policy $\pi = \pi_{z_1}  \xrightarrow{\alpha} \pi_{z_2}$, where the first policy $\pi_{z_1}$ uses a discount factor $\gamma(1-\alpha)$ and the second uses $\gamma$. For $n=2$, an unbiased estimate of the value function is given by:
\begin{align*}
    \widehat{V}^{\pi\vert\gamma} (s;r) = \frac{1}{1-\gamma(1-\alpha)} \left[ r(s_1^+) +  \frac{\alpha\gamma}{1-\gamma} r(s_2^+) \right],    
\end{align*}
where\footnote{Note that $M_s^{\pi\vert \gamma}$ is a proper probability distribution in our framework only after scaling by $(1-\gamma)$.} $s_1^+\sim M^{\pi_{z_1}\vert \gamma(1-\alpha)}_{s}$ and $s_2^+\sim M^{\pi_{z_2}\vert \gamma}_{s_1^+}$. The expectation of this estimate is:
\begin{align}
    \frac{1}{1-\gamma(1-\alpha)}\mathbb{E}_{w \sim M^{\pi_{z_1}\vert \gamma(1-\alpha)}_{s}}  \Big[ r(w) &+   \frac{\alpha\gamma}{1-\gamma}\mathbb{E}_{s'\sim M^{\pi_{z_2}\vert \gamma}_{w}} r(s') \Big] \nonumber \\
    = V^{\pi_{z_1}\vert \gamma(1-\alpha)}(s;r) &
    + \alpha \gamma
    \sum_w M^{\pi_{z_1}\vert \gamma(1-\alpha)}_{s}(w) V^{\pi_{z_2}\vert \gamma} (w;r).
    \label{eq:farebrother_expression_appendix}
\end{align}
This expression is structurally similar to Corollary \ref{cor:main_hierarchical_SM}. However, there are several key differences. First, we follow $\pi_w$ until reaching $w$ instead of following the first policy with geometrically discounted horizon parameterized by $\gamma(1-\alpha)$. Second, since switching occurs upon reaching $w$, the sum over all subgoals $w$ in \eqref{eq:farebrother_expression_appendix} collapses to a single non-zero term. As a result, our method does not require a generative model to estimate the switching advantage function.

\subsection*{A.4 Further comparisons with prior methods}

\textbf{Intention conditioned value functions (ICVF) \citep{ghosh2023reinforcement}.} 
The first stage of our algorithm is closely related to the representation learning stage of ICVF. However, ICVF assumes a structured factorization of the form
$M^{\pi_z}_s(g) = \phi(s)^\top T(z)\psi(g)$,
which corresponds to the factorization $F(s,z) = \phi(s)^\top T(z)$ and $B(g) = \psi(g)$, where $T(z) \in \mathbb{R}^{d \times d}$ is a learned matrix. In practice, \cite{ghosh2023reinforcement} restricts intentions $z$ to goal states, effectively reducing the method to GCRL setting. Moreover, ICVF is not a zero-shot method: its representation learning stage (corresponding to our Stage 1) is used solely to learn state embeddings $\phi$, after which a separate offline RL algorithm is trained for each downstream reward function. In contrast, our approach learns a single value representation that directly supports inference of both high- and low-level policies without additional RL training. Finally, ICVF does not incorporate a hierarchical structure, as it does not model a high-level policy over intentions.

\textbf{Functional reward encodings (FRE) \citep{frans2024unsupervised}.} Another promising direction in zero-shot RL is learning compact, informative reward representations. FRE addresses this by learning reward embeddings via transformer-based variational autoencoders, trained using a variational lower bound on mutual information. Importantly, this representation learning stage uses only sampled state–reward pairs and does not involve value functions or RL objectives. In a second stage, the learned embeddings are used as conditioning variables for standard offline RL algorithms such as IQL to train policies for downstream tasks. This approach is largely orthogonal to our framework: while FRE focuses on learning expressive reward embeddings as a separate preprocessing step, we directly exploit learned successor measures to infer both reward structure and high- and low-level policies within a single framework. Moreover, our method is significantly simpler in terms of architecture, relying on standard function approximators such as MLPs, whereas FRE requires more expressive sequence models such as transformer-based autoencoders.

\textbf{Hilbert foundation policies (HILP) \citep{park2024foundation}.}
Similar to the previous methods, HILP follows a two-stage pipeline: first learning state representations used to construct reward embeddings, and then applying offline RL to train policies conditioned on these embeddings. Interestingly, HILP also employs expectile regression, but assumes that value functions can be represented as Euclidean distances in a latent Hilbert space. 
However, the learned value functions are restricted to goal-aligned settings, i.e., $V^{\pi_g}(s; r_g)$, where policy and reward correspond to the same goal. This structure enables a simple form of hierarchical planning by selecting midpoints $w$ that minimize distances $s \rightarrow w$ and $w \rightarrow g$ as subgoals. In practice, this is implemented by sampling $N$ candidate subgoals from the dataset and choosing the one closest to both the state and the goal. 
Since HILP does not explicitly leverage learned value functions or successor measures, its sample efficiency remains unclear: each iteration requires sampling a new reward embedding $z$ and training a separate policy $\pi_z$ using an offline RL algorithm. In contrast, our approach (and related FB-based methods) trains policies for all embeddings $\pi_z$ in parallel by sharing a unified successor representation.

\textbf{Regularizing FB models with value expectile regression \citep{zheng2025towards}.} 
Recent work by \cite{zheng2025towards} proposes several modifications to the FB algorithm aimed at improving robustness. In addition to architectural changes such as diffusion-based policy extraction and attention-based representation learning, they also introduce a mechanism to mitigate extrapolation error.
Specifically, they use expectile regression to learn state-value estimates $V^{\pi_z}(s; z)$ by regressing them toward $F(s,a,z)^\top z$. These learned values are then used as targets in the auxiliary FB loss, replacing the original term based on $Q^{\pi_z}(s', \pi_z(s'); z)$, which improves stability during training.
While our framework also learns state-value functions, there are key differences. First, \cite{zheng2025towards} retains an action-conditioned FB representation of the form $F(s,a,z)^\top B(g)$ and additionally learns state-value functions on top of it. In contrast, we only model state-value functions and do not explicitly parameterize Q-functions. Second, we do not employ the auxiliary FB loss; our representations are learned solely using the main objective.

\textbf{One-step FB learning \citep{zheng2026can}.}
The key idea in \cite{zheng2026can} is to avoid learning successor measures for every policy $\pi_z$ and instead focus on estimating it for the behavior policy alone. Policies conditioned on $z$ are then obtained via one-step policy improvement using the successor measure induced by the behavior policy.
This design simplifies representation learning by allowing direct use of next actions $a_{t+1}$ from the dataset, avoiding the need to bootstrap through $\pi_z(s_{t+1})$. The resulting policy is extracted as a softmax over actions with logits given by $F_\beta(s,a)^\top z$, where $\beta$ denotes the behavior policy, corresponding to $Q^{\pi_\beta}(s,a; r_z)$.
While this approach simplifies the FB pipeline, it is not directly applicable to our setting. Our framework requires estimates of $V^{\pi_w}(s; r)$ for intermediate policies, which cannot be recovered from representations restricted to the behavior policy alone.

%% file: sections/9_appB.tex
\section*{B Theoretical results}
\label{app:subsec_proofs}

We first provide a short derivation of Corollary \ref{cor:main_hierarchical_SM} from Theorem \ref{thm:SM_pi_to_pi} for completeness. Recall the definitions of the switching successor measure in \eqref{eq:SM_switch_agnostic} and the switching advantage in \eqref{eq:def_A_random_switch}:
\begin{align*}
    M^{\pi_{w} \to \pi}_s(s') =  \mathbb{E}\left[ M^{\pi_{w}|_k \to \pi}_s(s') \mathbf{1}\{ k=H_s^{\pi_w}(w)\} \right], \  A^{\pi_{w} \to \pi}_s(r)  =  \mathbb{E}\left[ A^{\pi_{w}|_k \to \pi}_s(r) \mathbf{1}\{ k=H_s^{\pi_w}(w)\} \right].
\end{align*}
Using the standard relation between value functions and successor measures, namely $A_{s}^{\pi_w|_k \to \pi}(r)=\langle M^{\pi_{w}|_k \to \pi}_s -M^{\pi}_s, r\rangle$, we obtain:
\begin{align*}
    A^{\pi_{w} \to \pi}_s(r)  &= \mathbb{E}\left[ \langle M^{\pi_{w}|_k \to \pi}_s -M^{\pi}_s, r\rangle \mathbf{1}\{ k=H_s^{\pi_w}(w)\} \right]
    \\ &= \mathbb{E}\left[ \langle M^{\pi_{w}|_k \to \pi}_s , r\rangle \mathbf{1}\{ k=H_s^{\pi_w}(w)\} \right] - \langle   M^{\pi}_s  , r\rangle
    \\ &=  \sum_{s'} \mathbb{E}\left[ M^{\pi_{w}|_k \to \pi}_s(s')  \mathbf{1}\{ k=H_s^{\pi_w}(w)\} \right] r(s') - V^\pi(s;r) \\ &=
    \sum_{s'} M^{\pi_{w} \to \pi}_s(s') r(s') - V^\pi(s;r)
\end{align*}
Applying Theorem \ref{thm:SM_pi_to_pi} to the first term yields:
\begin{align*}
     \sum_{s'} M^{\pi_{w} \to \pi}_s(s')r(s') &= \sum_{s'}M^{\pi_w}_s(s')r(s') + \frac{M^{\pi_w}_s(w)}{M^{\pi_w}_w(w)} \Big( \sum_{s'}M^{\pi}_w(s')r(s') - \sum_{s'} M^{\pi_w}_w(s')r(s') \Big)\\
     &= V^{\pi_w}(s;r) + \frac{M^{\pi_w}_s(w)}{M^{\pi_w}_w(w)} \Big( V^{\pi}(w;r) - V^{\pi_w}(w;r) \Big)
\end{align*}
which completes the proof of Corollary \ref{cor:main_hierarchical_SM}.

Before proving Theorem \ref{thm:SM_pi_to_pi}, we recall the definition of hitting time from Section \ref{sec:preliminaries}: $H^\pi_s(w) = \min \{ t \ge 0 : s_t=w \vert s_0 =s, \pi\}$. We use the following result:
\begin{lemma}
    [Lemma B.1 in \cite{myers2024learning}] We have for any $s,w \in \mathcal{S}$ and any policy $\pi$:\\[0.2cm]
    \hspace*{0.32\textwidth}
    $ \displaystyle 
        M^{\pi}_s(w) = \mathbb{E}\!\left[ \gamma^{H_s^{\pi}(w)} \right] M^{\pi}_w(w)
    $
    \label{lemma:Myers_hitting_time}
\end{lemma}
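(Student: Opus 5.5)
The plan is to decompose the successor measure $M^\pi_s(w)$ at the first visit to $w$ and apply the strong Markov property. Write $H = H^\pi_s(w)$. Since $s_t \neq w$ for every $t<H$, all indicator contributions $\mathbf{1}\{s_t=w\}$ before the hitting time vanish. Hence
\begin{align*}
M^\pi_s(w) = \mathbb{E}\Big[\sum_{t\ge H}\gamma^t \mathbf{1}\{s_t=w\}\Big] = \mathbb{E}\Big[\gamma^{H}\sum_{u\ge 0}\gamma^u \mathbf{1}\{s_{H+u}=w\}\Big],
\end{align*}
where on the event $\{H=\infty\}$ the sum is empty, consistent with the convention $\gamma^\infty=0$.

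Next we condition on $\mathcal{F}_H$, the history up to the stopping time $H$. Because $\pi$ is a stationary Markov policy, the strong Markov property gives that, on the event $\{H<\infty\}$, the shifted process $(s_{H+u})_{u\ge0}$ has the law of the chain under $\pi$ started from $s_{H}=w$. This shifted process is also independent of $\mathcal{F}_H$ given $s_H$. Consequently, the conditional expectation of the inner sum equals $M^\pi_w(w)$, which is a deterministic constant. Pulling out $\gamma^H$, which is $\mathcal{F}_H$-measurable, yields $M^\pi_s(w)=\mathbb{E}[\gamma^H]\,M^\pi_w(w)$. The interchange of sums and expectations is justified by Tonelli, since every term is nonnegative and bounded by $1/(1-\gamma)$.

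The case $s=w$ is trivial: there $H=0$. The only delicate point is the strong Markov step at a random time. Since the state space is discrete, I would handle it by splitting on the events $\{H=k\}$ for $k\ge 0$. On each such event, the ordinary Markov property at the fixed time $k$ applies, which gives $\mathbb{E}[\gamma^k \mathbf{1}\{H=k\}]\,M^\pi_w(w)$. Summing over $k$ then completes the argument without needing the general strong Markov machinery. A remaining subtlety is the convention in the paper that $s_0=s, a_0=a$ for $M_{s,a}$. Here we use the state version, so $a_0\sim\pi(\cdot|s)$ and no issue arises.
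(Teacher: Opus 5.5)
Your argument is correct: the indicators vanish before the hitting time, and splitting on $\{H=k\}\in\mathcal{F}_k$ with the ordinary Markov property at the fixed time $k$ (valid for a stationary policy $\pi(\cdot\mid s)$, which is what the paper means by a policy) plus Tonelli gives $\sum_k \mathbb{P}(H=k)\,\gamma^k M^\pi_w(w)=\mathbb{E}[\gamma^{H}]\,M^\pi_w(w)$, with $\gamma^\infty=0$ covering paths that never hit $w$. The paper does not reprove this lemma (it cites Myers et al.), but your decomposition is the same one it uses in the proof of Theorem~\ref{thm:SM_pi_to_pi} to get $\sum_t\gamma^t\,\mathbb{P}_{\pi_w}(s_t=s',\,H_s^{\pi_w}(w)\le t\mid s_0=s)=\mathbb{E}[\gamma^{H_s^{\pi_w}(w)}]\,M^{\pi_w}_w(s')$; the lemma is the case $s'=w$ of that computation, because the pre-hitting part is zero.
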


It is also useful to contrast Theorem \ref{thm:SM_pi_to_pi} with prior work. In particular, Lemma 3.3 of \cite{myers2024learning} shows that:
\begin{align*}
    M^{\pi_{w} \to \pi}_s(s') \geq   \frac{M^{\pi_w}_s(w)}{M^{\pi_w}_w(w)} M^{\pi}_w(s').
\end{align*}
This provides a lower bound on the switching successor measure, where the term on the right hand side captures discounted visitation after reaching the subgoal $w$. In contrast, Theorem \ref{thm:SM_pi_to_pi} establishes an equality, expressing the switching successor measure exactly in terms of standard successor measures.

\subsection*{B.1 Proof of Theorem \ref{thm:SM_pi_to_pi}}
\begin{proof}
    We construct a (generally non-Markovian) policy, denoted $\overline{\pi}$, that follows the subgoal policy $\pi_w$ until the state $w$ is reached, and then switches to $\pi$ afterwards:
    \begin{align*}
        \overline{\pi}(a_t\vert s_t) = \begin{cases}
            \pi(a_t \vert s_t), \quad w\in \{s_0, s_1,\dots,s_t\}\\
            \pi_w(a_t\vert s_t),\quad \mathrm{otherwise}.
        \end{cases}
    \end{align*}
    Let $H_s^{\overline{\pi}}(w)$ denote the hitting time of state $w$ under policy $\overline{\pi}$ starting from $s$. We can assume the hitting time is finite. If $w$ is never reached, the result holds trivially, since in that case $\overline{\pi}\equiv \pi_w$ and $M^{\pi_w}_s(w)=0$. By construction, $H_s^{\overline{\pi}}(w) = H_s^{\pi_w}(w)$ almost surely for all $s$. 

    Using the law of total probability and definition \eqref{eq:SM_switch_agnostic} of switching successor measure, we have:
    \begin{align}
        M^{ \overline{\pi} }_s(s') =  
        \sum_{k=0}^\infty \mathbb{P}(H_s^{\pi_w}(w)=k) \mathbb{E} \Big[   M^{\pi_{w}|_k \to \pi}_s(s')  \Big\vert H_s^{\pi_w}(w)=k \Big].
        \label{eq:M_pi_switch_proof}
    \end{align}
    We now analyze the inner expectation:
    \begin{align}
        \mathbb{E} &\Big[   M^{\pi_{w}|_k \to \pi}_s(s')  \Big\vert H_s^{\pi_w}(w)=k \Big] = \sum_{t=0}^{\infty} \gamma^{t} \mathbb{P}_{\overline{\pi}}(s_t=s'\vert s_0=s, H_s^{\pi_w}(w)=k ) 
        \nonumber \\ &=
        \sum_{t=0}^{k-1} \gamma^{t} \mathbb{P}_{\pi_w}(s_t=s'\vert s_0=s, H_s^{\pi_w}(w)=k ) +   \sum_{t=k}^{\infty} \gamma^{t} \mathbb{P}_{\pi}(s_t=s'\vert s_0=s, H_s^{\pi_w}(w)=k )
        \nonumber \\ &=
        \sum_{t=0}^{k-1} \gamma^{t} \mathbb{P}_{\pi_{w}}(s_t=s'\vert s_0=s, H_s^{\pi_w}(w)=k ) + \gamma^{k} \underbrace{\sum_{t=0}^{\infty} \gamma^{t} \mathbb{P}_{\pi}(s_{t+k}=s'\vert s_k=w)}_{M^\pi_w(s')}.
        \label{eq:decomp_Mpi_proof}
    \end{align}
    where for the first term we use that before reaching $w$ the agent follows $\pi_w$, and for the second term we use the stationary property of MDPs.
    Now, we consider only the part in \eqref{eq:M_pi_switch_proof} corresponding to the second term:
    \begin{align*}
         \sum_{k=0}^\infty \mathbb{P}(H_s^{\pi_w}(w)=k)  \gamma^{k} M_w^\pi(s') = \mathbb{E}\!\left[ \gamma^{H_s^{\pi_w}(w)} \right] \, M^{\pi}_w(s')
    \end{align*}
    This closely resembles the result in the proof of Lemma 3.3 in \cite{myers2024learning}. Next, we consider the first term corresponding to $t< k$ in \eqref{eq:decomp_Mpi_proof}. We have:
    \begin{align*}
    &\sum_{k=0}^{\infty} \mathbb{P}\!\left(H_s^{\pi_w}(w) = k\right)
    \sum_{t=0}^{k-1}
    \gamma^t
    \mathbb{P}_{\pi_w}(s_t = s' \mid s_0 = s, H_s^{\pi_w}(w) = k)
    \\ &=
    \sum_{k=0}^{\infty} 
    \sum_{t=0}^{k-1}
    \gamma^t
    \mathbb{P}_{\pi_w}(s_t = s', H_s^{\pi_w}(w)=k \mid s_0 = s)
     =
    \sum_{t=0}^{\infty}
    \gamma^t
    \mathbb{P}_{\pi_w}(s_t = s', H_s^{\pi_w}(w) > t \mid s_0 = s) = (\star ),
    \end{align*}
    where we apply Fubini's theorem. We then decompose the joint probability:
    \begin{align*}
        \mathbb{P}_{\pi_w}(s_t = s', H_s^{\pi_w}(w) > t \mid s_0 = s)  = \mathbb{P}_{\pi_w}(s_t = s' \mid s_0 = s)  - \mathbb{P}_{\pi_w}(s_t = s', H_s^{\pi_w}(w) \leq t \mid s_0 = s).
    \end{align*}
    Summing over time steps gives:
    \begin{align*}
    (\star)
    &= \sum_{t=0}^\infty \gamma^t  \mathbb{P}_{\pi_w}(s_t = s' \mid s_0 = s) - \sum_{t=0}^\infty \gamma^t \mathbb{P}_{\pi_w}(s_t = s', H_s^{\pi_w}(w) \leq t \mid s_0 = s)\\
    &= M^{\pi_w}_s(s') - \mathbb{E}\!\left[ \gamma^{H_s^{\pi_w}(w)} \right]
        \, M^{\pi_w}_w(s')
    \end{align*}
    where the second term was treated analogously as in equations leading to \eqref{eq:decomp_Mpi_proof}, since the hitting time precedes the current time. 

    Next, we invoke Lemma \ref{lemma:Myers_hitting_time} to obtain $\mathbb{E}\!\left[ \gamma^{H_s^{\pi_w}(w)} \right] = M^{\pi_w}_s(w)/ M^{\pi_w}_w(w)$ for any $w$ for which $M^{\pi_w}_w(w)>0$. Note that $M^{\pi_w}_w(w)\geq 1$ under the standard definition of successor measures including the $t=0$ term, and is therefore strictly positive. Summing the two components in \eqref{eq:decomp_Mpi_proof} gives:
    \begin{align*}
        M^{\overline{\pi}}_s(s') = M^{\pi_w}_s(s') - \frac{M^{\pi_w}_s(w)}{M^{\pi_w}_w(w)} M^{\pi_w}_w(s') + \frac{M^{\pi_w}_s(w)}{M^{\pi_w}_w(w)} M^{\pi}_w(s') 
    \end{align*}
    Finally, identifying $\overline{\pi}$ with $\pi_w\to \pi$ completes the proof.
\end{proof}

%% file: sections/10_appC.tex
\section*{C Experimental details}

\subsection*{C.1 Environment setups and datasets}

\textbf{Discrete environment.} 

We provide additional details on the environment and experiments shown in Figures \ref{fig:bigfigure} and \ref{fig:bigfigure_learning_based}. We consider a discrete variant of the Medium maze environment from OGBench, consisting of 104 states and 5 actions (stay or move in one of the four cardinal directions), with deterministic transitions and discount factor $\gamma=0.98$. The reward function used in these experiments is illustrated in Figure \ref{fig:bigfigure}(a).

In Figure \ref{fig:bigfigure}, we visualize quantities computed exactly from the known dynamics. In particular, panel (b) shows the exact value function under the optimal policy for this reward function. Since the environment is small and fully discrete, these quantities can be computed exactly. This also allows us to empirically validate the results of Theorem \ref{thm:SM_pi_to_pi} and Corollary \ref{cor:main_hierarchical_SM} in this setting.

For Figure \ref{fig:bigfigure_learning_based}, we use the same environment but replace exact computation with a learning-based approach. Specifically, we apply the representation learning component of FB $\pi$-Switch to learn successor measures and visualize the learned quantities. Evaluation in this setting is qualitative, based on comparisons between learned quantities and their exact counterparts.

Given the low-dimensional state and action spaces, we use a latent dimension of 24, which is sufficient for this setting. We construct an offline dataset of 100,000 trajectories, each with a maximum length of 100 steps, collected using a uniformly random policy. Training is performed for 250 epochs, each consisting of 1000 gradient steps, with a batch size of 32. We use the Adam optimizer with learning rate $10^{-3}$. During training, we gradually decrease the proportion of latents derived from states relative to those sampled uniformly from a sphere. These experiments are implemented in PyTorch and executed on CPU.

\textbf{Continuous environments.} 

In Section \ref{sec:experiments}, we use AntMaze environments from OGBench \citep{park2024ogbench}. Specifically, we consider the Medium, Large, Giant, and Teleport variants, which share the same 29-dimensional state space and 8-dimensional action space, with actions bounded in $[-1,1]$, but differ in maze size and task difficulty. We use raw observations without normalization.

We use the \texttt{antmaze-giant-navigate-v0} dataset from OGBench. This dataset is generated by a noisy expert policy that repeatedly navigates to randomly sampled goals, resulting in diverse trajectories that cover the state space, and is commonly used as a standard benchmark dataset \citep{park2024ogbench,zheng2026can}. For the Medium, Large, and Teleport environments, episodes have a maximum length of 1000 steps, and the dataset consists of 1000 episodes (approximately 1M transitions). For the Giant environment, we use 500 episodes of length 2000 (also approximately 1M transitions). In our experiments, episodes are truncated at $1000$ steps. In goal-conditioned evaluations, episodes terminate early upon reaching a neighborhood of the goal, defined as entering a radius of 0.5 in the (x,y) plane around the goal.

For the general reward setting, we design task-specific reward functions as shown in Figure~\ref{fig:rewards_antmaze}. These tasks are defined over a discretized grid of the maze, with rewards taking values in $\{0,+1,-1,+5,+10\}$. Typically, each task includes a high-reward region, along with additional regions of moderately positive or negative reward. A reward is assigned whenever the agent enters a grid cell with a non-zero reward. We restrict this setting to the Large and Giant environments. The Medium environment is excluded due to its small size, which would lead to overly dense rewards when multiple reward regions are introduced. The Teleport environment is also omitted, as the resulting behaviors are less interpretable.

\begin{figure}[!htbp]
    \centering
    \begin{subfigure}[t]{0.4\textwidth}
        \centering
        \includegraphics[width=0.98\linewidth]{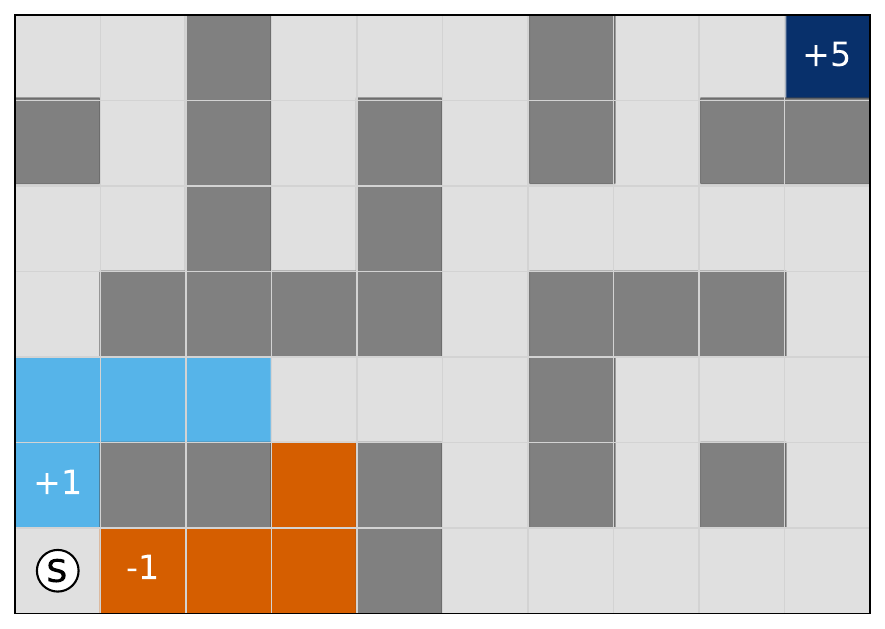}
    \end{subfigure}
    \begin{subfigure}[t]{0.58\textwidth}
        \centering
        \includegraphics[width=0.73\linewidth]{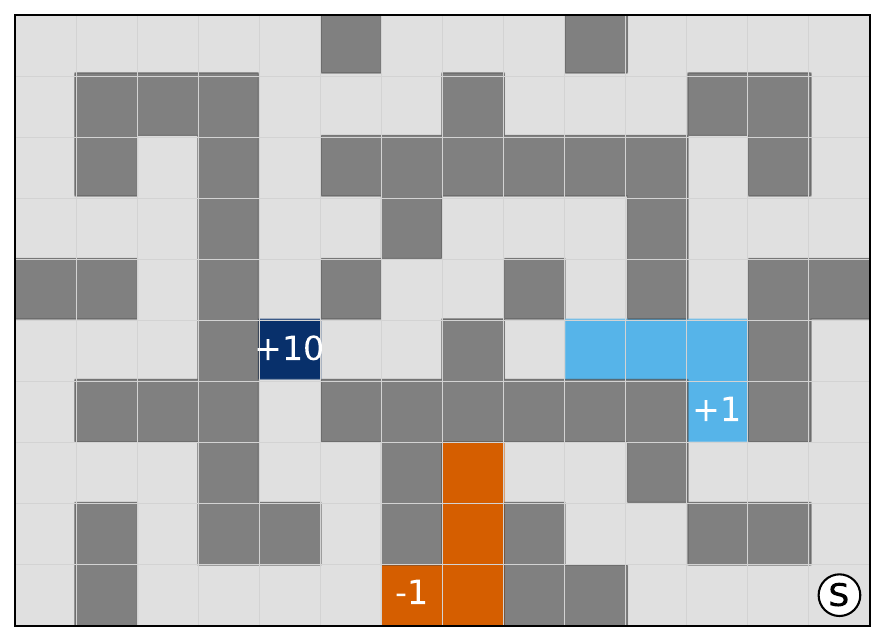}
    \end{subfigure}

    \begin{subfigure}[t]{0.4\textwidth}
        \centering
        \includegraphics[width=0.98\linewidth]{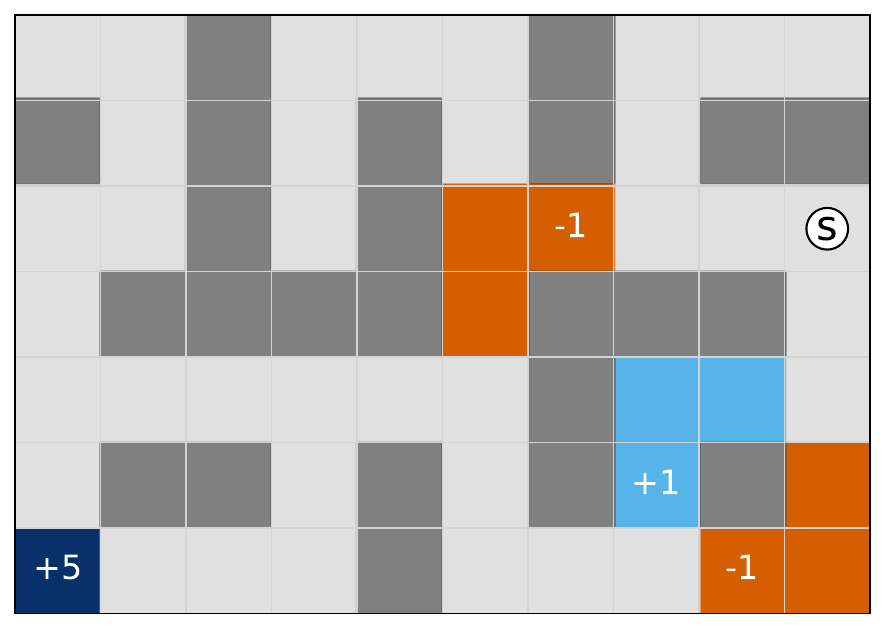}
    \end{subfigure}
    \begin{subfigure}[t]{0.58\textwidth}
        \centering
        \includegraphics[width=0.73\linewidth]{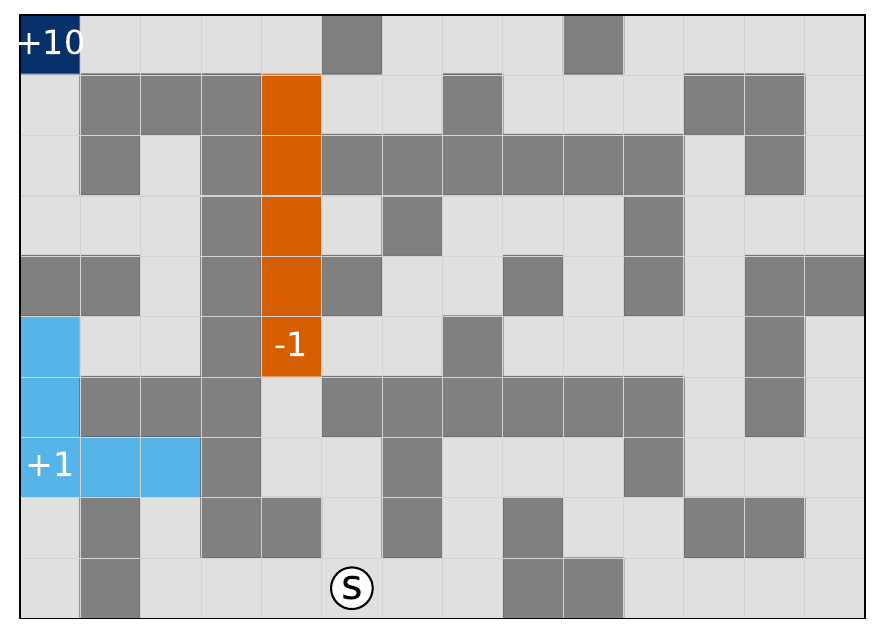}
    \end{subfigure}

    \begin{subfigure}[t]{0.4\textwidth}
        \centering
        \includegraphics[width=0.98\linewidth]{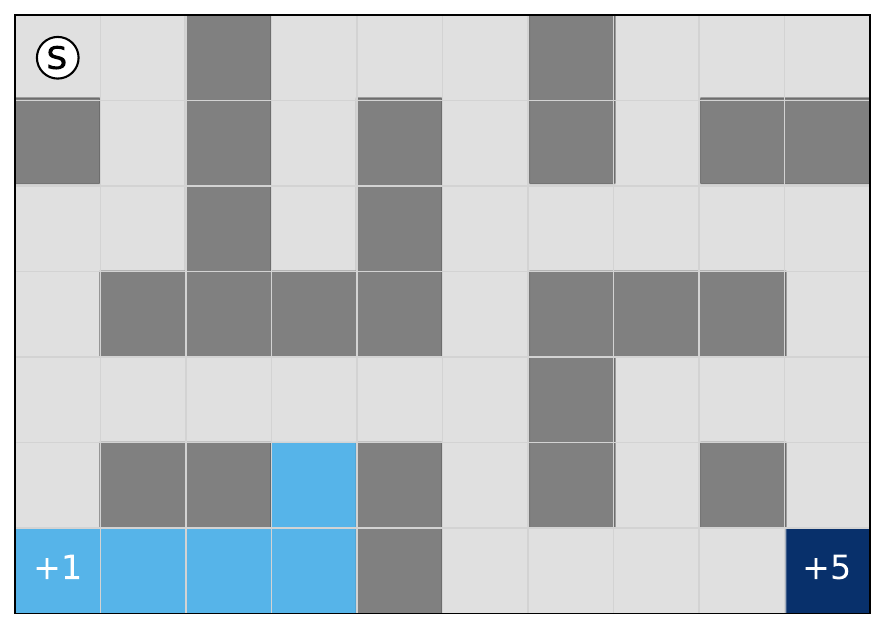}
    \end{subfigure}
    \begin{subfigure}[t]{0.58\textwidth}
        \centering
        \includegraphics[width=0.73\linewidth]{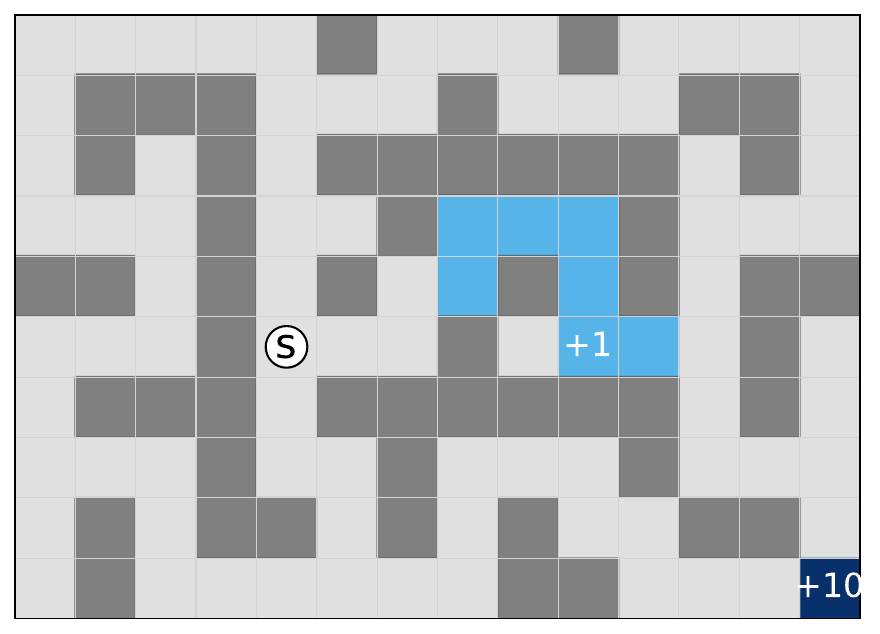}
    \end{subfigure}

    \begin{subfigure}[t]{0.4\textwidth}
        \centering
        \includegraphics[width=0.98\linewidth]{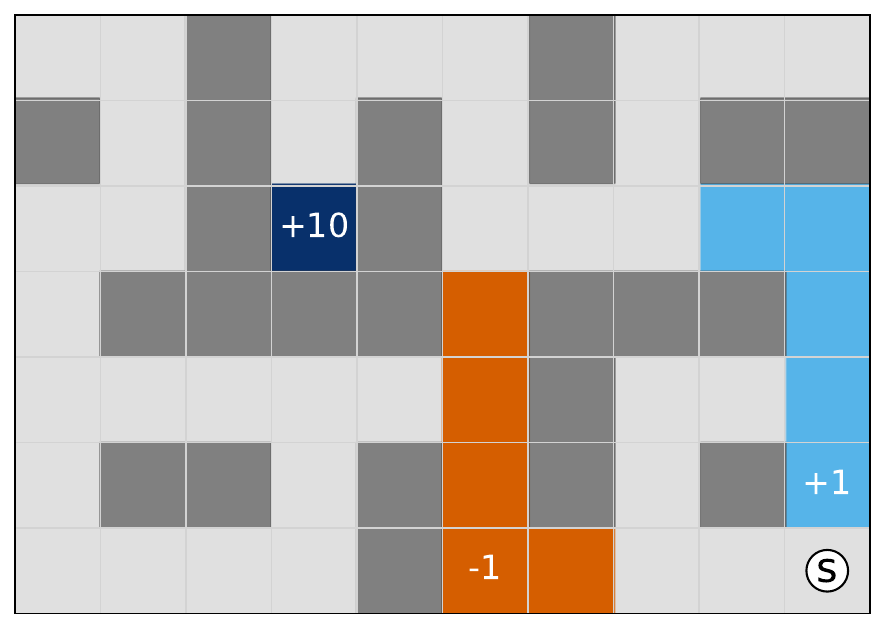}
    \end{subfigure}
    \begin{subfigure}[t]{0.58\textwidth}
        \centering
        \includegraphics[width=0.73\linewidth]{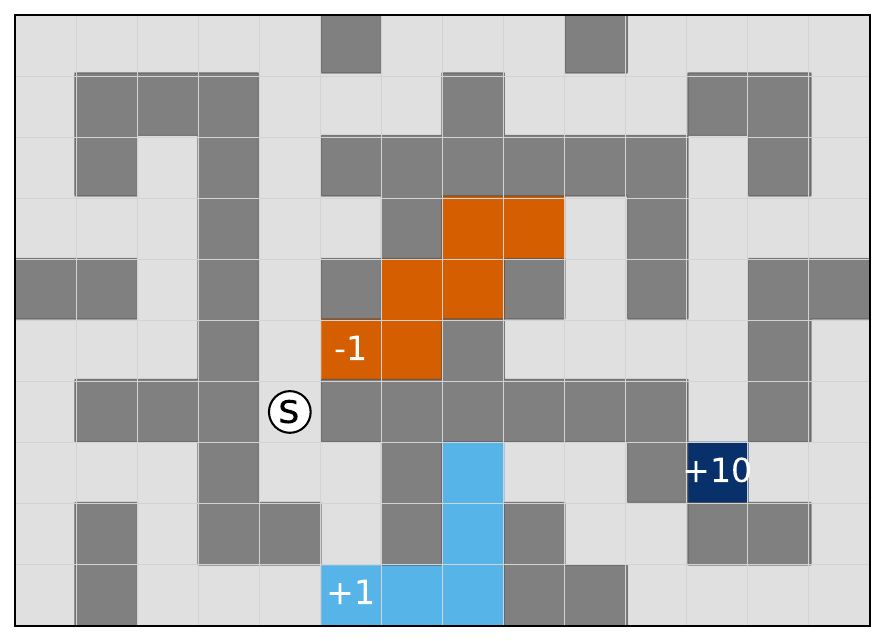}
    \end{subfigure}
    \begin{subfigure}[t]{0.4\textwidth}
        \centering
        \includegraphics[width=0.98\linewidth]{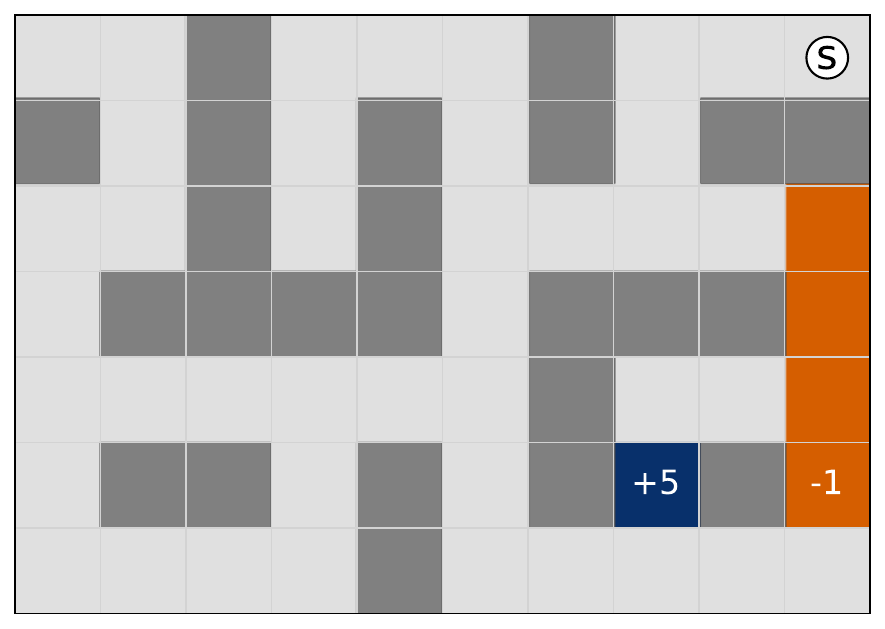}
    \end{subfigure}
    \begin{subfigure}[t]{0.58\textwidth}
        \centering
        \includegraphics[width=0.73\linewidth]{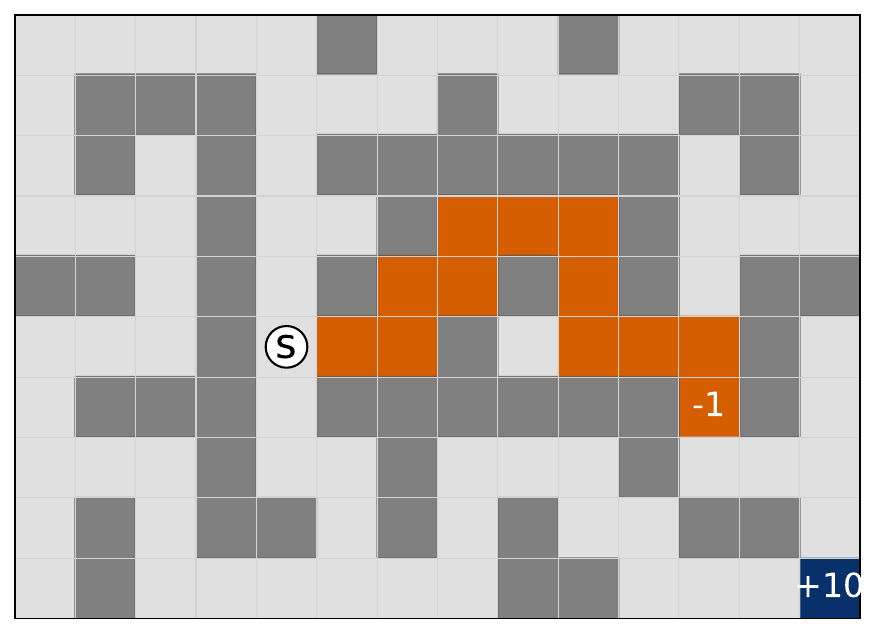}
    \end{subfigure}
    \caption{Tasks 1–5 (top to bottom) for Large AntMaze (left) and Giant AntMaze (right). Initial states are indicated by a white circle labeled “s”. Rewards take values in $\{0,+1,-1,+5,+10\}$.}
    \label{fig:rewards_antmaze}
\end{figure}

\subsection*{C.2 Model architecture}
We use fully connected neural networks for all model components, as observations in AntMaze are low-dimensional state vectors. All networks are implemented as multi-layer perceptrons (MLPs) with hidden dimensions specified per method (typically 3–4 layers of width 512) and GELU activations. Unless stated otherwise, layer normalization is not applied.

\textbf{State and goal encoding.}
We do not use a separate encoder module; instead, state and goal inputs are concatenated and fed directly into the networks. For goal-conditioned components, goals are either raw states (in GCRL baselines) or latent representations inferred via the backward representation network. Latent vectors are normalized to lie on a sphere of radius $\sqrt{d}$, where $d$ is the latent dimension.

\textbf{Representation networks.}
We use forward and backward representation networks, both parameterized as MLPs. The backward representation maps states to latent vectors, while the forward representation maps state–latent (and optionally state–action–latent) tuples to latent predictions. The forward representation uses an ensemble of size two (double-critic setup), while the backward representation uses a single network. In the hierarchical FB variant, an additional marginal forward representation network is introduced and trained to match the action-marginalized version of the forward representation.

\textbf{Actor networks.}
Policies are modeled as Gaussian distributions with diagonal covariance. The mean is produced by an MLP, while the standard deviation is state-independent and fixed (constant across dimensions). For low-level policies, actions are optionally passed through a $\tanh$ squashing function to enforce bounds in $[-1, 1]$. The high-level policy outputs latent subgoals and follows the same Gaussian parameterization, without squashing.

\textbf{Parameter sharing and freezing.}
In all hierarchical variants, the forward representation, backward representation, and low-level actor are initialized from pretrained models and kept frozen during training. Only the newly introduced components (e.g., high-level actor and marginal forward representation) are updated. This is implemented via parameter masking in the optimizer.

\textbf{Critic structure.}
The forward representation serves as a latent-space critic. We use a double critic architecture (ensemble of size two) to improve stability, analogous to double Q-learning. The outputs are vector-valued (dimension equal to the latent size), and scalar values are obtained via inner products with reward latents.

\subsection*{C.3 Training protocol}

\textbf{General training setup.}
We build on implementations of baseline algorithms from \cite{park2024ogbench, zheng2026can}. Training is performed fully offline on fixed datasets, without any environment interaction during learning. All methods use pre-collected trajectory datasets provided by OGBench, and we do not use any form of online data collection or replay buffer. Mini-batches are sampled uniformly from the dataset for gradient-based optimization. We use double target networks to alleviate maximization bias. The two value estimates are aggregated in a model-dependent manner, either by taking their mean or a minimum over the ensemble, depending on the specific algorithm.

\textbf{Terminal state masking.}
In standard goal-conditioned reinforcement learning \citep{ghosh2023reinforcement, park2023hiql}, one typically estimates quantities of the form $M_s(g)$, which implicitly correspond to $M_s^{\pi_g}(g)$. In this case, the queried state coincides with the goal $(s'=g)$, so defining rewards and terminal masking with respect to $g$ is consistent. However, in a more general setting when learning $M_s^{\pi_g}(s')$, applying masking based on either the goal $g$ or the queried state $s'$ leads to a mismatch: the policy is conditioned on $g$, while the Bellman recursion and termination are defined with respect to a different state variable. As a result, the induced recursion no longer corresponds to a consistent successor measure under $\pi_g$.

Thus, in our setting we do not use terminal-state masking. This design choice is motivated by our objective of supporting general reward functions beyond goal-conditioned settings, where termination is not tied to reaching a goal state. As a result, performance is lower than in standard goal-conditioned benchmarks; for example, OGBench reports HIQL success rates of $96$, $91$, $65$, and $42$ percent on AntMaze Medium, Large, Giant, and Teleport, respectively, compared to our HIQL agents, which achieve $93$, $73$, $5$, and $42$ percent.

\textbf{Reward transformations.}
We do not apply any reward transformations in our experiments such as replacing the sparse indicator reward at the goal with a $-1$ per-step reward and a $0$ reward at the goal. This choice is consistent with our goal embedding framework: in the goal-conditioned setting, the reward embedding of reaching a goal $g$ corresponds directly to $B(g)$, whereas dense reward formulations would require aggregating contributions over all non-goal states, which complicates the representation. Finally, we set the reward temperature to $0$, ensuring a consistent treatment of rewards and allowing negative rewards to be properly accounted for by the agents.

\textbf{Baseline implementations.}
We use baseline implementations from \cite{park2024ogbench} and \cite {zheng2026can}. All baselines are evaluated using the same offline dataset, training budget and evaluation protocol as our method.

\emph{HIQL and its variants:} We follow \cite{park2024ogbench} and keep all hyperparameters unchanged, including \texttt{subgoal\_steps}=25 and \texttt{rep\_dim}=10. Value goals are sampled with mixing proportions (current state, future state in trajectory, random state) = (0.2, 0.5, 0.3), while actor goals use (0, 1.0, 0). When training HIQL without high level policy ($-\pi^h$) in the ablation study in Table \ref{tab:full_appendix_results_pointmaze}, the actor is trained using the same goal sampling distribution as the value function, i.e., (0.2, 0.5, 0.3). For the bilinear variant, we use an inner-product value function of the form $\phi(s,g)^\top \psi(g)$ with embedding dimension 128. The embedding $\psi(g)$ is normalized to unit norm and, no orthonormality regularization is applied. For the variant with intentions, with probability of 0.5 the goal and intention are set to the same state, and we learn a value function $V(s,g,s')$. Intention goals follow the same sampling scheme as value goals. For direction error, we take the minimum over the two value estimates, while for TD-error we use their average.

\emph{ICVF:} We follow \cite{zheng2026can} and use their default hyperparameters. Intention goals are sampled using the same mixture as value goals, with distribution (0, 0.625, 0.375) together with geometric sampling for future goals. In actor training, we include an additional behavior cloning (BC) loss with coefficient $\alpha=0.3$ for Medium and Teleport, and $\alpha=3.0$ for Large and Giant.

\emph{OneStep FB:} We use BC coefficient $\alpha=0.1$ for Teleport and $\alpha=0.03$ for the remaining environments. We set the reward temperature to $0$ instead of $10.0$. All other hyperparameters follow \cite{zheng2026can}.

\emph{FB:} We follow \cite{zheng2026can} but introduce several modifications for consistency in our setting. We remove actions from the backward representation, remove the $(1-\gamma)$ term in the diagonal loss to match the original FB formulation, set the reward temperature to $0$, use an orthonormality coefficient of $1.0$, set BC coefficient $\alpha=0.03$, and normalize the Q-loss used in actor training. 

\textbf{Marginalization loss of FB $+\pi^h$.}
As discussed in Section \ref{subsec:post_tuning_FB}, the standard FB formulation learns a forward representation of the form $F(s,a,z)$, which explicitly depends on actions. To enable the use of the switching advantage approximation in \eqref{eq:def_A_FB}, we introduce a marginalized forward representation $F'(s,z)$ by averaging over actions. This is enforced via the following objective:
\begin{align*}
    \mathcal{L}_{\text{marg}}(F') =
    \mathbb{E}_{s,z} \left[
    \left(
    F'(s,z) - \frac{1}{N}\sum_{i=1}^N F(s,a_i,z)
    \right)^2
    \right],
    \quad a_i \sim \pi_z(\cdot \mid s)
\end{align*}
where N=8 denotes the number of actions sampled from the policy. For constructing the latent variables used in this loss, we set \texttt{latent\_mix\_prob}=0.5, corresponding to an equal probability of sampling either embeddings of permuted observations or random vectors drawn uniformly from the surface of a sphere with radius $\sqrt{d}$. The resulting marginalized representation is then used to train the high-level policy according to the objective in \eqref{eq:L_plan}.

\textbf{Latent space sampling.}
For the high-level actor loss in FB $\pi$-Switch and FB $+\pi^h$, subgoals (high-actor targets) are obtained via geometric sampling over indices of future states in the trajectory conditioned on the current state, and then mapped into the latent space via backward networks. High-actor goals are sampled uniformly from future states in the trajectory (from the next state until episode termination), mixed with isotropic Gaussian samples that are normalized to lie on a sphere, with mixing ratio \texttt{critic\_latent\_mix\_prob}=0.5.

The policy is trained using a clipped exponential advantage weighting scheme, where advantages are passed through an exponential function with inverse temperature \texttt{high\_alpha}, and clipped at 5.0 for numerical stability. The corresponding hyperparameter values are reported in Tables \ref{tab:hyperparameters_hilswitch} and \ref{tab:hyperparameters_hfb}.

\subsection*{C.4 Hyperparameters}
In Table \ref{tab:hyperparameters}, we report the hyperparameters shared across all methods. In Tables \ref{tab:hyperparameters_hilswitch} and \ref{tab:hyperparameters_hfb}, we report all hyperparameters of our hierarchical methods, FB $\pi$-Switch and FB $+\pi^h$, to facilitate reproducibility.

\begin{table}[ht]
    \centering
    \small
    \begin{tabular}{l c l}
    \toprule
    \textbf{Hyperparameter} & \textbf{Value} & \textbf{Description} \\
    \midrule
    \texttt{num\_timesteps} & 1M & total training steps \\
    \texttt{episode\_length} & 1,000  & maximum steps per episode \\
    \texttt{discount} & 0.995 (giant), 0.99 (other) &  discount factor $\gamma$\\
    \texttt{batch\_size} & 1024 & minibatch size \\
    \texttt{latent\_dim} & 128 (other), 10 (HIQL)  &  latent representation size \\
    \texttt{gc\_negative} & False &
        \begin{tabular}[t]{@{}l@{}}
        choice of goal reward parameterization:\\
        $\{0 \text{ at goal}, -1 \text{ otherwise}\}$ (True) or \\
        $\{1 \text{ at goal}, 0 \text{ otherwise}\}$ (False)
        \end{tabular} \\
    \bottomrule\\
    \end{tabular}
    \caption{General/shared hyperparameters across agents used in our experiments.}
    \label{tab:hyperparameters}
    \vspace{-10pt}
\end{table}

\begin{table}[ht]
\centering
\small
\begin{tabular}{l c l}
\toprule
\textbf{Hyperparameter} & \textbf{Value} & \textbf{Description} \\
\midrule
    \texttt{lr} & $3 \times 10^{-4}$ & learning rate \\
    \texttt{actor\_hidden\_dims} & (512, 512, 512) & actor network hidden dimensions \\
    \texttt{forward\_repr\_hidden\_dims} & (512, 512, 512) & F-representation network hidden dim. \\
    \texttt{backward\_repr\_hidden\_dims} & (512, 512, 512) & B-representation network hidden dim. \\
    \texttt{actor\_layer\_norm} & False & use layer normalization for actors \\
    \texttt{forward\_repr\_layer\_norm} & True & use layer normalization for F-representations \\
    \texttt{backward\_repr\_layer\_norm} & True & use layer normalization for B-representations \\
    \texttt{activation} & gelu & activation function \\
    \texttt{tau} & 0.005 & target network update rate \\
    \texttt{alpha} & 3.0 & AWR temperature for low-level policy \\
    \texttt{const\_std} & True & use constant std in actor \\
    \texttt{reward\_temperature} & 0.0 & reward weighting temperature \\
    \texttt{num\_zero\_shot\_samples} & 100,000 & number of zero-shot samples for latent inference \\
    \texttt{expectile} & 0.7 & expectile parameter  \\
    \midrule
    \texttt{value\_p\_curgoal} & 0.2 & prob. of using current state as value goal \\
    \texttt{value\_p\_trajgoal} & 0.5 & prob. of using future state in traj. as value goal \\
    \texttt{value\_p\_randomgoal} & 0.3 & prob. of using random state as value goal \\
    \texttt{value\_geom\_sample} & True & use geometric sampling for value goals \\
    \texttt{low\_actor\_p\_curgoal} & 0.2 & prob. of using current state as low-actor goal \\
    \texttt{low\_actor\_p\_trajgoal} & 0.5 & prob. of using future state in traj. as low-actor goal \\
    \texttt{low\_actor\_p\_randomgoal} & 0.3 & prob. of using random state as low-actor goal \\
    \texttt{high\_actor\_p\_curgoal} & 0.0 & prob. of using current state as high-actor goal \\
    \texttt{high\_actor\_p\_trajgoal} & 1.0 & prob. of using future state in traj. as high-actor goal \\
    \texttt{high\_actor\_p\_randomgoal} & 0.0 & prob. of using random state as high-actor goal \\
    \texttt{actor\_geom\_sample} & False & use geometric sampling for actor goals \\
    \midrule
    \texttt{normalize\_latent} & True & normalize backward representations \\
    \texttt{orthonorm\_coeff} & $1 \times 10^{-4}$ & orthonormalization coefficient for B-representations \\
    \texttt{actor\_latent\_mix\_prob} & 0.5 & prob. of sampling unif. random actor latent \\
    \texttt{critic\_latent\_mix\_prob} & 0.5 & prob. of sampling unif. random critic latent \\
    \texttt{high\_alpha} & 0.1 & AWR temperature for high-level policy  \\
    \texttt{high\_adv\_threshold} & 5.0 & threshold for adv. clipping before exponentiation \\
    \bottomrule\\
    \end{tabular}
    \caption{Hyperparameters of the FB $\pi$-Switch algorithm. Abbreviations: prob. = probability, traj. = trajectory, std. = standard deviation, unif. = uniformly.}
    \vspace{-25pt}
    \label{tab:hyperparameters_hilswitch}
\end{table}

\begin{table}[ht]
\centering
\small
\begin{tabular}{l c l}
\toprule
\textbf{Hyperparameter} & \textbf{Value} & \textbf{Description} \\
\midrule
    \texttt{lr} & $1 \times 10^{-4}$ & learning rate \\
    \texttt{actor\_hidden\_dims} & (512, 512, 512, 512) & actor network hidden dimensions \\
    \texttt{forward\_repr\_hidden\_dims} & (512, 512, 512, 512) & F-representation network hidden dim. \\
    \texttt{backward\_repr\_hidden\_dims} & (512, 512, 512, 512) & B-representation network hidden dim. \\
    \texttt{actor\_layer\_norm} & False & use layer normalization for actors \\
    \texttt{forward\_repr\_layer\_norm} & False & use layer normalization for F-representations \\
    \texttt{backward\_repr\_layer\_norm} & False & use layer normalization for B-representations \\
    \texttt{activation} & gelu & activation function \\
    \texttt{tau} & 0.005 & target network update rate \\
    \texttt{alpha} & 0.03 & behavioral cloning coefficient \\
    \texttt{const\_std} & True & use constant std in actor \\
    \texttt{reward\_temperature} & 0.0 & reward weighting temperature \\
    \texttt{num\_zero\_shot\_samples} & 100,000 & number of zero-shot samples for latent inference \\
    \texttt{tanh\_squash} & True & use tanh squashing for actor outputs \\
    \texttt{normalize\_q\_loss} & True & normalize Q loss \\
    \texttt{repr\_agg} & mean & aggregation method for representations \\
    \midrule
    \texttt{high\_actor\_p\_curgoal} & 0.0 & prob. of using current state as high-actor goal \\
    \texttt{high\_actor\_p\_trajgoal} & 1.0 & prob. of using future state in traj. as high-actor goal \\
    \texttt{high\_actor\_p\_randomgoal} & 0.0 & prob. of using random state as high-actor goal \\
    \texttt{actor\_geom\_sample} & False & use geometric sampling for actor goals \\
    \midrule
    \texttt{normalize\_latent} & True & normalize backward representations \\
    \texttt{orthonorm\_coeff} & 1.0 & orthonormalization coefficient for B-representations \\
    \texttt{margin\_latent\_mix\_prob} & 0.5 & prob. of sampling unif. random latent for margin. \\
    \texttt{critic\_latent\_mix\_prob} & 0.5 & prob. of sampling unif. random critic latent \\
    \texttt{high\_alpha} & $1 \times 10^{-3}$ & AWR temperature for high-level policy \\
    \texttt{high\_adv\_threshold} & 5.0 & threshold for adv. clipping before exponentiation\\
    \texttt{num\_actions\_sampled} & 8 & number of sampled actions for marginalization \\
    \bottomrule\\
    \end{tabular}
    \caption{Hyperparameters of the FB $+\pi^h$ algorithm. Abbreviations: prob. = probability, traj. = trajectory, std. = standard deviation, unif. = uniformly, margin. = marginalization.}
    \vspace{-25pt}
    \label{tab:hyperparameters_hfb}
\end{table}

\subsection*{C.5 Evaluation protocol}
For each agent and environment, we evaluate performance over 50 episodes per task, across 5 random seeds and 5 tasks. At evaluation time, we use a deterministic policy. For goal-reaching evaluation, goals are drawn from a fixed evaluation set shared across all methods.

For each task, we precompute a fixed set of 100,000 transitions, which are relabeled according to the corresponding reward function. The agents then use these relabeled state–reward pairs to infer the latent representations. We set the reward temperature to $T=0$, since nonzero temperatures would otherwise distort the treatment of negative rewards, as negative rewards are effectively mapped to zero under the temperature-based transformation.

We remove goal noise at environment reset, since latents are computed only once and are not updated upon resets. In contrast, standard GCRL methods (e.g., HIQL) condition directly on the true (noisy) goal at each reset, which can give them an advantage over methods that operate in a fixed latent representation. This design choice ensures a fair comparison by providing access to the same goal information across methods, putting goal- and latent-conditioned approaches on equal footing.

\textbf{Performance in goal-reaching tasks.} As is standard in goal-conditioned reinforcement learning, we evaluate performance using average success rate over a set of test goals. A trajectory is considered successful if the agent reaches the goal before episode termination, where success is defined as the agent entering a 0.5-radius ball in Euclidean distance in the (x, y) plane around the goal location (for AntMaze environments). We compute success as a binary indicator per episode and report the mean success rate across all evaluated goals. Final results are averaged over multiple random seeds, and we report standard deviation across seeds to quantify variability.

\textbf{Aggregation over tasks with general rewards.} We report performance using the interquartile mean (IQM), following recommendations in reinforcement learning evaluation \citep{agarwal2021deep}. Specifically, we first compute normalized returns for each task by linearly scaling (undiscounted) episode returns using the minimum and maximum returns observed across all methods and seeds within that task, ensuring comparability across tasks with different reward scales. This normalization is performed per task before any aggregation. We then compute the interquartile mean over the pooled set of normalized returns from all tasks and seeds. In our setting, we evaluate on 5 tasks, so the IQM is computed over the combined distribution of normalized returns across these tasks.

To quantify uncertainty, we compute 95\% confidence intervals using a stratified bootstrap procedure: we resample seeds within each task independently, preserving the task structure, and then recompute the IQM over the resampled data. Repeating this procedure yields a bootstrap distribution of IQM estimates, from which we extract the 2.5th and 97.5th percentiles as the confidence interval bounds. This ensures that uncertainty reflects both variability across seeds and heterogeneity across tasks.

\subsection*{C.6 Compute details} 
All experiments are run on NVIDIA A40 GPUs, with independent runs executed in parallel across multiple devices. We use JAX-based implementations for all models and optimization. Each model is trained for up to 1M gradient steps, taking at most 2–3 hours per run depending on the method. Evaluation is performed using 5 random seeds per method, with all seeds executed independently in parallel. We use fixed hyperparameters across runs to ensure reproducibility.

%% file: sections/11_appD.tex
\section*{D Further experimental results}

\subsection*{D.1 Discrete maze environment}
In this section, we provide additional results for the discrete maze experiments introduced in Section~\ref{sec:experiments}. We begin by recalling the switching advantage defined in Equation~\eqref{eq:def_A_FB}. One of its components, corresponding to the contribution before reaching an intermediate subgoal $w$, is visualized in Figure~\ref{fig:bigfigure_learning_based}(d) and denoted $A_{\mathrm{FB}}(s,w,r) \vert_{< w}$. This term can be written as:
\begin{align*}
    A_{\mathrm{FB}}(s,w,r) \vert_{< w} = F(s,z_w)^\top z_r -  \frac{F(s,z_w)^\top z_w }{F(w,z_w)^\top z_w} F(w,z_w)^\top z_r
\end{align*}
The second component can be interpreted as the expected return obtained by following the policy associated with $z_w$ starting from state $w$. When the subgoal policy is sufficiently strong, it tends to remain near $w$, so this term effectively captures the return of staying in the subgoal region. As a result, it becomes highly concentrated on states with nonzero reward, making it difficult to estimate reliably with function approximation.

To mitigate this issue, we introduce a simplified proxy for the switching advantage:
\begin{align*}
    \widehat{A}_{\mathrm{FB}} (s,w,z) := F(s,z_w)^\top z +  \frac{F(s,z_w)^\top z_w }{F(w,z_w)^\top z_w}  F(w, z)^\top z - F(s,z)^\top z
\end{align*}

\begin{figure}[!b]
    \centering
    \begin{subfigure}[t]{0.32\textwidth}
        \centering
        \includegraphics[width=\linewidth]{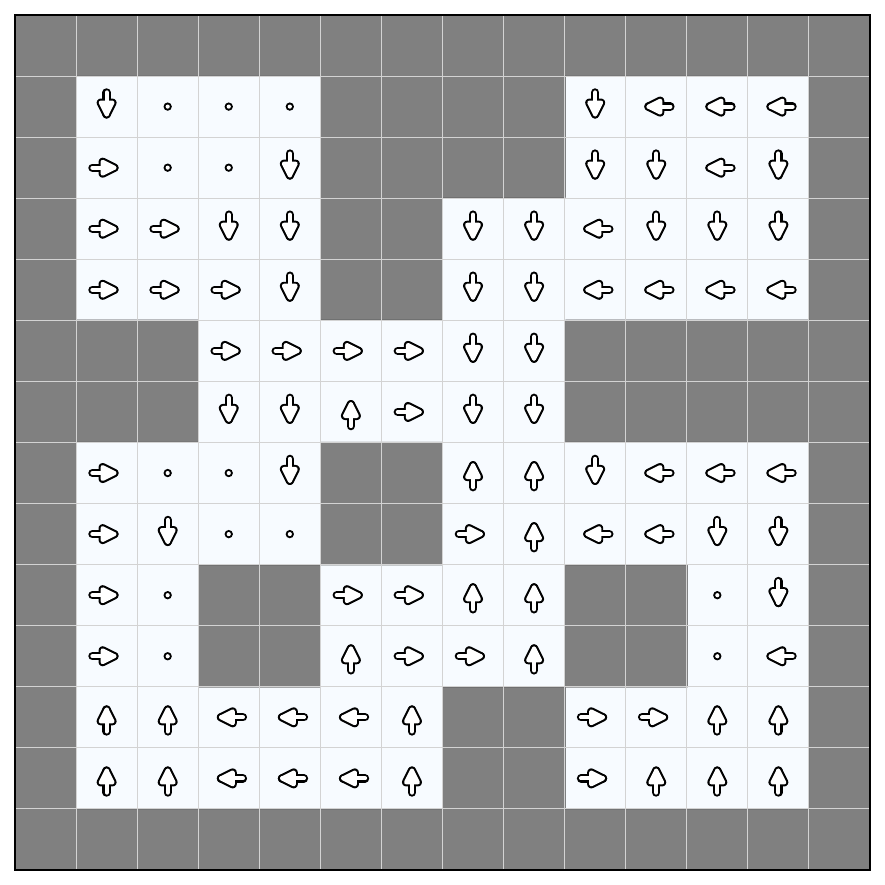}
        \caption{Actions induced by $A_{\mathrm{FB}}$}
    \end{subfigure}
    \begin{subfigure}[t]{0.32\textwidth}
        \centering
        \includegraphics[width=\linewidth]{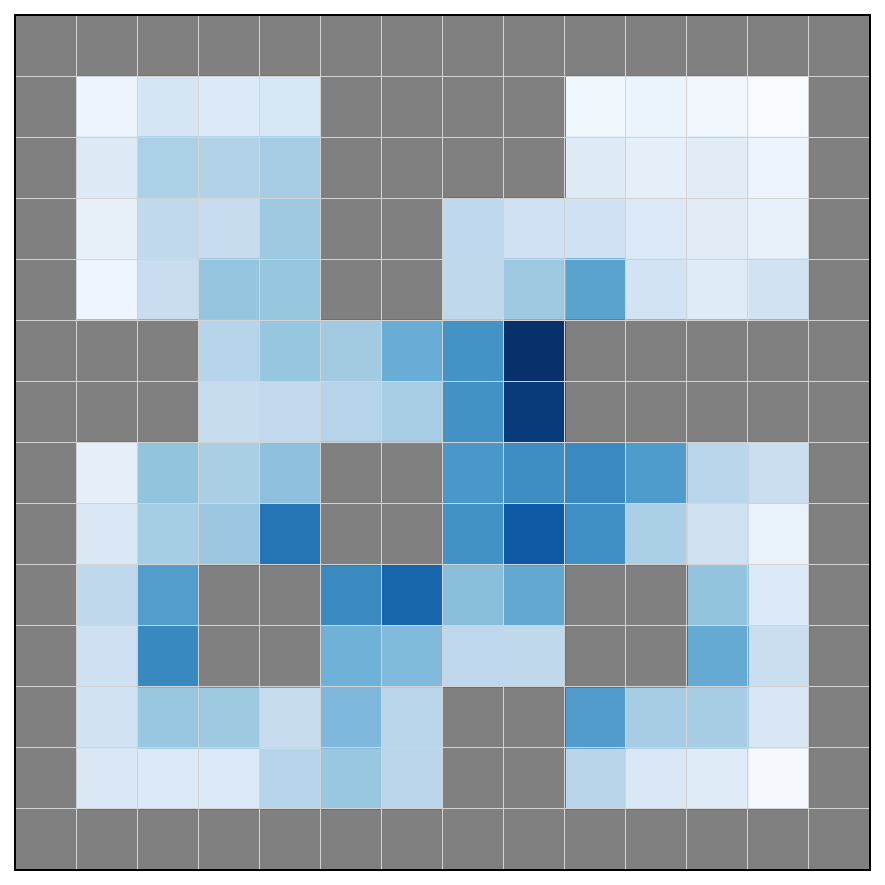}
        \caption{Subgoal distribution under $A_{\mathrm{FB}}$.}
    \end{subfigure}
    \begin{subfigure}[t]{0.32\textwidth}
        \centering
        \includegraphics[width=\linewidth]{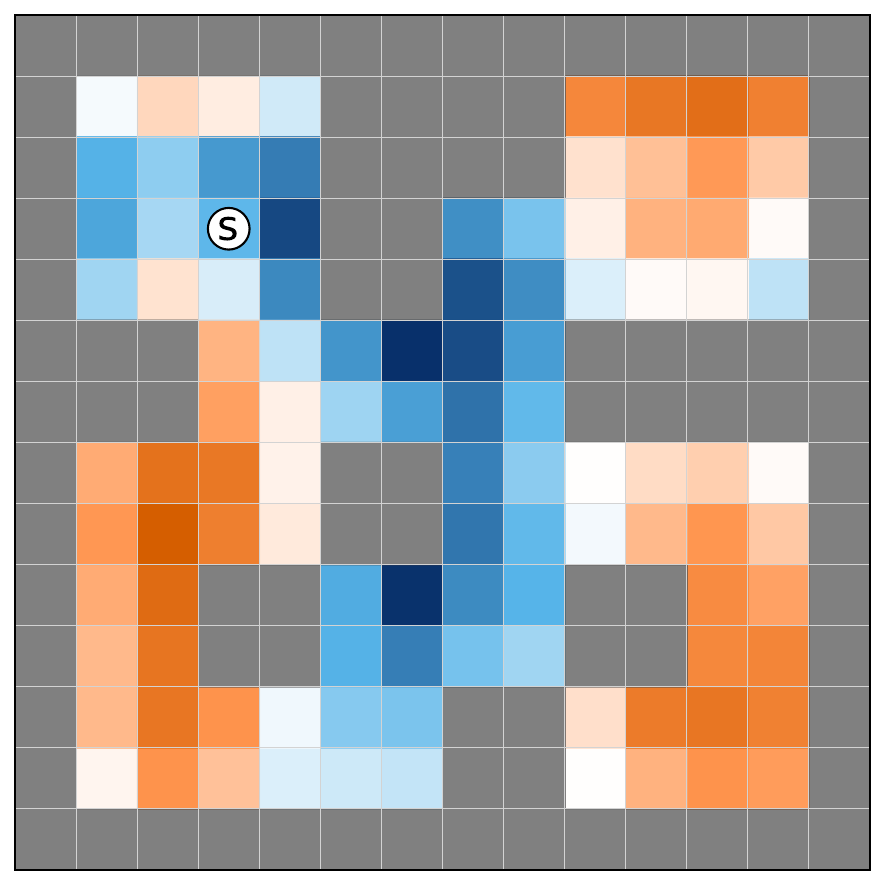}
        \caption{Switching advantage $\widehat{A}_{\mathrm{FB}}(s, \textcolor{magenta}{w}, r)$.}
    \end{subfigure}

    \vspace{0.2cm}

    \begin{subfigure}[t]{0.32\textwidth}
        \centering
        \includegraphics[width=\linewidth]{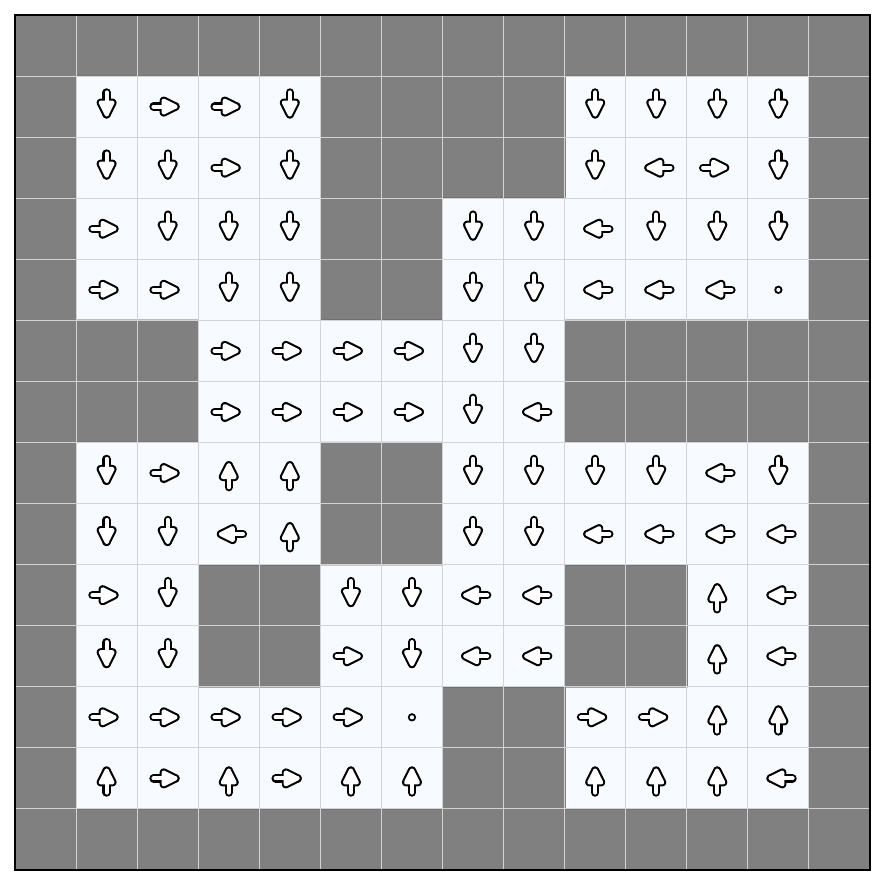}
        \caption{Actions induced by $\widehat{A}_{\mathrm{FB}}$ }
    \end{subfigure}
    \begin{subfigure}[t]{0.32\textwidth}
        \centering
        \includegraphics[width=\linewidth]{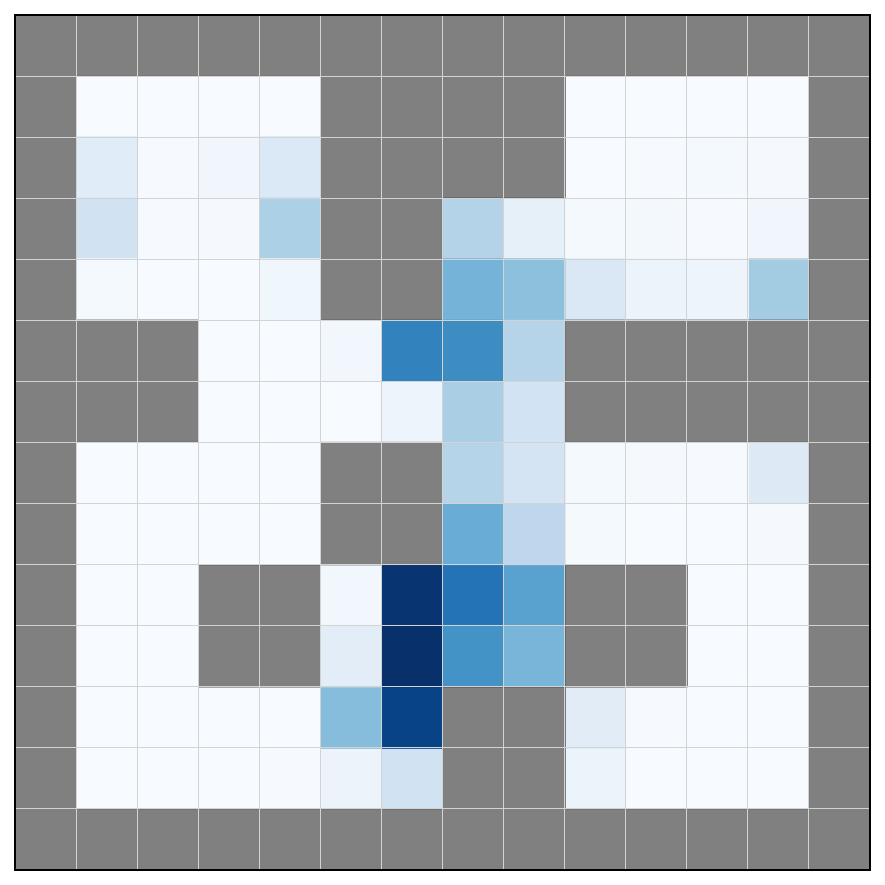}
        \caption{Subgoal distribution under $\widehat{A}_{\mathrm{FB}}$.}
    \end{subfigure}
    \begin{subfigure}[t]{0.32\textwidth}
        \centering
        \includegraphics[width=1.155\linewidth]{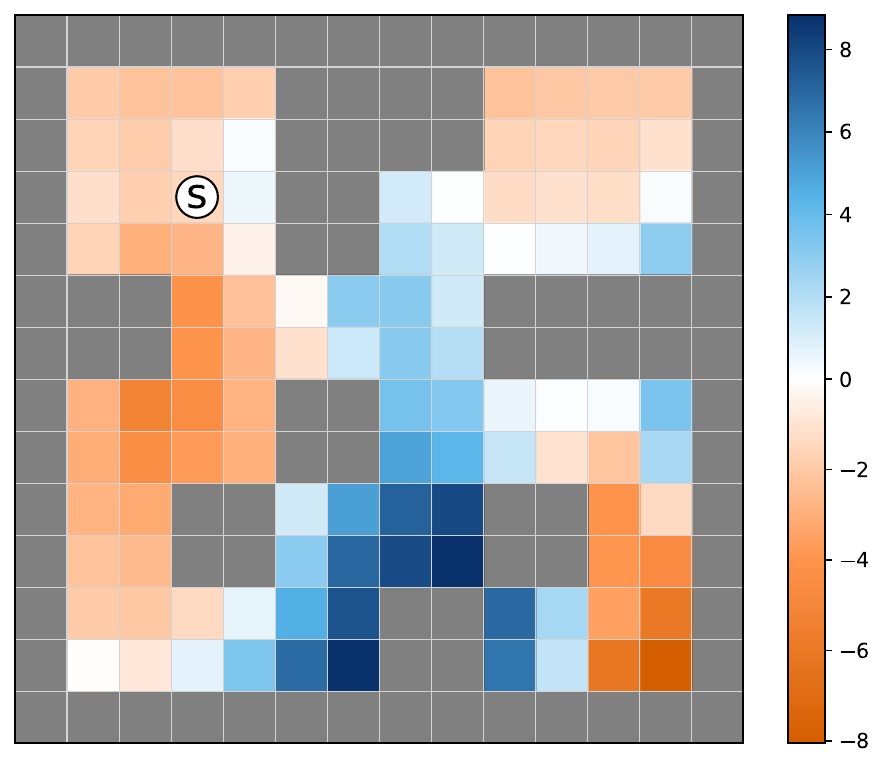}
        \caption{Difference $(\widehat{A}_{\mathrm{FB}} - A_{\mathrm{FB}})(s, \textcolor{magenta}{w}, r)$.}
    \end{subfigure}

    \caption{Effect of simplifying the switching advantage: $\widehat{A}_{\mathrm{FB}}$ (c) improves action selection (a,d) and produces more structured subgoal distributions (b,e) compared to $A_{\mathrm{FB}}$.}
    \label{fig:bigfigure_supp}
    \vspace{-0.3cm}
\end{figure}

Figure \ref{fig:bigfigure_supp} compares the original switching advantage $A_{\mathrm{FB}}$ with the proxy $\widehat{A}_{\mathrm{FB}}$. 

In Figures (a) and (d), actions are extracted by selecting, for each state, the subgoal that maximizes the corresponding advantage and then mapping it to the action that moves the agent closer to that subgoal. We observe that actions extracted from $\widehat{A}_{\mathrm{FB}}$ are more effective.

The middle column shows the induced subgoal distributions. Under $A_{\mathrm{FB}}(s,w,r)$, subgoals are diffuse and often assigned to irrelevant regions of the state space. In contrast, $\widehat{A}_{\mathrm{FB}}$ concentrates subgoals along paths leading to reward.

Finally, Figure (c) visualizes the proxy $\widehat{A}_{\mathrm{FB}}(s,w,r)$, which can be compared to $A_{\mathrm{FB}}(s,w,r)$  in Figure \ref{fig:bigfigure_learning_based}(c), while (f) shows their difference. Overall, the proxy appears to improve the alignment between subgoals and optimal behavior, while potentially simplifying learning.

\subsection*{D.2 Goal-reaching experiments}
This section provides additional results for the continuous control for goal-reaching tasks described in Section \ref{sec:experiments}. We first report the full version of Table \ref{tab:results_antmaze}, including results from the HIQL ablation study referenced in the main text.

For clarity, we briefly summarize the variants. FB refers to the standard FB algorithm. FB $+\pi^h$ augments this model with a learned high-level policy, as described in Section \ref{subsec:post_tuning_FB} and Appendix C.3. Vanilla FB $\pi$-Switch denotes our proposed method, while FB $\pi$-Switch $-\pi^h$ corresponds to its non-hierarchical version. Similarly, HIQL $-\pi^h$ removes the high-level policy from HIQL. Subsequent rows introduce additional modifications in an additive manner: adding a bilinear representation into value function, and then using intention-based value learning, where reward and policy are not necessarily aligned.

The full results are shown in Table \ref{tab:full_appendix_results_pointmaze}.

\begin{table}[ht]
    \centering
    \begin{tabular}{llcccc}
    \toprule
    \textbf{Algorithm} & \textbf{Variant}
    & \textbf{Medium} 
    & \textbf{Large} 
    & \textbf{Giant} 
    & \textbf{Teleport} \\
    \midrule

    \multirow{4}{*}{HIQL}
    & vanilla 
    & {93 \scriptsize $\pm 1$} 
    & {73 \scriptsize $\pm 6$} 
    & {5 \scriptsize $\pm 3$} 
    & {42 \scriptsize $\pm 4$} \\

    & \ \ \ \  $-\pi^h$   
    & {73 \scriptsize $\pm 3$} 
    & {23 \scriptsize $\pm 4$} 
    & {0 \scriptsize $\pm 0$} 
    & {32 \scriptsize $\pm 6$} \\

    &  \ \ \ \   \ \ \ \  $+ $ bilinear
    & {45 \scriptsize $\pm 6$} 
    & {20 \scriptsize $\pm 3$} 
    & {0 \scriptsize $\pm 0$} 
    & {14 \scriptsize $\pm 6$} \\

    &  \ \ \ \   \ \ \ \   \ \ \ \   $+$ intentions 
    & {66 \scriptsize $\pm 6$} 
    & {20 \scriptsize $\pm 4$} 
    & {1 \scriptsize $\pm 1$} 
    & {11 \scriptsize $\pm 3$} \\

    \midrule

    ICVF 
    &  
    &  {46 \scriptsize $\pm 37$} &  {28 \scriptsize $\pm 3$} &  {0 \scriptsize $\pm 0$} &  {23 \scriptsize $\pm 6$} \\

    \midrule

    One-Step FB
    &  
    &  {35 \scriptsize $\pm 7$} &  {23 \scriptsize $\pm 18$} &  {0 \scriptsize $\pm 0$} &  {8 \scriptsize $\pm 2$} \\
    
    \midrule

    \multirow{2}{*}{FB}
    & vanilla 
    & {47 \scriptsize $\pm 1$} 
    & {21 \scriptsize $\pm 5$} 
    & {0 \scriptsize $\pm 0$} 
    & {25 \scriptsize $\pm 7$} \\

    & \ \ \ \  $+\pi^h$ 
    & {50 \scriptsize $\pm 12$} 
    & {20 \scriptsize $\pm 5$} 
    & {0 \scriptsize $\pm 0$} 
    & {13 \scriptsize $\pm 7$} \\

    \midrule

    \multirow{2}{*}{FB $\pi$- Switch}
    & vanilla 
    & { 87\scriptsize $\pm 6$} 
    & { 66 \scriptsize $\pm 9$} 
    & { 1 \scriptsize $\pm 1$} 
    & { 40 \scriptsize $\pm 6$} \\

    & \ \ \ \  $-\pi^h$ 
    & {70 \scriptsize $\pm 4$} 
    & {37 \scriptsize $\pm 7$} 
    & {1 \scriptsize $\pm 1$} 
    & {29 \scriptsize $\pm 4$} \\
    \bottomrule\\
    \end{tabular}
    \caption{Average (binary) success rate ($\%$) across the five test-time goals with \texttt{navigate-v0} datasets from \cite{park2024ogbench}. The results are averaged over 50 episodes and 5 seeds, and we report the standard deviations after the $\pm$ sign. Note that operations in column Variant are additive - for example, entries in row +intentions correspond to HIQL algorithm without high-level policy, with bilinear critic and with intentions.}
    \label{tab:full_appendix_results_pointmaze}
    \vspace{-10pt}
\end{table}

A useful way to interpret these results is to note that several HIQL variants progressively move closer to the non-hierarchical version of FB $\pi$-Switch in terms of architectural choices. The remaining differences are primarily due to representation design: HIQL still operates partially in the state space and enforces strict unit normalization of goal representations, whereas FB $\pi$-Switch operates fully in latent space and uses the orthonormalization objective from the FB framework. Under these more aligned settings, FB $\pi$-Switch tends to achieve stronger performance than the corresponding HIQL-based variants.

The effect of hierarchy is also consistent across methods. Adding a high-level policy leads to clear performance improvements in both HIQL and FB $\pi$-Switch. That said, some differences remain. For example, the non-hierarchical HIQL variant degrades more noticeably on the Large maze, while HIQL slightly outperforms FB $\pi$-Switch on average in the same environment. One possible explanation is that success rates above roughly 70\% on the Large AntMaze tasks are difficult to exceed, creating a performance ceiling that affects both approaches.

Figure \ref{fig:traj_all_methods_GCRL} shows how high-level subgoal selection evolves during an episode for HIQL, FB $\pi$-Switch, and FB $+\pi^h$, highlighting differences in the learned high-level policy mentioned around Figure \ref{fig:hierarchies_qualitative_difference} in Section \ref{sec:experiments}. For a fixed goal state $g$, we compute $z_g = B(g)$, and for all discretized states $x$ in the maze we compute $z_x=B(x)$. At each time step, we sample $z\sim \pi^h(\cdot \vert s,z_g)$, and visualize $\langle z_x, z \rangle $ at the corresponding spatial location $x$.

Figures \ref{fig:zero_shot_methods_GCRL_value_fcns_appendix} and \ref{fig:HIQL_variants_GCRL_value_fcns_appendix} further visualize the learned successor measures for all models reported in Table \ref{tab:full_appendix_results_pointmaze}.

\newpage




\begin{figure}[ht]
    \centering

    \begin{subfigure}[t]{\linewidth}
        \centering
        \includegraphics[width=\linewidth]{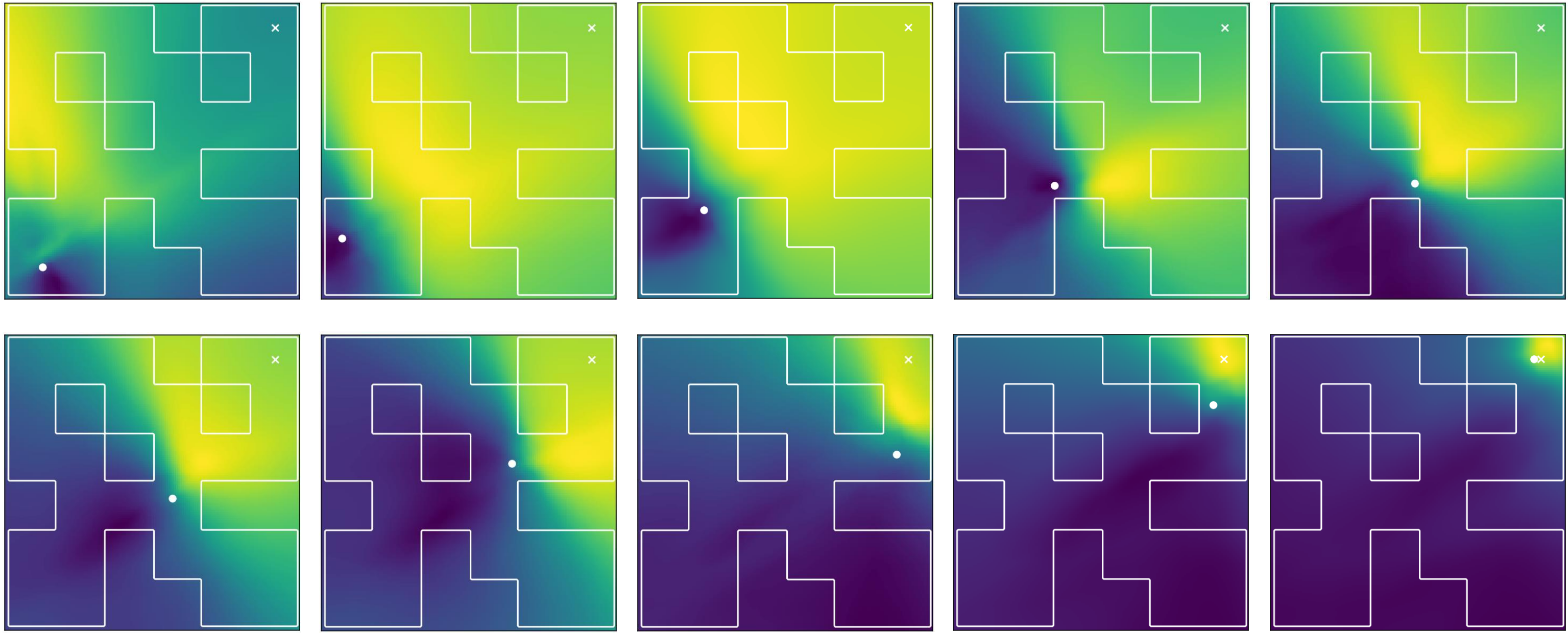}
        \caption{HIQL}
        \label{fig:traj_HIQL_GCRL_sub}
    \end{subfigure}

    \vspace{0.2cm}

    \begin{subfigure}[t]{\linewidth}
        \centering
        \includegraphics[width=\linewidth]{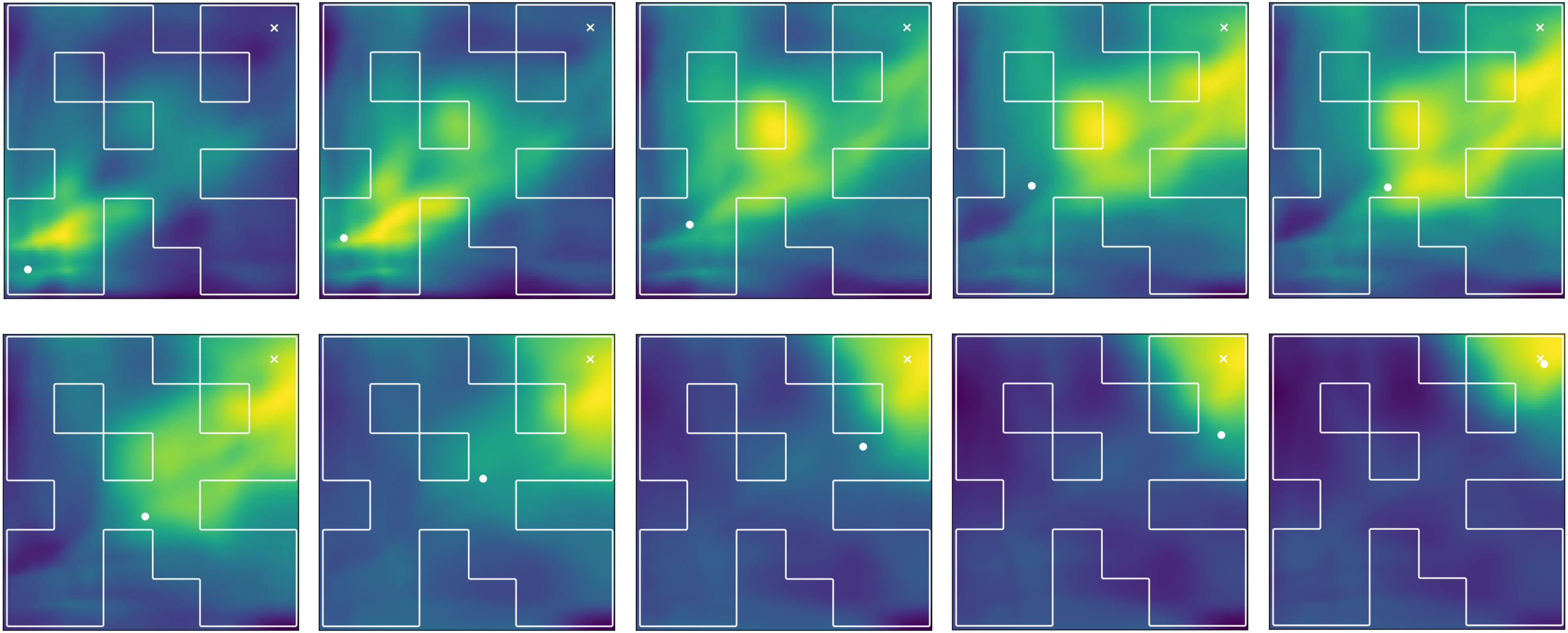}
        \caption{FB $\pi$-Switch}
        \label{fig:traj_HilSwitch_GCRL_sub}
    \end{subfigure}

    \vspace{0.2cm}

    \begin{subfigure}[t]{\linewidth}
        \centering
        \includegraphics[width=\linewidth]{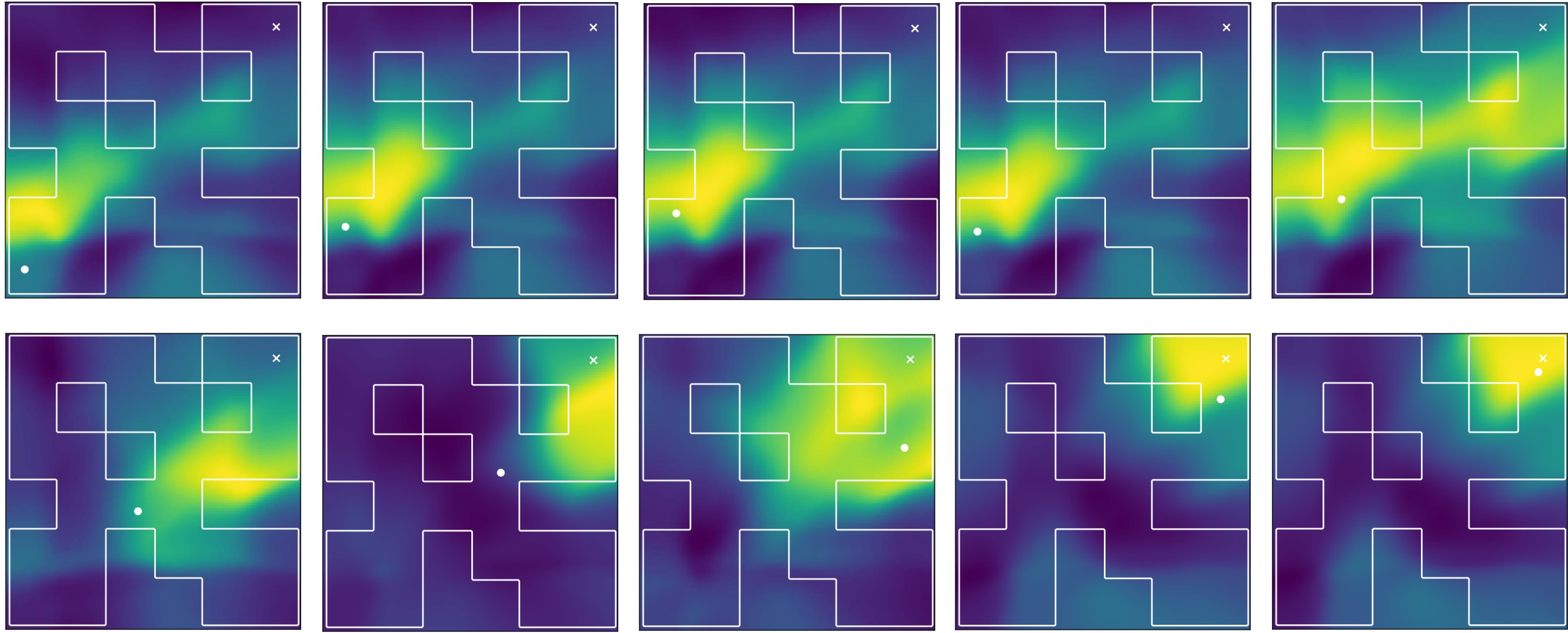}
        \caption{FB $+\pi^h$}
        \label{fig:traj_FB_pih_GCRL_sub}
    \end{subfigure}

    \caption{Evolution of high-level policies over a single trajectory for different methods in Medium-Antmaze environment. Within each trajectory, visualizations are temporally equally spaced. The dot indicates the agent’s current state, while the cross denotes the goal.}
    \label{fig:traj_all_methods_GCRL}
\end{figure}

\clearpage
\newpage

\begin{figure}[t]
    \centering
    \begin{subfigure}[t]{\linewidth}
        \centering
        \includegraphics[width=\linewidth]{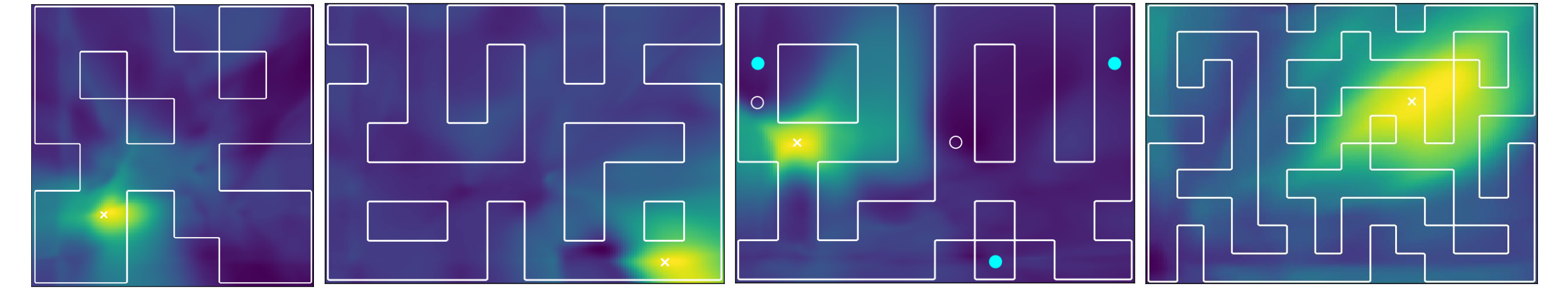}
        \caption{ICVF: state-value successor measures $M^{\pi_g}_{ \textcolor{magenta}{s}}(g)$}
    \end{subfigure}

    \vspace{4pt}

    \begin{subfigure}[t]{\linewidth}
        \centering
        \includegraphics[width=\linewidth]{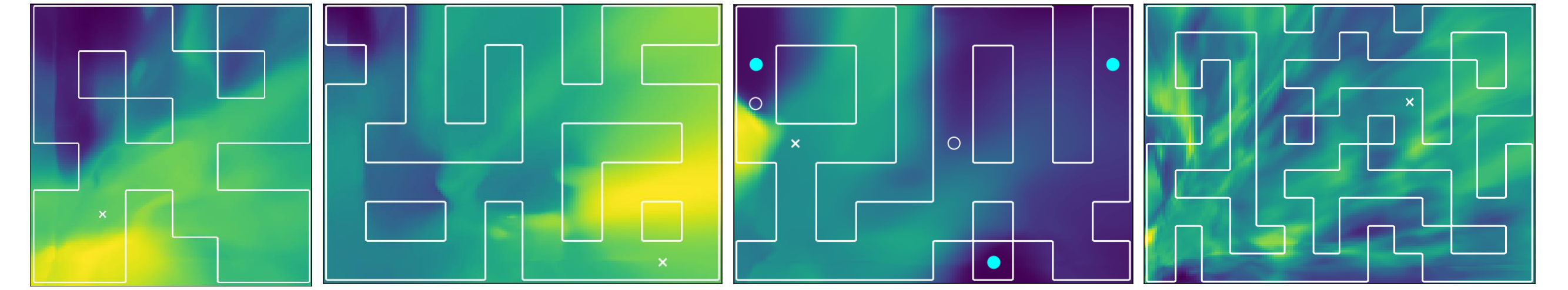}
        \caption{One-Step FB: state-value successor measures $M^{\pi_g}_{ \textcolor{magenta}{s}}(g)$}
    \end{subfigure}

    \vspace{4pt}

    \begin{subfigure}[t]{\linewidth}
        \centering
        \includegraphics[width=\linewidth]{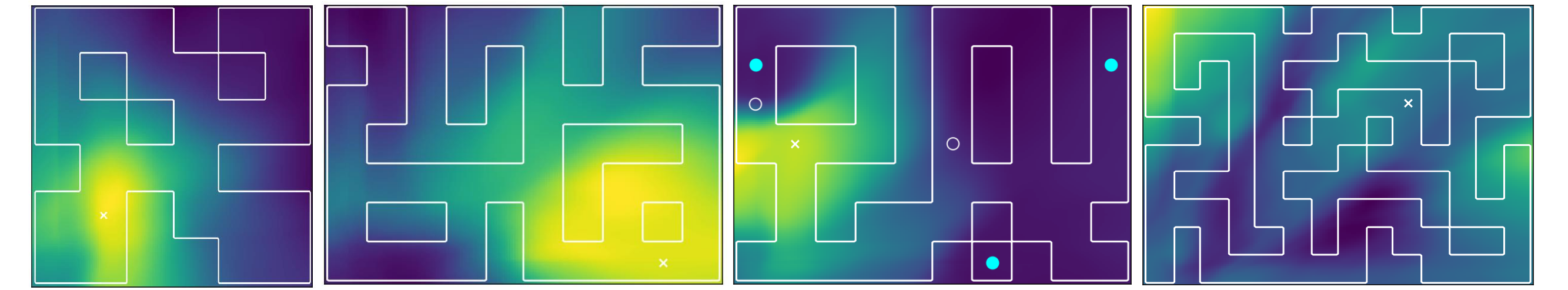}
        \caption{FB: state-value successor measures $M^{\pi_g}_{ \textcolor{magenta}{s}}(g)$}
    \end{subfigure}

    \vspace{4pt}

    \begin{subfigure}[t]{\linewidth}
        \centering
        \includegraphics[width=\linewidth]{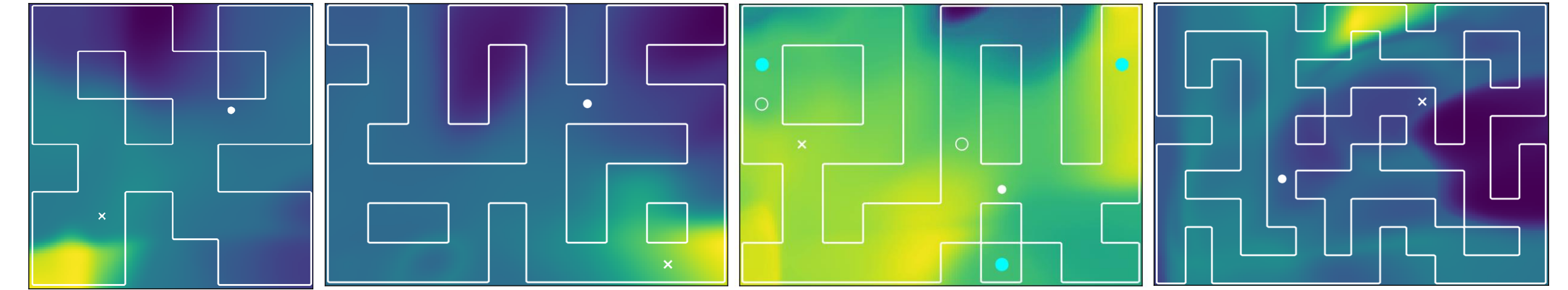}
        \caption{FB: intention-conditioned state-value successor measures $M^{\pi_{ \textcolor{magenta}{w}}}_s(g)$}
    \end{subfigure}

        \vspace{4pt}

    \begin{subfigure}[t]{\linewidth}
        \centering
        \includegraphics[width=\linewidth]{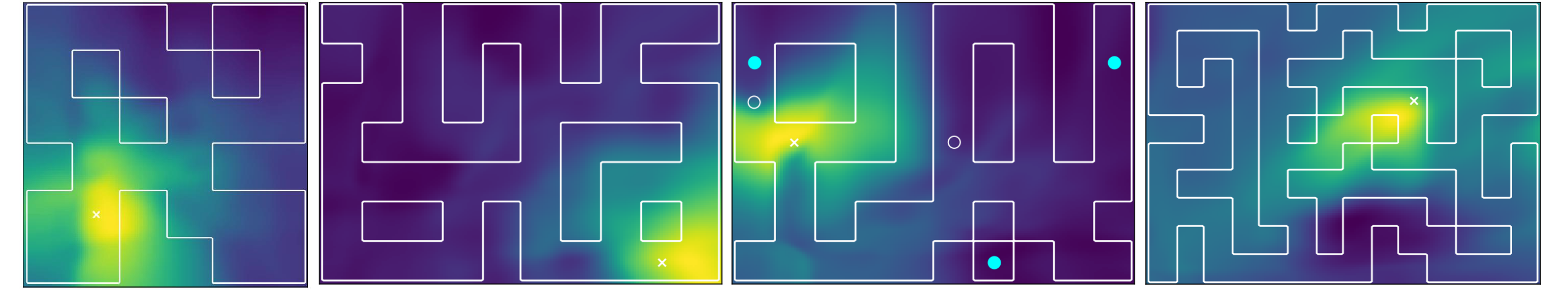}
        \caption{FB $\pi$-Switch: state-value successor measures $M^{\pi_g}_{ \textcolor{magenta}{s}}(g)$}
    \end{subfigure}

        \vspace{4pt}

    \begin{subfigure}[t]{\linewidth}
        \centering
        \includegraphics[width=\linewidth]{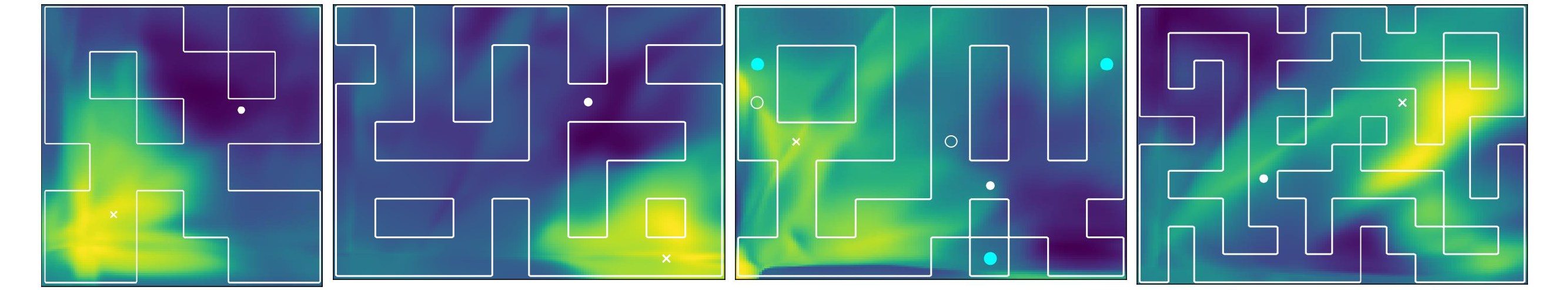}
        \caption{FB $\pi$-Switch: intention-conditioned state-value successor measures $M^{\pi_{ \textcolor{magenta}{w}}}_s(g)$}
    \end{subfigure}

    \caption{We report results for a single trained model per method (one seed), without aggregation over multiple seeds. All quantities are plotted as functions of the \textcolor{magenta}{magenta} variables. Fixed states $s$ are indicated by dots, while goals $g$ are indicated by crosses. Columns correspond to the Medium, Large, Teleport, and Giant AntMaze environments.}
    \label{fig:zero_shot_methods_GCRL_value_fcns_appendix}
\end{figure}

\begin{figure}[t]
    \centering
    \begin{subfigure}[t]{\linewidth}
        \centering
        \includegraphics[width=\linewidth]{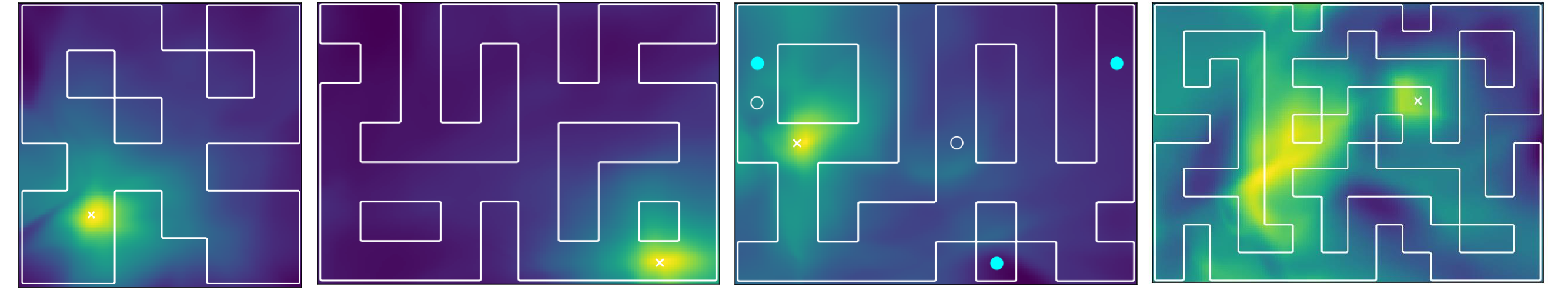}
        \caption{HIQL: state-value successor measures $M^{\pi_g}_{ \textcolor{magenta}{s}}(g)$}
    \end{subfigure}

    \vspace{4pt}

    \begin{subfigure}[t]{\linewidth}
        \centering
        \includegraphics[width=\linewidth]{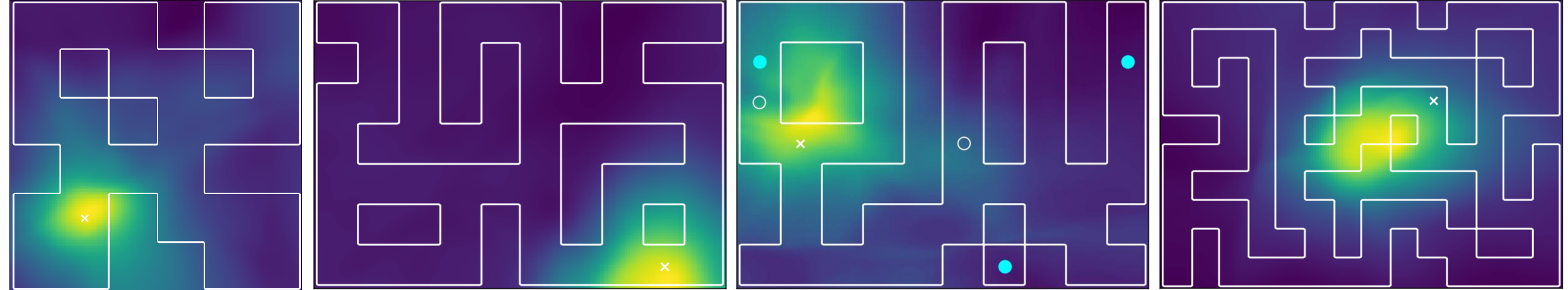}
        \caption{HIQL + bilinear: state-value successor measures $M^{\pi_g}_{ \textcolor{magenta}{s}}(g)$}
    \end{subfigure}

    \vspace{4pt}

    \begin{subfigure}[t]{\linewidth}
        \centering
        \includegraphics[width=\linewidth]{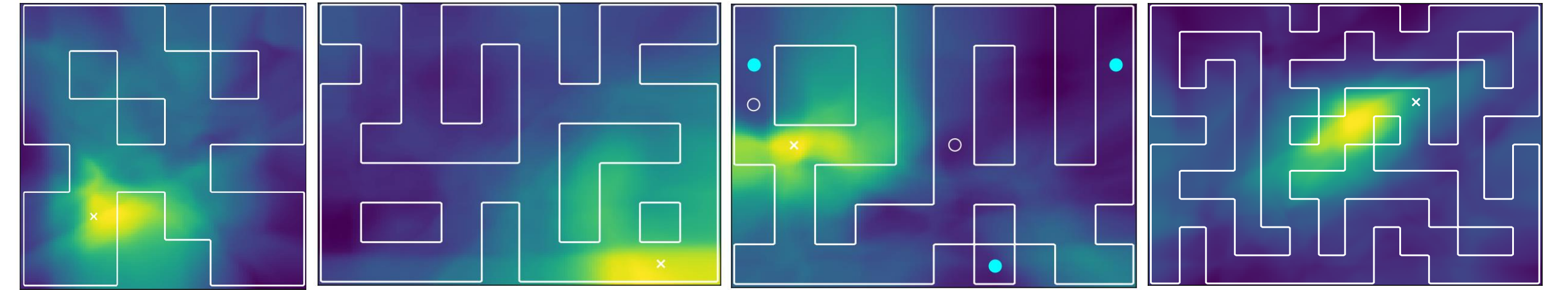}
        \caption{HIQL + bilinear + intentions: state-value successor measures $M^{\pi_g}_{ \textcolor{magenta}{s}}(g)$}
    \end{subfigure}

    \vspace{4pt}

    \begin{subfigure}[t]{\linewidth}
        \centering
        \includegraphics[width=\linewidth]{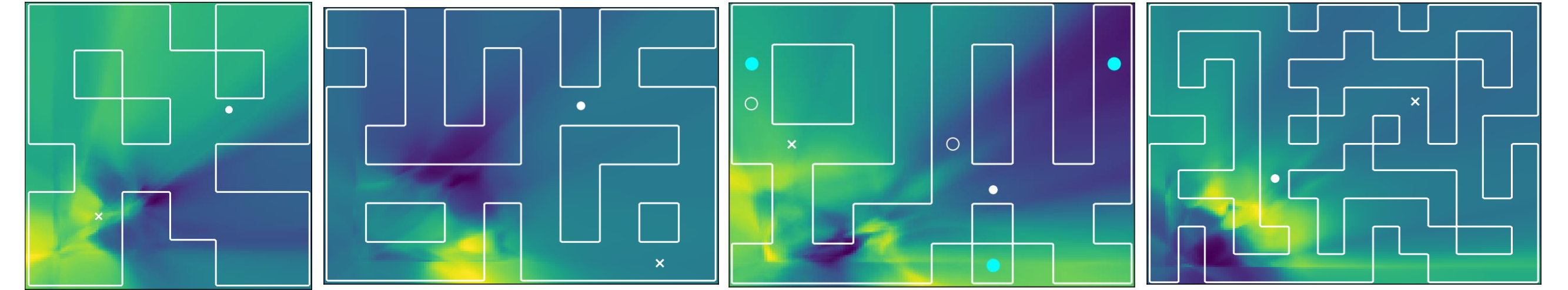}
        \caption{HIQL + bilinear + intentions: intention-conditioned state-value successor measures $M^{\pi_{ \textcolor{magenta}{w}}}_s(g)$}
    \end{subfigure}

    \caption{We report results for a single trained model per method (one seed), without aggregation over multiple seeds. All quantities are plotted as functions of the \textcolor{magenta}{magenta} variables. Fixed states $s$ are indicated by dots, while goals $g$ are indicated by crosses. Columns correspond to the Medium, Large, Teleport, and Giant AntMaze environments.}
    \label{fig:HIQL_variants_GCRL_value_fcns_appendix}
\end{figure}

\clearpage
\newpage
\subsection*{D.3 General rewards experiments.}
This section provides additional results for the continuous control tasks with general reward functions described in Section \ref{sec:experiments}. We first report the full results underlying Figure \ref{fig:regions_main}. For each of the five tasks (shown in Figure \ref{fig:rewards_antmaze}), we report the average undiscounted episodic return over episodes of length $1000$. 

It is important to emphasize that the tasks differ substantially in their reward structure. For example, task 3 in both environments contains no negative rewards, whereas tasks 5 include large negative regions combined sparse positive rewards. As a result, raw returns are not directly comparable across tasks. Instead, we normalize performance per task and aggregate using the interquartile mean (IQM), as described in Appendix C.5.  

The results are reported in Table \ref{tab:results_pointmaze}.

\begin{table}[ht]
    \centering
    \small
    \begin{tabular}{p{1.2cm}lrrrrrr}
    \toprule
    \textbf{Env.} & \textbf{Task} 
    & \textbf{ICVF} 
    & \textbf{One-Step FB} 
    & \multicolumn{2}{c}{\textbf{FB}} 
    & \multicolumn{2}{c}{\textbf{FB $\pi$-Switch}} \\
    \cmidrule(lr){5-6} \cmidrule(lr){7-8}
    & 
    & 
    & 
    & {vanilla} & {$+\pi^h$} 
    & {vanilla} & {$-\pi^h$} \\
    \midrule

    \multirow{6}{*}{\begin{tabular}{l} antmaze \\ large \end{tabular}}
    & task 1 
    & 282 \scriptsize $\pm 43 $
    & 207 \scriptsize $\pm 56$
    & 938 \scriptsize $\pm 308 $
    & 523 \scriptsize $\pm 95 $
    & 756 \scriptsize $\pm 185 $
    & 451 \scriptsize $\pm 99 $
    \\
    & task 2 
    & 360 \scriptsize $\pm 179 $
    & -171 \scriptsize $\pm 34 $
    & 4 \scriptsize $\pm 281 $
    & -45 \scriptsize $\pm 75$
    & 456 \scriptsize $\pm 305 $
    & 131 \scriptsize $\pm 107 $
    \\
    & task 3
    & 1183 \scriptsize $\pm 262 $
    & 192 \scriptsize $\pm 232$
    & 115 \scriptsize $\pm 107 $
    & 58 \scriptsize $\pm 75 $
    & 697 \scriptsize $\pm 354 $
    & 209 \scriptsize $\pm 133 $
    \\
    & task 4
    & 518 \scriptsize $\pm 206 $
    & 184 \scriptsize $\pm 352$
    & 906 \scriptsize $\pm 32 $
    & 916 \scriptsize $\pm 41 $
    & 664 \scriptsize $\pm 97 $
    & 602 \scriptsize $\pm 223 $
    \\
    & task 5
    & 109 \scriptsize $\pm 44 $
    & -79 \scriptsize $\pm 96 $
    & 36 \scriptsize $\pm 67 $
    & -5 \scriptsize $\pm 11 $
    & 479 \scriptsize $\pm 89$
    & 337\scriptsize $\pm 238 $
    \\
    & IQM score
    & 0.48				
    & 0.09
    & 0.37
    & 0.24	
    & 0.67
    & 0.35
    \\
    & LCI (95\%)
    & 0.42		
    & 0.03
    & 0.28	
    & 0.21	
    & 0.56
    & 0.30	
    \\
    & UCI (95\%)
    & 0.57				
    & 0.15	
    & 0.50
    & 0.27
    & 0.74
    & 0.43
    \\
    
    \midrule

    \multirow{6}{*}{\begin{tabular}{l} antmaze \\ giant \end{tabular}}
    & task 1 
    & -19 \scriptsize $\pm 28$
    & 36 \scriptsize $\pm 30$
    & 1 \scriptsize $\pm 55$
    & 2 \scriptsize $\pm 12$
    & 101 \scriptsize $\pm 87$
    & 125 \scriptsize $\pm 167$
    \\
    & task 2 
    & 266 \scriptsize $\pm 50$
    & 30 \scriptsize $\pm 30$
    & 158 \scriptsize $\pm 128$
    & 169 \scriptsize $\pm 117$
    & 292 \scriptsize $\pm 92$
    & 326 \scriptsize $\pm 60$
    \\
    & task 3 
    & 185 \scriptsize $\pm 76$
    & 173 \scriptsize $\pm 107$
    & 292 \scriptsize $\pm 175$
    & 292 \scriptsize $\pm 170$
    & 470 \scriptsize $\pm 268$
    & 437 \scriptsize $\pm 187$
    \\
    & task 4 
    & 192 \scriptsize $\pm 65$
    & -176 \scriptsize $\pm 152$
    & 138 \scriptsize $\pm 158$
    & -3 \scriptsize $\pm 134$
    & 321 \scriptsize $\pm 61$
    & 362 \scriptsize $\pm 99$
    \\
    & task 5 
    & -40 \scriptsize $\pm 80$
    & -292 \scriptsize $\pm 138$
    & -391 \scriptsize $\pm 283$
    & -369 \scriptsize $\pm 210$
    & 257 \scriptsize $\pm 305$
    & 42 \scriptsize $\pm 150$
    \\
    & IQM score
    & 0.43		
    & 0.21	
    & 0.33	
    & 0.30
    & 0.65
    & 0.63	
    \\
    & LCI (95\%)
    & 0.39		
    & 0.17	
    & 0.23	
    & 0.24
    & 0.54
    & 0.54	
    \\
    & UCI (95\%)
    & 0.47		
    & 0.28	
    & 0.43	
    & 0.37	
    & 0.74
    & 0.73
    \\
    \bottomrule\\
    \end{tabular}
    \caption{Averaged undiscounted returns across 50 evaluations and five test-time rewards from Figure \ref{fig:rewards_antmaze} averaged over 5 seeds. We report standard deviations after the $\pm$ sign. We also report interquartile mean scores (IQM) and lower and upper 95\% confidence intervals - see Appendix C.5 for more details on the evaluation protocol.}
    \label{tab:results_pointmaze}
\end{table}

Overall, the results suggest that while several methods achieve strong performance on individual tasks, FB $\pi$-Switch is more consistent across tasks. In Antmaze-Large, the lower 95\% confidence interval of FB $\pi$-Switch is close to or exceeds the upper confidence interval of the best performing agent. A similar trend is observed in Antmaze-Giant, where even the non-hierarchical variant of FB $\pi$-Switch remains competitive.   

To better understand these differences, we further decompose returns into two components: (i) return accumulated before reaching the highest-rewarding state for the first time, and (ii) return accumulated after reaching it. These results are shown in Figure \ref{fig:full_results_regions}. This decomposition allows us to assess whether agents follow reward-aligned paths and how efficiently they reach the highest-rewarding state.

For Antmaze-Large, FB $\pi$-Switch improves over its non-hierarchical variant primarily by reaching the highest-reward state faster, while maintaining similar reward along the way. An interesting observation in Antmaze-Giant is that the non-hierarchical FB $\pi$-Switch achieves higher average performance on task 1 by reaching the final state more quickly, but does so at the cost of traversing regions with negative reward more frequently. This highlights the importance of reward design when evaluating behavioral quality beyond aggregate return.

\begin{figure*}[t]
    \centering
    \includegraphics[width=\textwidth]{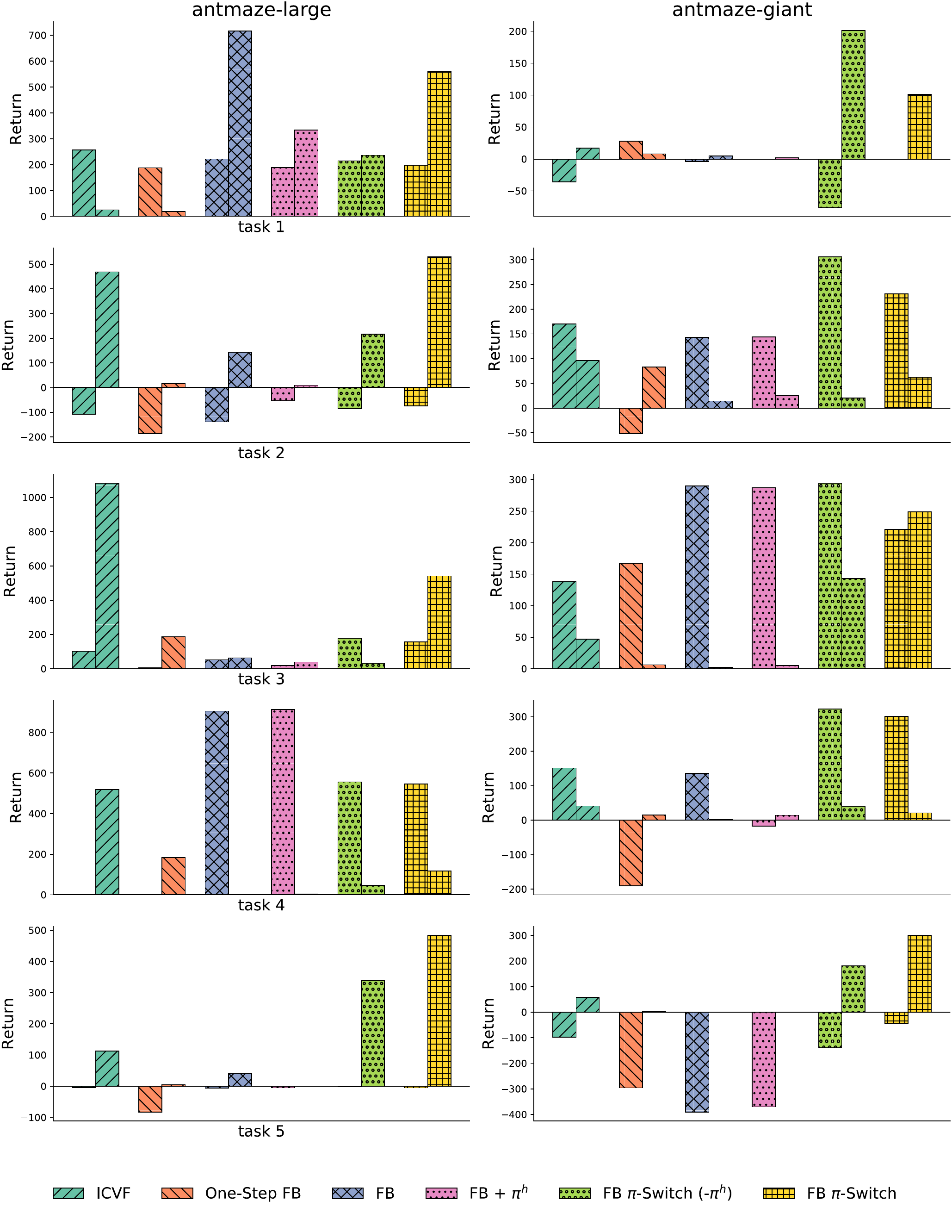}
    \caption{Average episodic return before (left bars) and after (right bars) reaching the highest-reward state for the first time. Results are shown separately for AntMaze-Large (left) and AntMaze-Giant (right).}
    \label{fig:full_results_regions}
\end{figure*}